\newcommandx{\JP}[2][1=]{%
  \todo[inline,
        linecolor=blue,
        backgroundcolor=blue!25,
        bordercolor=blue,
        #1,
        size=\small]{JP: #2}%
}
\newcommandx{\DJJ}[2][1=]{%
  \todo[inline,
        linecolor=red,
        backgroundcolor=red!25,
        bordercolor=red,
        #1,
        size=\small]{DJ: #2}%
}
\newcommandx{\MM}[2][1=]{%
  \todo[inline,
        linecolor=orange,
        backgroundcolor=orange!25,
        bordercolor=orange,
        #1,
        size=\small]{MM: #2}%
}
\newcommand{\ourmethod}{\textsc{KeyDiff}}
\theoremstyle{plain}
\newtheorem{theorem}{Theorem}[section]
\newtheorem{lemma}[theorem]{Lemma}
\theoremstyle{definition}
\theoremstyle{remark}
\title{\ourmethod{}: Key Similarity-Based KV Cache Eviction for Long-Context LLM Inference in Resource-Constrained Environments}
\author{
Junyoung Park\thanks{Corresponding author}
\quad Dalton Jones
\quad Matthew J Morse \\
\quad \textbf{Raghavv Goel}
\quad \textbf{Mingu Lee}
\quad \textbf{Chris Lott} \\
Qualcomm AI Research \\
\texttt{\{junpark,daltjone,mattmors,raghgoel,mingul,clott\}@qti.qualcomm.com}\\
}
\begin{document}

\maketitle

\begin{abstract}
We demonstrate that geometrically distinctive keys during LLM inference tend to have high attention scores. Based on the phenomenon we propose \ourmethod{}, a training-free KV cache eviction method based solely on key similarity. Unlike other KV cache eviction methods, \ourmethod{} can process arbitrarily long prompts within strict resource constraints and efficiently generate responses. We provide a theoretical basis for \ourmethod{} by relating key diversity with attention scores. These results imply \ourmethod{} can efficiently identify the most important tokens to retain. Notably \ourmethod{} does not rely on attention scores, allowing the use of optimized attention mechanisms like FlashAttention. Under a strict memory allowance, we demonstrate the effectiveness of \ourmethod{} for the Llama and Qwen model families by observing a performance gap of less than 0.04\% with 8K cache budget ($\sim$ 23\% KV cache reduction) from the non-evicting baseline on LongBench for Llama 3.1-8B and Llama 3.2-3B. We also observe near baseline performance for Deepseek-R1-Distill-Llama-8B on the Math500 reasoning benchmark and decrease end-to-end inference latency by up to 30\% compared to the other token-eviction methods.

\end{abstract}

\section{Introduction}
\label{sec:intro}

Key-Value (KV) caching is a standard technique to accelerate large language model (LLM) inference that reuses key and value states (KVs) from previously processed tokens, enabling efficient autoregressive generation. This is crucial for long-context applications such as document summarization, code generation, question answering \cite{brown2020language, raffel2020exploring, touvron2023llama, dubey2024llama}, retrieval augmented generation \cite{lewis2020retrieval} and reasoning \cite{wei2022chain, kojima2022large, yao2024tree}. However, the memory footprint of the stored KV cache grows linearly with input length, which becomes a bottleneck in memory-constrained environments.

This challenge is particularly acute for LLM inference on edge device, where compute, memory, and power resources are limited \cite{alizadeh2023llm, liu2024mobilellm, van2024gptvq, xu2024empowering}. While \textit{KV cache eviction} policies have been proposed to bound memory overhead by removing unimportant KVs (often measured by attention scores) \cite{xiaoefficient, oren2024transformers, zhang2024h2o}, they typically process the entire prompt at once and violate memory constraints during intermediate computation.

To enforce strict memory bounds throughout the prompt prefill and token generation inference phases, we adopt a block-wise inference strategy: the input prompt is divided into smaller blocks which are processed sequentially by the model, similar to \cite{kwon2023efficient, agrawal2023sarathi, xu2024think}. After processing each block, we evict some of the cached KVs by according to an eviction policy that scores each KV, as illustrated in \cref{fig:block-processing}. Unlike previous approaches that apply eviction after processing the entire prompt, this strategy satisfies memory constraints throughout the full inference process. However, we observe a degradation in accuracy when applying existing eviction methods in this setting (\cref{table:longbench_reduced}).

We hypothesize that the performance drop stems from a mismatch in design: existing eviction methods assume access to full-prompt attention, where key importance is computed over the entire input. During block prompt processing, however, attention is computed using only the current block’s tokens without access to future blocks. As a result, attention scores based on a limited context often fail to reflect a token’s true importance across the full prompt. 

To this end, we observe that keys with lower average pairwise cosine similarity tend to receive higher attention scores across a variety of inputs. This suggests that key diversity serves as a strong proxy for global token importance, even without access to future tokens. This insight enables an attention-free approach to cache eviction that is based on the geometry of the cached keys.

Motivated by these observations, we propose \ourmethod{}, an attention-free cache eviction method that removes redundancy among cached keys, operates effectively during block-wise inference, and avoids excessive memory overhead. Our contributions are summarized as follows:
\begin{itemize}[leftmargin=*]
    \item \textbf{Insight.} We observe that lower pairwise cosine similarity among keys correlates with higher attention scores, suggesting its utility as a proxy for token importance. (\cref{subsec:attention_and_dissimilarity})
    \item \textbf{Method.} We introduce \ourmethod{}, an eviction strategy that selects keys based on their similarity to other cached entries without relying on attention scores or future tokens. (\cref{subsec:keydiff})    
    \item \textbf{Theory.} Through our analysis of key and query geometry we provide a theoretical understanding how/why \ourmethod{} works. We also show that \ourmethod{} solves an optimal subset selection problem that maximizes key diversity. (\cref{subsec:keydiff-theory})


    \item \textbf{Performance.} \ourmethod{} achieves $\leq1.5\%$ and $\leq0.04\%$ accuracy drop on LongBench with 6K and 8K cache budgets, respectively, outperforming state-of-the-art eviction methods across Llama and Qwen models (\cref{subsec:longbench}), and near non-evicting baseline performance for Deepseek-R1-Distill-Llama-8B on the Math-500 reasoning benchmark. (\cref{subsec:reasoning})
    \item \textbf{Efficiency.} We observe up to $30\%$ end-to-end inference latency reduction using \ourmethod{} compared to existing KV cache eviction methods. (\cref{subsec:inf-latency})
\end{itemize}
\section{Background}

\begin{figure*}
    \centering
    \includegraphics[width=.85\linewidth]{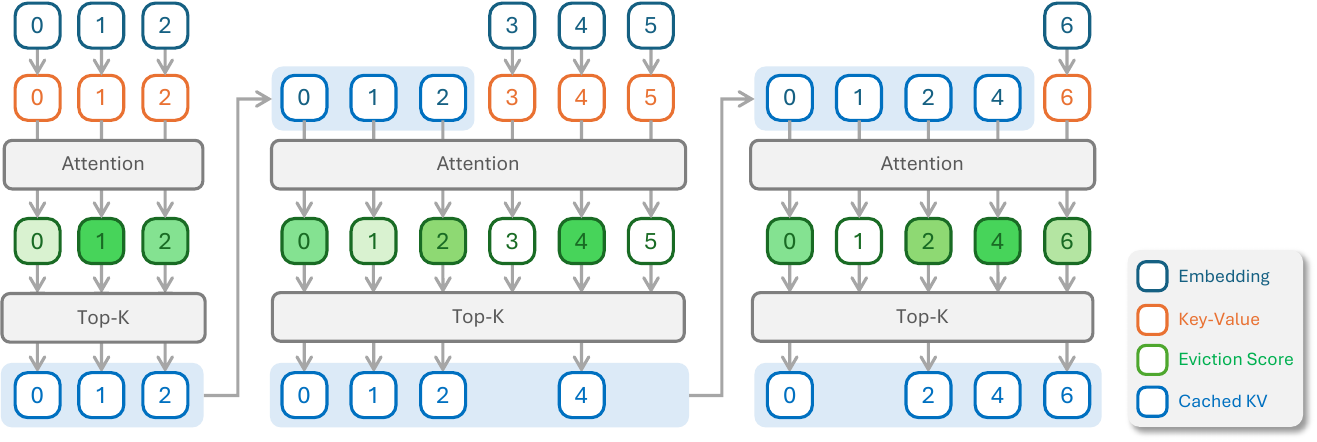}    
    \caption{\textbf{An example of block prompt processing with KV cache eviction.} The input prompt having length of 7 is segmented by three blocks, and a transformer layer in LLM processes each block by \textbf{(1)} computing key-value states from inputs, \textbf{(2)} computing attention, \textbf{(3)} computing the eviction score, and \textbf{(4)} performing eviction based on the eviction score to satisfy the memory constraints (e.g., at most 4 tokens can reside in the cache). After each block processing, the KV cache is updated and passed to the next round of block processing, satisfying imposed memory constraints on the KV cache.}
    \label{fig:block-processing}
    \vspace{-5mm}
\end{figure*}

\label{sec:background}
\subsection{Transformers}
\label{subsec:transformers}

The Transformer architecture \citep{vaswani2017attention} processes input data using a sequence of transformer blocks.
A transformer block $f$ takes a sequence $X = (x_1, x_2, \ldots, x_T) \in \mathbb{R}^{T \times d}$ as input and applies the causal self-attention operator $\operatorname{Attention}$ followed by a feed-forward network $\operatorname{FF}$ with optional gating \cite{shazeer2020glu} 
to produce the output $X' = (x'_1, x'_2, \ldots, x'_T) \in \mathbb{R}^{T \times d}$:
\begin{equation}    
    X' = f(X) = \operatorname{FF}(\operatorname{Attention}(X)),    
\label{eq:transformer_block}
\end{equation}

The causal $\operatorname{Attention}$ operator projects each input token $x_t$ with matrices $W_q, W_k, W_v \in \mathbb{R}^{d \times d}$ into key, query, and value matrices ($K = XW_k$, $Q = XW_Q$, $V=XW_V$, respectively) then applies the following relation to produce the attention output \footnote{The multi-head extension and output projections are omitted for brevity.}:
\begin{equation}
    O^{\operatorname{attn}} = \operatorname{Softmax}\left(\Large \sfrac{Q K^{\top}}{\sqrt{d}} + M \right) V = AV
\label{eq:attention_output}
\end{equation}
where $O^{\operatorname{attn}} \in \mathbb{R}^{T\times d}$, and the causal attention mask $M$ is an upper triangular matrix with nonzero values of $-\infty$.

\subsection{KV Caching}
\label{sec:kv_caching}
When the $\operatorname{Attention}$ operator processes a new token $x_{T+1}$, it must also recompute the prior KV states for tokens $x_0, \hdots, x_T$. 
This can be avoided by storing previously computed KVs in a \textit{KV cache} $\mathcal{C} = (K, V)$ for later reuse and append the new KV corresponding $x_{T+1}$ to the cache.
We can apply \cref{eq:attention_output} to an existing KV cache $\mathcal{C}$ as follows:
\begin{equation}
    o^{\operatorname{attn}}_{T+1} = \operatorname{Softmax}\left( \Large \sfrac{q_{T+1} [K \| k_{T+1}]^{\top}}{\sqrt{d}} + M \right) [V \| v_{T+1}],
\label{eq:transformer_block_with_kv_cache}
\end{equation}
where $k_{T+1}$, $q_{T+1}$, $v_{T+1} $ are the key, query, and value states of $x_{T+1}$, and  $[X \| x_{T+1}]$ represents the concatenation of $x_{T+1}$ to an existing tensor $X$ along the time dimension, $M$ is the causal attention mask accounting for both the KV cache and $x_{T+1}$.

KV caching dramatically reduces the latency of $\operatorname{Attention}$ by only computing $k_{T+1}$, $q_{T+1}$, $v_{T+1}$ for each token $x_{T+1}$ and reusing the KVs in $\mathcal{C}$. 
However, the size of the KV cache increases linearly with the number of processed tokens and dominates the memory footprint in long-context applications \cite{yang2024pyramidinfer}.

\subsection{KV Cache Eviction Methods}
\label{subsec:kv_cache_eviction}
To limit the memory footprint of the KV cache, we fix a \textit{cache budget} $N$, which is the maximum number of tokens to be stored in the cache.
If a new KV is added to the cache and the updated cache size is greater than $N$, we must evict KVs from the cache until the cache budget is met.
The \textit{eviction policy} $\pi_N(\mathcal{C})$ 
evicts a subset of KVs from $\mathcal{C}$ and returns a new cache $\mathcal{C}^\prime$ containing at most $N$ KVs:
\begin{equation}
\begin{aligned}     
    \mathcal{C}&\gets 
    ([K \| k_{t+1}], [V \| v_{t+1}]) \\
    \mathcal{C}^\prime &\gets \pi_N(\mathcal{C})
\label{eq:eviction_policy}
\end{aligned}
\end{equation}

\paragraph{Attention-Based Eviction Policies}
Attention-based eviction policies $\pi_N^{\text{attn}}$ use aggregated attention values to rank each KVs' relative importance and keep the $N$ highest scoring KVs. 
For a given attention weight aggregation function $\phi$, 
the eviction policy $\pi_N^{\text{attn}}$ performs the following steps:
\begin{equation} 
    \begin{aligned}
    S &= \texttt{topk}(\phi(A),N) \\
    K^\prime &= \texttt{gather}(K, S),\quad
    V^{\prime} = \texttt{gather}(V, S)
    \end{aligned}
\label{eq:attn_based_eviction_policy}
\end{equation}
where $\texttt{topk}(x, N)$ returns the indices of $N$-largest values of  $x$ and $\texttt{gather}(X, S)$ gathers columns indexed by $S$.

Attention-based eviction methods prioritize KV pairs with higher attention scores to past tokens. This is problematic when applying block prompt processing: all input tokens are not simultaneously accessible within $\operatorname{Attention}$, only those in the current block and cache. This can result in an incorrect eviction decision.
Additionally, attention-based eviction often requires explicitly materializing $A$, which can be resource intensive.
We discuss the attention-based eviction policies further in \cref{sec:related_work}.

\subsection{KV Caching in Resource-Constrained Environments}
\label{subsec:llm_in_resource_restricted_setup}
Existing eviction policies like \citet{zhang2024h2o, oren2024transformers} focus on processing the entire input prompt \textit{at once}: KVs are computed for each token in the prompt and stored in a cache $\mathcal{C}$, then the eviction policy \( \pi_N \) is applied to reduce the number of tokens in  \( \mathcal{C}^\prime \) to $N$, before token generation. 
However, the intermediate cache $\mathcal{C}$ before eviction will grow to the size of the input prompt.
This can often exceed model's allocated memory limit when deploying long context applications in resource constrained environments.

As demonstrated in efficient LLM inference frameworks \cite{agrawal2023sarathi, holmes2024deepspeed, kwon2023efficient, xu2024empowering}, one solution is to apply $\pi_N$ more frequently by segmenting $X$ into non-overlapping blocks $X = [X_0, X_1, \ldots, X_{m-1}]$, where $X_i=[x_{Bi}, \hdots, x_{B(i+1)-1}]$, $B$ is the block size, and $m = \lceil T / B \rceil$, and iteratively updating the cache by exploiting causality, applying \cref{eq:eviction_policy} in a block-wise fashion:
\begin{equation}
    \begin{aligned} 
    \mathcal{C}_{i}&\gets 
    ([K_{i-1} \| k_{Bi:B(i+1)-1}], [V_{i-1} \| v_{Bi:B(i+1)-1}]) \\
    \mathcal{C}_{i}^\prime & \gets \pi_N(\mathcal{C}_{i}),\quad \mathcal{C}_i \gets \mathcal{C}_{i}^\prime, \quad
    \mathcal{C}_0 = \emptyset,
    \end{aligned}
    \label{eq:block_processing}
\end{equation}
where $k_{Bi:B(i+1)-1}$ and $v_{Bi:B(i+1)-1}$ are the keys and values selected from $X_i$ respectively, and $ \mathcal{C}_{i} =(K_{i}, V_{i})$ is the KV cache after processing the first $i$ prompt blocks. 
As in \cref{eq:eviction_policy}, we concatenate the $B$ new KVs to the current cache, apply 
$\pi_N$ and update the cache in \cref{eq:block_processing}.

We refer to this as \textit{block prompt processing}.
Its main advantage is the ability to control of the compute and memory overhead of KV cache management by adjusting the block size $B$ and cache budget $N$.
Note that, in a decoder-based architecture, applying block prompt processing to $X$ with $B=T$ yields the same result as processing all of $X$ at once and choosing $B=1$ corresponds to the token generation phase of LLM evaluation.

\paragraph{Attention-Based Token Eviction Challenges} Despite its advantages, block prompt processing introduces a challenge for KV cache eviction: eviction decisions in block $X_i$ impact the cache used by $X_{i+1}$, causing eviction errors to compound over time. 
When the model processes $X_i$, attention-based eviction methods retain KVs with high attention weights derived from $X_0, \hdots, X_i$ rather than all of $X$, which may prematurely evict KVs with high weights in upcoming blocks.
\begin{figure*}[t!]
     \centering
     \begin{subfigure}[b]{0.49\linewidth}
         \centering
         \includegraphics[width=\linewidth]{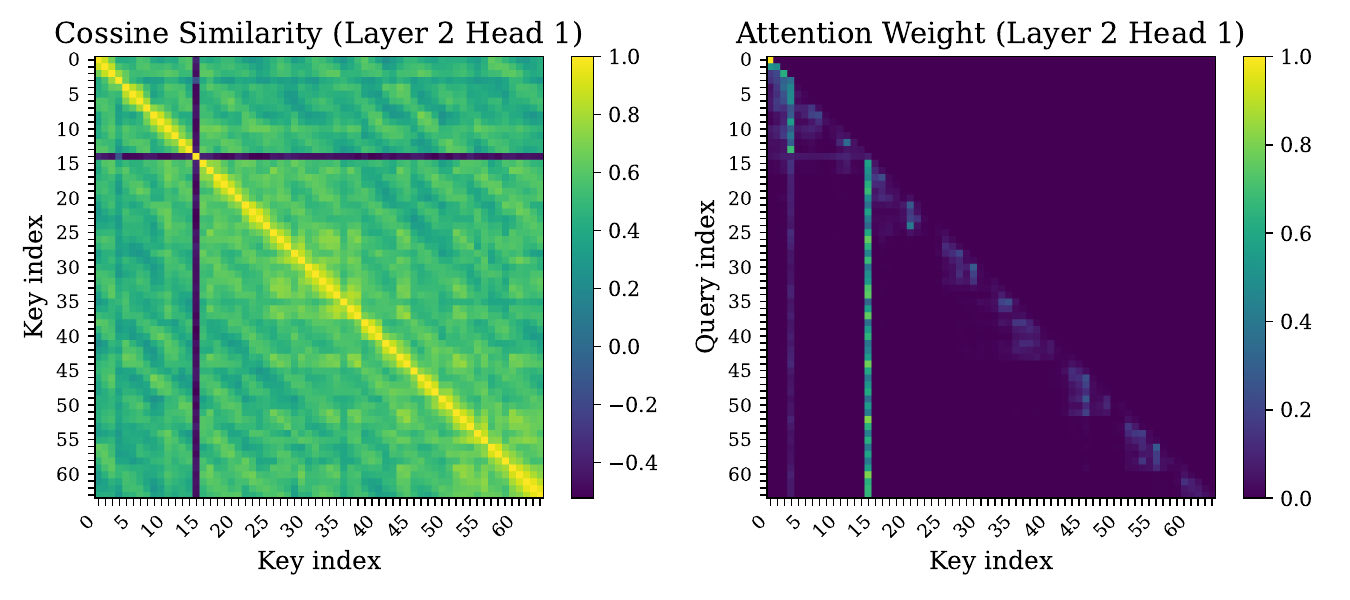}
         \caption{Layer 2 and Head 1}
         \label{fig:cossim-attn-l2-h1}
     \end{subfigure}
     \hfill
     \begin{subfigure}[b]{0.49\linewidth}
         \centering
         \includegraphics[width=\linewidth]{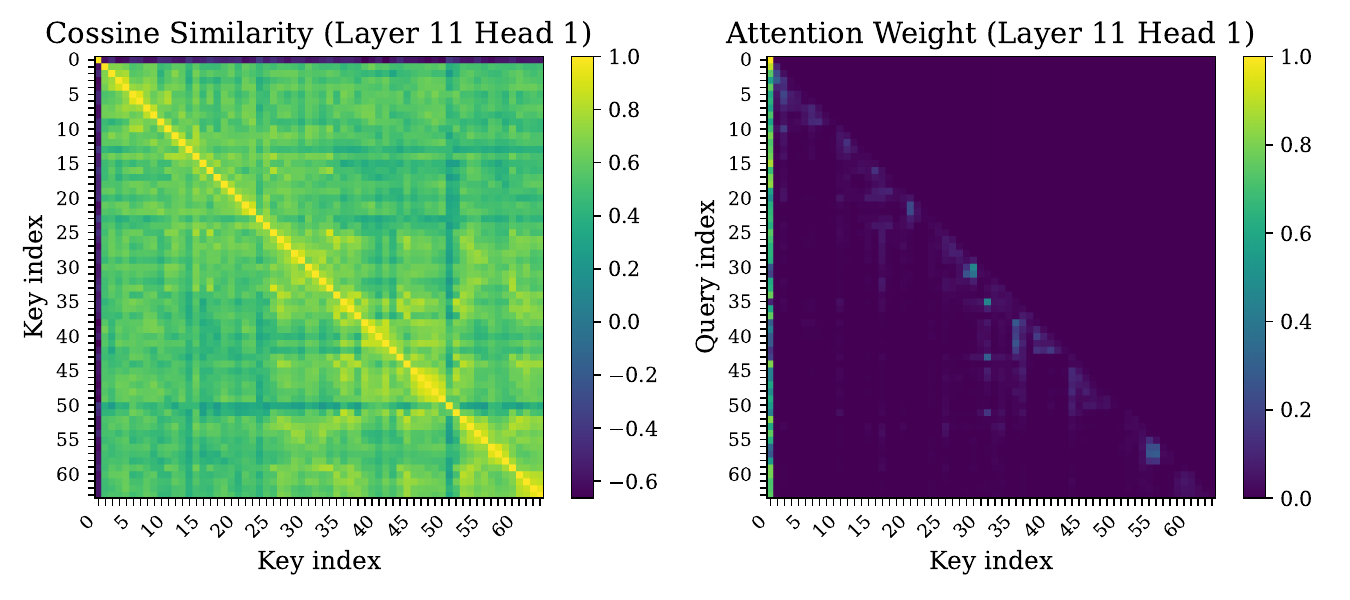}
         \caption{Layer 11 and Head 1}
         \label{fig:cossim-attn-l11-h1}
     \end{subfigure}
     \caption{\textbf{Cosine similarity of the keys and attention weights.} Measured from Llama 3.2-3B-Instruct and the first sample from the NarrativeQA dataset in LongBench. Truncated to the first 64 tokens for visualization.}
    \label{fig:cossim-attn}
    \vspace{-5mm}
\end{figure*}

\section{Method}
\label{sec:method}
We demonstrate a negative correlation between attention scores and the cosine similarity among keys (\cref{subsec:attention_and_dissimilarity}) and leverage this observation to develop \ourmethod{} (\cref{subsec:keydiff}), followed by a theoretical justification of \ourmethod{} (\cref{subsec:keydiff-theory}) and preliminary evidence of its efficacy (\cref{subsec:opt_deriv_keydiff}).

\subsection{Correlation of Attention Scores and Key Dissimilarity}
\label{subsec:attention_and_dissimilarity}

To address the shortcomings of attention-based KV cache eviction in \cref{subsec:llm_in_resource_restricted_setup}, we develop an alternative attention-free scoring metric that retains significant KVs across blocks while being resource efficient. 
We recall the ``attention sink" phenomenon: LLMs often assign high attention weight to the first few tokens, regardless of the input  \cite{xiaoefficient, sun2024massive}; these highly weighted tokens are called \textit{sink tokens}. However, the index of the sink tokens can vary across heads and layers and reside deeper in the sequence than the first few tokens. 
This observation motivates the following hypothesis: \textit{high attention scores can be determined by the intrinsic properties of the keys rather than by any particular combination of keys and queries.} 

\paragraph{Correlation of Key Similarity and Attention Scores}
We evaluate our hypothesis by inspecting the cosine similarities between keys computed inside an attention block.
We visualize the pairwise cosine similarities between keys along with the attention weights in two particular heads and layers in \cref{fig:cossim-attn}. 
We observe that keys with lower cosine similarity with other keys exhibit higher relative attention scores regardless of the choice of query, such as the 4th and 15th keys in  \cref{fig:cossim-attn-l2-h1}, or the 1st key in \cref{fig:cossim-attn-l11-h1}. Pairwise cosine similarity of keys is solely a function of the keys in the cache, which are independent of input queries; the surprising aspect of \cref{fig:cossim-attn} is the negative correlation with attention weights. These distinctive keys essentially recover the attention sink phenomenon \cite{xiaoefficient}.

\subsection{\ourmethod{}}
\label{subsec:keydiff}

Based on the observation in \cref{subsec:attention_and_dissimilarity}, we propose \ourmethod{}, which evicts tokens from the KV cache based on key similarity. 
If the cache $\mathcal{C}$ has intermediate size $n$ and budget $N$ where $n > N$,  $\pi^{\ourmethod{}}_N$ is defined as:
\begin{equation} 
    \begin{aligned}
    S &= \texttt{topk}(-\operatorname{CosSim}(K) \mathbf{1}, N), \\
    K^{\prime} &= \texttt{gather}(K, S),\quad
    V^{\prime} = \texttt{gather}(V, S)
    \end{aligned}
\label{eq:keydiff}
\end{equation}
where $K \in \mathbb{R}^{n \times d}$ and \( V \in \mathbb{R}^{n \times d} \) are the cached keys and values, \( \operatorname{CosSim}(K) \in \mathbb{R}^{n \times n} \) is the pairwise cosine similarity matrix of keys in \( K \) with $\operatorname{CosSim}(K)_{ij} = \frac{ k_i \cdot k_j }{\|k_i\| \|k_j\|}$, and \( \mathbf{1} \in \mathbb{R}^n \) is a vector of ones.

\begin{wrapfigure}{r}{.45\linewidth}
    \includegraphics[width=0.9\linewidth]{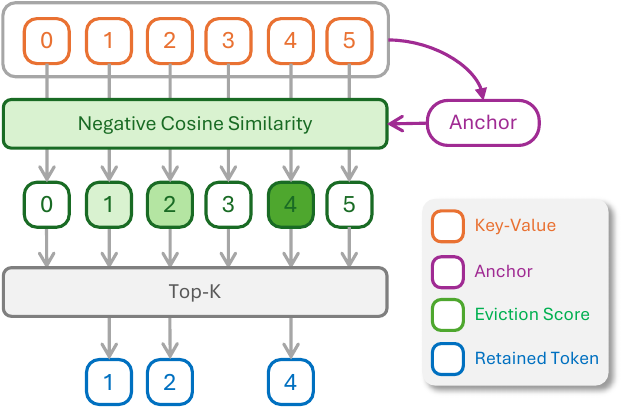}
    \caption{\textbf{An overview of \ourmethod{}}. \textbf{(1)} \ourmethod{} first computes the anchor vector by taking the average of the keys in the KV cache, \textbf{(2)} computes the cosine similarity between the keys and the anchor resulting in eviction scores whose color intensities indicate the score values, and \textbf{(3)} retains the KV pairs with the lowest similarities.}
    \label{fig:keydiff}
    \vspace{-2mm}
\end{wrapfigure}

\paragraph{Efficient Variant of \ourmethod{}} 
\label{subsec:keydiff-eff}
Unlike attention-based eviction policies, \ourmethod{} does not require access to the attention weights $A$, facilitating optimized attention kernels that do not materialize $A$ such as FlashAttention \citep{dao2022flashattention}. However, computing the pairwise cosine similarities runs in \( \mathcal{O}(n^2) \) time. Fortunately, we can compute the score of each token in \cref{eq:keydiff} in $\mathcal{O}(n)$ as follows:
\begin{equation} 
    \begin{aligned}
    S &= \texttt{topk}(-\operatorname{CosSim}(\mu(\hat{K}), \hat{k}_i), N) \\    
    \end{aligned}
\label{eq:efficient-keydiff}
\end{equation}
where $\mu(\hat{K}) = \frac{1}{n} \sum_{i=1}^n \hat{k}_i$ and $\hat{k}_i = \frac{k_i}{||k_i||}$. We refer to $\mu(\hat{K})$ as the \emph{anchor vector}. We show this formulation retains the same KVs of \cref{eq:keydiff} under a mild condition. (see \cref{subsec:opt_deriv_keydiff}). Our experimentation has shown that the anchor vector $\mu(\hat{K})$ can be replaced with $\mu(K)$ without losing accuracy (see \cref{table:anchor_vector_ablation}). We evaluate the efficient \ourmethod{} described in \cref{fig:keydiff} using unnormalized keys $k$ in all subsequent sections. \cref{fig:pca-plot-5-3,fig:pca-plot-27-4,fig:pca-plot-20-0,fig:pca-plot-8-1} visualize the keys retained and evicted by sink attention \cite{xiaoefficient}, TOVA \cite{oren2024transformers} and \ourmethod{} via PCA. \ourmethod{} retains more varied keys. A full complexity and FLOP analysis can be found in \cref{appendix:subsec:runtime-and-memory-complexity,sec:keydiff-flop-count}

\paragraph{\ourmethod{} with Sliding Window} \label{subsec:keydiff-sw}
In tasks such as reasoning and coding, where the most recent tokens are often important, we can augment \ourmethod{} and its efficient variant to use a percentage of the cache budget for a \textit{sliding window} \cite{beltagy2020longformer}, which we call \textit{\ourmethod{} with sliding window}. 
This extension introduces no complexity or memory overhead and we observe better results on certain tasks than vanilla \ourmethod{} (\cref{table:longbench_full_rotated_sliding_window,appendix:math500}).

\subsection{Why \ourmethod{} Works: A Theoretical Perspective}
\label{subsec:keydiff-theory}

To solidify the theoretical foundation of \ourmethod{} and show that \ourmethod{} ultimately selects keys most aligned with queries, we prove the following two results. 
We first validate the relationship between cosine similarity and attention scores observed in \cref{fig:cossim-attn} by bounding the attention score of a new incoming key $k^\star$ in terms cosine similarity with a fixed query $q$:
\begin{lemma}
\label{lemma:key-cossim-reduced}
Suppose that for a fixed query token $q$, there is a set of key tokens $\{k_i\}_{i=1}^n$ such that $||k_i||_2^2 < M, \; \forall \;i$. Without loss of generality suppose $||q||=1$ and assume ${k^*}$ is a key not in $\{k_i\}_{i=1}^n$ with $||{k^*}||_2^2 < M$ that has attention weight $w > 0$. Then, for $n\to \infty$, 
\[\frac{-\log(1-w)}{2M} -1  \leq \textrm{CosSim}({k^*}, q) \]
\end{lemma}

We then establish a relationship between the cosine similarities of $k^\star$, $q$, and the mean of prior keys $\bar{k}$:
\begin{theorem}\label{theorem:keydiff-key-query}
Consider tokens ${k^*}$, $q$ as above, and the average of the keys tokens $\bar{k}$. Suppose $\textrm{CosSim}({k^*}, q) = \beta_q > 0$ and $\textrm{CosSim}(\bar{k}, q) = \alpha_q < 0$. Then 
\begin{equation}
\textrm{CosSim}(\bar{k}, {k^*}) \leq 1 + \alpha_q \beta_q - 0.5\alpha_q^2 -0.5\beta_q^2.
\label{eq:keydiff-key-query-eq}
\end{equation}

\end{theorem}

\begin{wrapfigure}{r}{.45\linewidth}
    \includegraphics[width=0.8\linewidth]{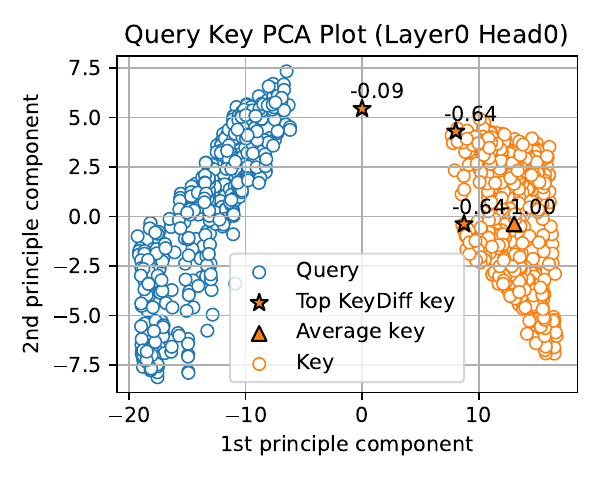}
    \caption{PCA embedding of keys and queries from Llama 3.2 3B 
    }
    \label{fig:key-query-pca}
\end{wrapfigure}

By combining \cref{lemma:key-cossim-reduced,theorem:keydiff-key-query},  
we establish a relationship between the attention weight $w$ and the \ourmethod{} score $\mathrm{CosSim}(\bar{k}, k^*)$.
As $\mathrm{CosSim}(\bar{k}, q)$ decreases and $\mathrm{CosSim}(k^*, q)$ increases (along with the attention weight $w$), then $\mathrm{CosSim}(\bar{k}, k^*)$ tends to $-1$: this means \ourmethod{} selects distinct keys most aligned with $q$. We visualize this in \cref{fig:key-query-pca} with a PCA embedding of keys and queries from a single head of Llama 3.2 3B, highlighting the relationship between top scoring keys via \ourmethod{}, the anchor vector and queries. Similar trends are found from the other layers and heads as shown in \cref{fig:pca-plot}. The proofs of \cref{lemma:key-cossim-reduced,theorem:keydiff-key-query} are in \cref{appendix:theoretical-justification}, along with empirical motivation for the chosen assumptions.

\begin{figure*}[t!]
     \centering
     \begin{subfigure}[b]{0.24\linewidth}
         \centering
         \includegraphics[width=\linewidth]{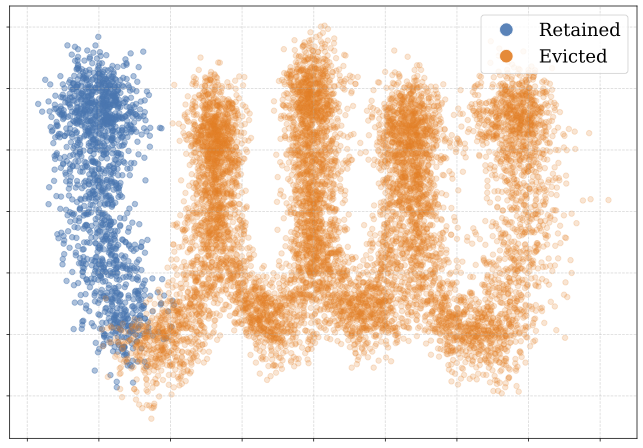}
         \caption{Sink Attention}
     \end{subfigure}
     \begin{subfigure}[b]{0.24\linewidth}
         \centering
         \includegraphics[width=\linewidth]{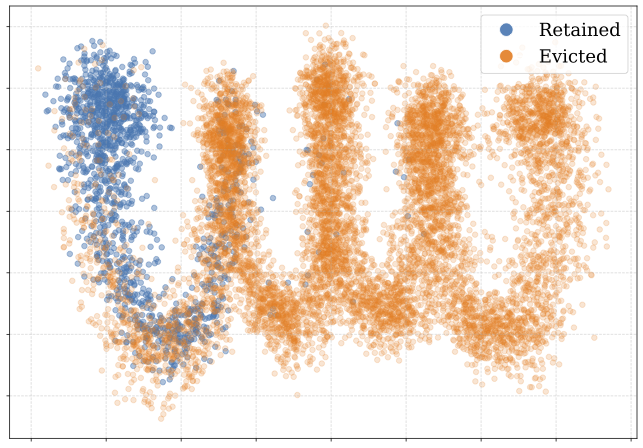}
         \caption{TOVA}
     \end{subfigure}
     \begin{subfigure}[b]{0.24\linewidth}
         \centering
         \includegraphics[width=\linewidth]{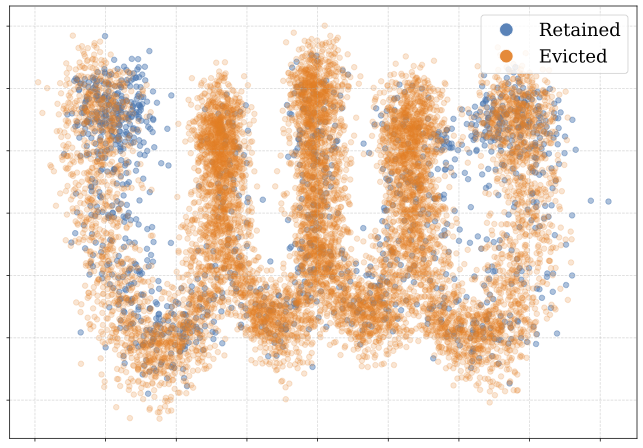}
         \caption{\ourmethod{}}
     \end{subfigure}
     \begin{subfigure}[b]{0.24\linewidth}
         \centering
         \includegraphics[width=\linewidth]{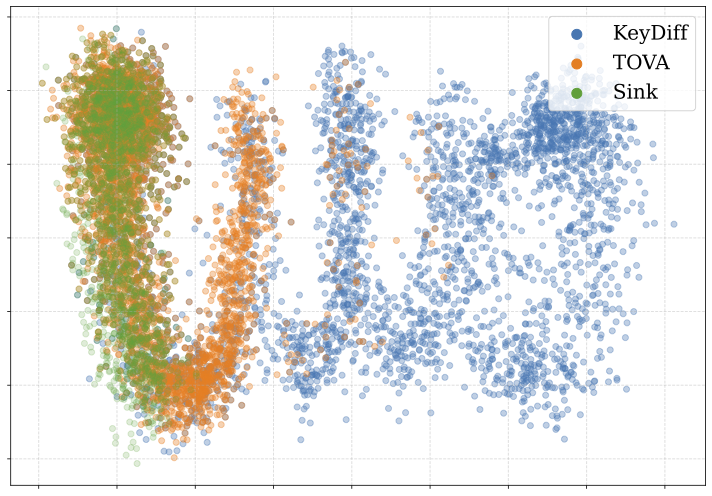}
         \caption{Retained keys only}
     \end{subfigure}
     \hfill
     \caption{\textbf{(a, b, and c)} PCA Visualizations in two dimensions of a key cache managed with Sink, TOVA, and \ourmethod{}. Retained tokens are \textcolor{blue}{blue}, while evicted tokens are \textcolor{orange}{orange}.
     Keys are taken from layer $5$ and head $3$ of Llama3.2-3B-Instruct, and generated using the NarrativeQA dataset. \textbf{(d)} PCA visualization of the retained keys for each KV cache eviction method.
    }
    \label{fig:pca-plot-5-3}
\vspace{-2mm}
\end{figure*}
\section{Experiments}

\label{sec:experiment}
In this section, we empirically demonstrate the effectiveness of \ourmethod{}. 
We begin with a description of competing, state-of-the-art eviction methods, followed by a detailed description of the evaluation setup, then present our experimental results.
 Our findings can be summarized as follows:
\begin{itemize}[leftmargin=*, itemsep=.02em, topsep=0.5em]    
    \item \textbf{Needle-In-a-Haystack.} \ourmethod{} outperforms competing eviction policies on the Needle-In-A-Haystack benchmark (\cref{subsec:nih}).
    \item \textbf{LongBench.} \ourmethod{} outperforms competing eviction policies with block size $B=128$ on LongBench, achieving an 1.5\% accuracy drop with a 6K cache budget ($\sim$33\% compression rate) and $\leq$ .04\% with a 8k cache budget ($\sim$23\% compression rate) with Llama-3.1-8B-Instruct and Llama-3.2-3B-Instruct (\cref{subsec:longbench}).         
    \item \textbf{Reasoning.} \ourmethod{} performs competitively on the Math-500 reasoning benchmark with other eviction methods using the DeepSeek-R1-Distill-Qwen-7B and Llama-8B, and shows near eviction-free baseline performance when augmented with a sliding window (\cref{subsec:reasoning}) for DeepSeek-R1-Distill-Llama-8B.
    \item \textbf{Ablation Study.} We perform an ablation study on the main parameters of \ourmethod{} and show that utilizing negative cosine similarity as the eviction criteria and the mean of cached keys as the anchor vector  performs best. (\cref{subsec:ablation}).
    \item \textbf{Efficiency.} We compare the end-to-end inference latency of \ourmethod{}, \cite{li2024snapkv} and \cite{oren2024transformers} and observe a 30\% latency improvement with \ourmethod{} (\cref{subsec:inf-latency}).
\end{itemize}
\vspace{-3mm}

\paragraph{Experimental Setup} We apply several cache eviction methods to several decoder-only transformer-based language models, including Llama $3.1$-$8$B-Instruct \cite{dubey2024llama}, Llama $3.2$-$3$B-Instruct \cite{dubey2024llama}, and Qwen $2.5$-$3$B/7B-Instruct \cite{yang2024qwen2technicalreport}. 
We evaluate these models using H2O \cite{zhang2024h2o}, TOVA \cite{oren2024transformers}, SnapKV \cite{li2024snapkv}, and StreamingLLM \cite{xiaoefficient}, (or ``sink attention") cache eviction policies, along with the eviction-free model as a baseline.
We simulate a resource constrained environment by processing prompts and generating responses using \cref{eq:block_processing}, with a block size of $B=128$ for prompt processing and $B=1$ for token generation using greedy decoding for all experiments.
We denote the cache budgets of 2048, 4096, 6144 and 8192 as 2K, 4K, 6K and 8K, respectively.

\begin{figure*}[t!]
     \centering
     \begin{subfigure}[b]{0.32\linewidth}
         \centering
         \includegraphics[width=\linewidth]{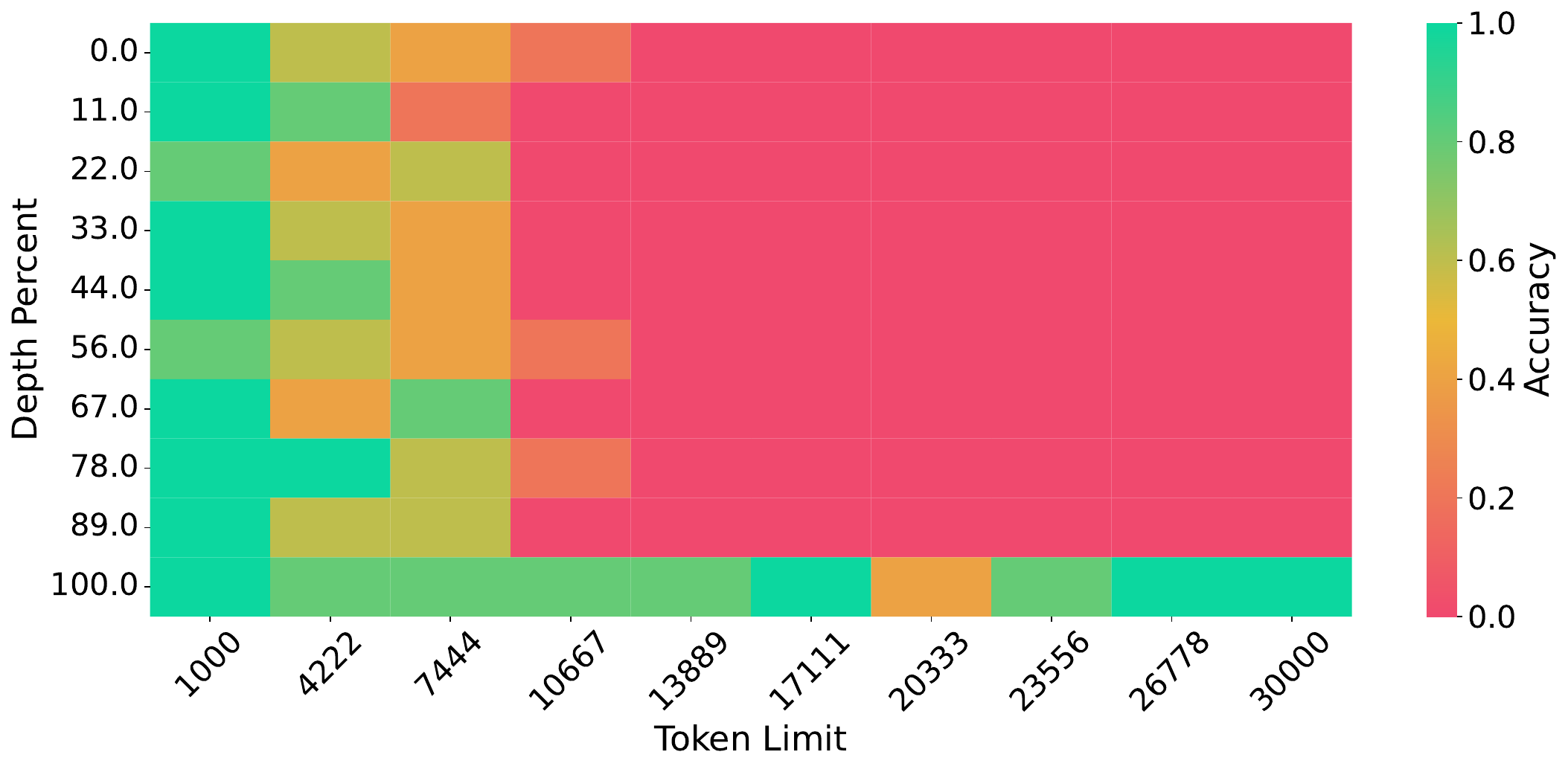}
         \caption{TOVA}
     \end{subfigure}
     \hfill
     \begin{subfigure}[b]{0.32\linewidth}
         \centering
         \includegraphics[width=\linewidth]{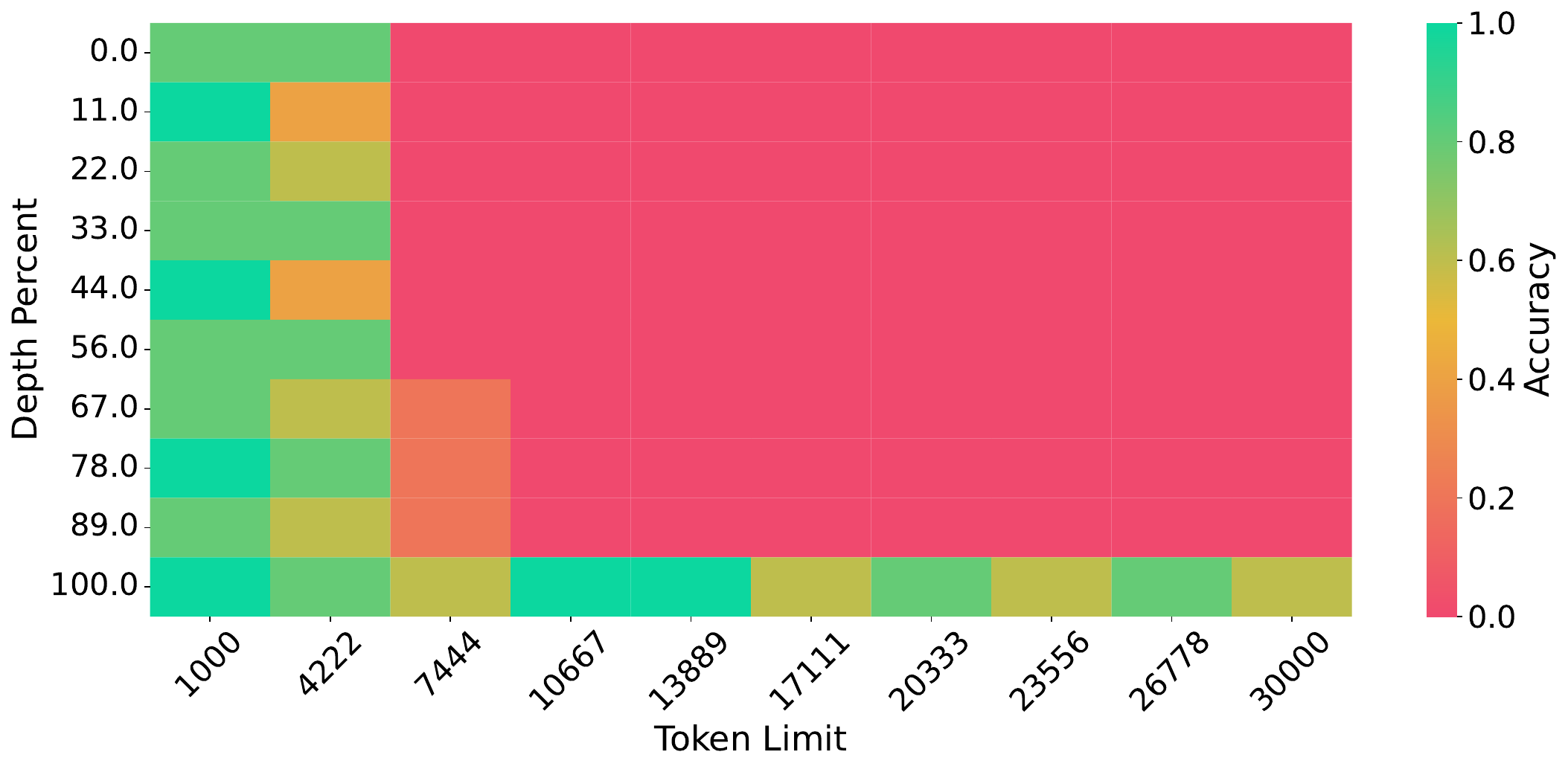}
         \caption{SnapKV}
     \end{subfigure}
     \begin{subfigure}[b]{0.32\linewidth}
         \centering
         \includegraphics[width=\linewidth]{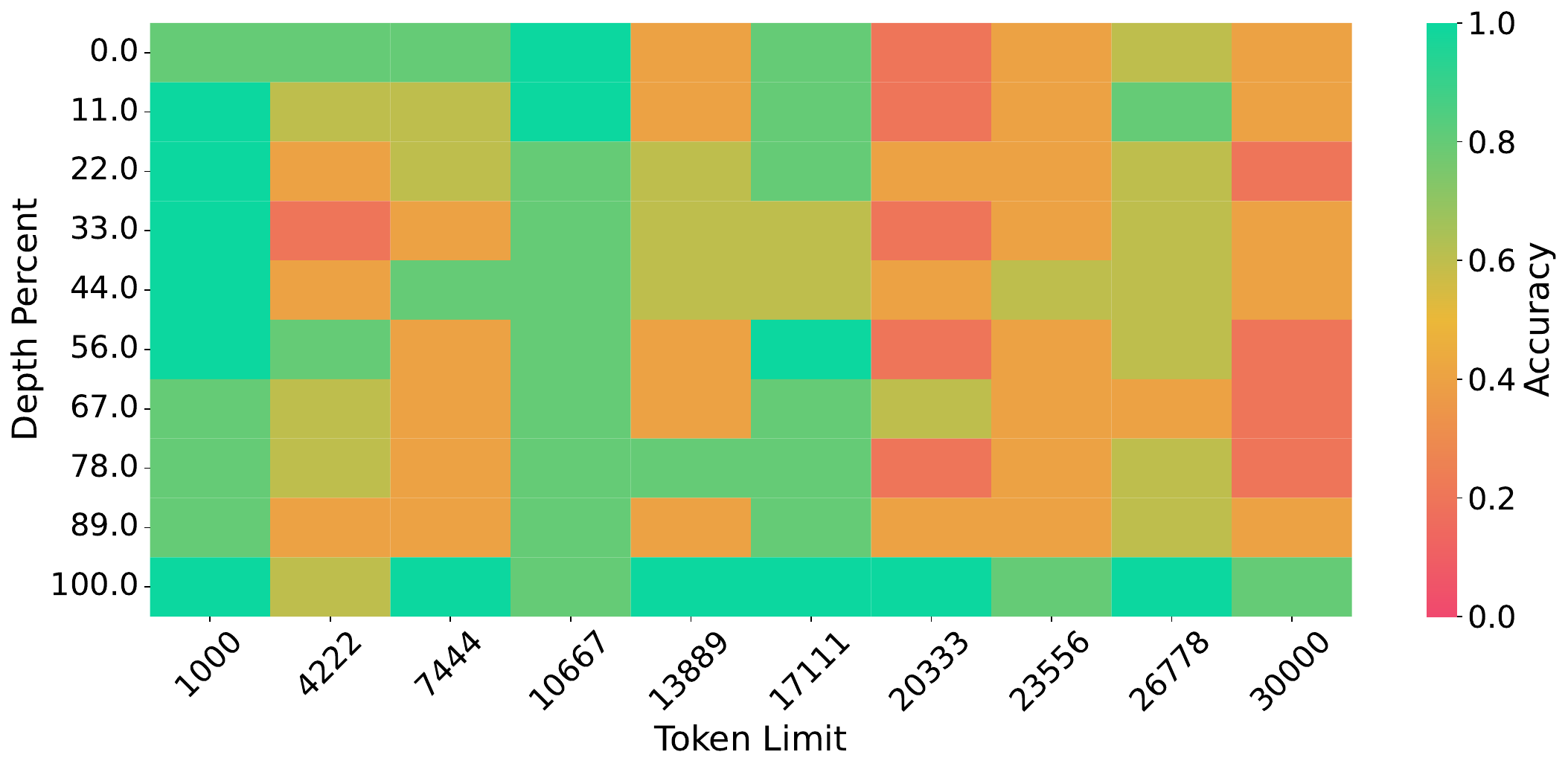}
         \caption{\ourmethod{}}
     \end{subfigure}
     \caption{Accuracy across document length and needle depth  for needle in a haystack test. Cache size is $6$K with $B=128$. }
    \label{fig:needle_haystack}
    \vspace{-3mm}
\end{figure*}

\subsection{Needle In a Haystack}
\label{subsec:nih}
 To compare the impact of various cache eviction policies on fact retrieval, we conduct the ``Needle In a Haystack" test \cite{liu2024lost,kamradt2023needle}.
This test embeds specific information (``needle") at different points within a body of unrelated text (``haystack"); finding and retaining the needle is challenging for eviction policies, which can't know what information must be retained during block prompt processing.
The results are shown in \cref{fig:needle_haystack} and \cref{fig:needle-in-a-haystack-sink}, where we show the recall accuracy of Llama$3.2$-$3$B-Instruct across different document lengths (x-axis) and needle depths (y-axis) with a cache size of $6$K.
\ourmethod{} performs similarly to TOVA, SnapKV and sink attention for shorter documents and outperforms all three methods as the document length increases.

\begin{table*}[!t]
\caption{\textbf{Llama-3.1-8B/3.2-3B-Instruct LongBench results with $B=128$ (Higher is better)}. We highlight the best and second best methods within a given budget with \textbf{bold} and \underline{underline}. We omit Chinese dataset results and other model results due to space limit. The full evaluation results are in \cref{table:longbench_full_rotated}. \textdagger: A subset of samples (183/200) were evaluated due to OOM errors.} 
\label{table:longbench_reduced}
\begin{center}
\resizebox{\textwidth}{!}{
\begin{tabular}{ccccccccccccccccccc}
\toprule

&
&
\multicolumn{3}{c}{Single Doc. QA} &
\multicolumn{3}{c}{Multi Doc. QA} &
\multicolumn{3}{c}{Summarization} &
\multicolumn{3}{c}{Few$\-$shot Learning} &
\multicolumn{2}{c}{Synthetic} &
\multicolumn{2}{c}{Code} &
\\

\cmidrule(lr){3-5}
\cmidrule(lr){6-8}
\cmidrule(lr){9-11}
\cmidrule(lr){12-14}
\cmidrule(lr){15-16}
\cmidrule(lr){17-18}

&
& 
{Narrative QA} & {Qasper} & {MF-en} & 
{HotpotQA} & {2WikiMQA} & {Musique} & 
{GovReport} & {QMSum} & {MultiNews} & 
{TREC} & {TriviaQA} & {SAMSum} & 
{PCount} & {PR-en} &
{Lcc} & {RB-P} & 
{Avg.} 
\\

\midrule
\midrule
\addlinespace

\multicolumn{2}{c}{Llama3.1-8B} &
30.05\textsuperscript{\textdagger} & 47.00 & 56.12 & 57.33 & 47.81 & 32.25 & 34.86 & 25.32 & 27.02 & 73.00 & 91.61 & 43.37 & 8.33 & 99.50 & 61.66 & 51.94 & 49.20 \\
\midrule

\multirow{4}{*}{H2O} &
2K & 1.74 & 21.15 & 25.33 & 26.11 & 24.15 & 8.78 & 2.17 & 2.70 & 16.78 & 44.00 & 29.36 & 7.62 & 2.25 & 5.88 & 40.15 & 12.14 & 16.89 \\

&
4K & 4.07 & 36.16 & 36.00 & 33.52 & 32.87 & 17.78 & 6.66 & 5.95 & 24.09 & 55.00 & 47.65 & 17.41 & 4.00 & 24.50 & 54.85 & 21.43 & 26.37 \\

&
6K & 8.52 & 43.31 & 44.80 & 40.03 & 42.46 & 21.68 & 11.85 & 8.78 & 26.03 & 62.00 & 56.39 & 25.72 & 5.75 & 45.50 & 58.62 & 29.53 & 33.19 \\

&
8K & 13.85 & 44.94 & 47.81 & 43.64 & 44.90 & 23.65 & 18.78 & 11.35 & 26.49 & 69.50 & 69.05 & 33.41 & 5.25 & 62.50 & 59.74 & 36.26 & 38.20 \\

\midrule

\multirow{4}{*}{TOVA} &
2K & 22.57 & 37.26 & 39.43 & 45.74 & 34.48 & 14.77 & 28.87 & 21.17 & 26.95 & 62.50 & 90.73 & 42.74 & 0.00 & 18.00 & 62.68 & 52.48 & 37.52 \\

&
4K & 22.68 & 44.55 & 47.87 & 46.76 & 44.54 & 20.56 & 30.95 & 22.13 & 26.96 & 61.50 & 90.56 & 43.27 & 3.00 & 43.50 & 61.62 & 53.40 & 41.49 \\

&
6K & 24.59 & 45.93 & 53.92 & 55.09 & 47.43 & 25.07 & 32.33 & 24.10 & 27.00 & 68.50 & 90.81 & 43.89 & 4.25 & 67.00 & 61.50 & 52.39 & 45.24 \\

&
8K & 24.86 & 46.78 & 54.83 & 54.52 & 49.00 & 26.40 & 33.44 & 24.76 & 27.00 & 71.00 & 91.11 & 43.29 & 6.25 & 87.00 & 61.49 & 51.79 & 47.09 \\

\midrule

\multirow{4}{*}{Sink} &
2K & 21.83 & 34.27 & 29.24 & 38.64 & 29.50 & 12.59 & 28.51 & 20.21 & 26.62 & 65.00 & 89.46 & 42.20 & 2.00 & 25.50 & 64.95 & 59.54 & 36.88 \\

&
4K & 22.94 & 43.01 & 39.08 & 44.04 & 41.39 & 19.09 & 31.08 & 21.57 & 26.78 & 70.00 & 91.53 & 42.29 & 3.00 & 38.50 & 62.12 & 58.84 & 40.95 \\

&
6K & 25.41 & 47.40 & 44.13 & 47.39 & 45.73 & 21.90 & 32.53 & 22.19 & 26.87 & 72.00 & 91.25 & 43.41 & 3.08 & 52.50 & 62.22 & 56.24 & 43.39 \\

&
8K & 23.53 & 46.63 & 48.68 & 49.61 & 47.16 & 21.14 & 33.10 & 23.20 & 26.92 & 72.00 & 91.29 & 43.79 & 3.25 & 66.00 & 62.18 & 56.43 & 44.68 \\

\midrule

\multirow{4}{*}{SnapKV} &
2K & 21.81 & 37.22 & 37.19 & 46.10 & 35.42 & 16.53 & 29.83 & 21.05 & 26.77 & 61.00 & 88.84 & 42.56 & 4.03 & 51.50 & 62.37 & 51.45 & \underline{39.60} \\

&
4K & 24.79 & 44.22 & 47.30 & 48.49 & 46.73 & 20.55 & 32.19 & 22.68 & 26.95 & 67.50 & 90.98 & 43.14 & 5.17 & 89.50 & 61.44 & 51.20 & \underline{45.18} \\

&
6K & 24.10 & 45.57 & 50.44 & 53.12 & 48.41 & 24.27 & 33.43 & 23.53 & 27.03 & 71.50 & 92.28 & 43.58 & 5.25 & 98.00 & 61.32 & 52.16 & \underline{47.12} \\

&
8K & 25.15 & 46.55 & 53.39 & 56.00 & 48.75 & 27.82 & 33.67 & 24.85 & 27.01 & 72.50 & 91.78 & 43.54 & 5.08 & 100.00 & 61.48 & 51.41 & \underline{48.06} \\

\midrule

\multirow{4}{*}{\ourmethod{}} &
2K & 26.64 & 41.73 & 50.99 & 51.59 & 46.47 & 22.84 & 29.02 & 23.86 & 26.76 & 66.50 & 85.92 & 39.26 & 3.17 & 96.00 & 59.17 & 39.42 & \textbf{44.33} \\

&
4K & 28.70 & 45.62 & 56.06 & 54.58 & 49.31 & 28.25 & 32.30 & 25.03 & 27.07 & 70.00 & 90.85 & 42.84 & 4.21 & 99.00 & 60.80 & 48.00 & \textbf{47.66} \\
&
6K & 29.90 & 46.33 & 55.11 & 56.80 & 49.50 & 31.52 & 33.44 & 24.58 & 26.98 & 72.00 & 90.99 & 43.10 & 5.27 & 99.50 & 61.40 & 49.70 & \textbf{48.51} \\

&
8K & 33.57 & 46.77 & 55.48 & 56.87 & 49.37 & 30.88 & 34.17 & 25.12 & 27.01 & 72.50 & 92.28 & 42.81 & 5.83 & 99.50 & 61.48 & 50.90 & \textbf{49.03} \\

\midrule
\midrule
\addlinespace


\multicolumn{2}{c}{Llama3.2-3B} &
23.76 & 40.23 & 50.09 & 50.69 & 42.29 & 26.84 & 33.09 & 24.30 & 25.21 & 72.50 & 90.11 & 42.58 & 3.00 & 96.50 & 56.22 & 56.52 & 45.87 \\
\midrule

\multirow{4}{*}{H2O} &
2K & 1.63 & 19.96 & 20.20 & 18.02 & 19.56 & 2.88 & 0.78 & 1.55 & 15.97 & 41.00 & 21.97 & 9.83 & 0.50 & 0.50 & 39.71 & 13.91 & 14.25 \\

&
4K & 2.92 & 31.94 & 33.23 & 24.49 & 28.08 & 7.55 & 5.44 & 6.30 & 22.77 & 53.00 & 38.85 & 20.33 & 1.50 & 7.50 & 51.23 & 22.94 & 22.38 \\

&
6K & 4.62 & 38.81 & 39.06 & 34.66 & 35.52 & 15.21 & 10.51 & 10.01 & 24.25 & 61.50 & 53.23 & 27.37 & 0.50 & 13.00 & 54.55 & 32.29 & 28.44 \\

&
8K & 9.65 & 39.66 & 43.20 & 38.09 & 40.41 & 21.46 & 17.80 & 13.28 & 24.67 & 70.00 & 64.30 & 32.19 & 2.00 & 24.50 & 55.00 & 39.09 & 33.46 \\
\midrule

\multirow{4}{*}{TOVA} &
2K & 17.14 & 30.14 & 32.44 & 35.96 & 30.05 & 13.08 & 26.15 & 19.70 & 25.04 & 56.50 & 87.81 & 40.48 & 2.50 & 11.50 & 55.51 & 52.36 & 33.52 \\

&
4K & 20.52 & 39.53 & 42.47 & 44.12 & 38.42 & 18.22 & 29.36 & 21.36 & 24.96 & 63.50 & 88.98 & 41.50 & 3.00 & 23.50 & 55.72 & 56.66 & 38.24 \\

&
6K & 20.22 & 39.78 & 45.86 & 49.08 & 41.54 & 20.43 & 30.50 & 22.17 & 25.11 & 66.50 & 89.00 & 42.50 & 4.00 & 46.50 & 55.57 & 57.53 & 41.02 \\

&
8K & 21.08 & 40.67 & 49.07 & 48.69 & 41.93 & 23.05 & 31.64 & 22.85 & 25.21 & 69.00 & 89.25 & 42.19 & 2.50 & 71.00 & 55.77 & 57.47 & 43.21 \\
\midrule

\multirow{4}{*}{Sink} &
2K & 16.85 & 30.69 & 26.58 & 33.26 & 25.27 & 13.82 & 26.74 & 19.15 & 25.15 & 65.00 & 86.17 & 40.79 & 1.50 & 19.50 & 56.65 & 52.73 & 33.74 \\

&
4K & 19.46 & 38.61 & 36.22 & 41.97 & 35.84 & 13.37 & 29.34 & 20.19 & 25.06 & 71.00 & 88.06 & 41.31 & 2.50 & 35.50 & 56.48 & 52.43 & 37.96 \\

&
6K & 19.33 & 40.29 & 37.95 & 46.48 & 40.29 & 15.31 & 30.43 & 21.35 & 25.14 & 71.50 & 88.93 & 42.04 & 3.50 & 47.00 & 56.55 & 54.11 & 40.01 \\

&
8K & 20.15 & 40.02 & 41.94 & 48.15 & 42.24 & 16.01 & 31.64 & 22.10 & 25.20 & 73.00 & 89.26 & 42.37 & 3.50 & 62.50 & 56.86 & 56.63 & 41.97 \\

\midrule

\multirow{4}{*}{SnapKV} &
2K & 17.38 & 31.37 & 31.48 & 37.77 & 30.05 & 11.54 & 27.03 & 19.93 & 24.97 & 59.00 & 88.13 & 40.48 & 3.50 & 32.50 & 56.32 & 55.91 & \underline{35.46} \\

&
4K & 19.85 & 39.22 & 39.86 & 46.70 & 37.98 & 16.64 & 29.79 & 21.21 & 25.01 & 65.50 & 89.35 & 40.95 & 2.50 & 62.50 & 55.74 & 56.88 & \underline{40.60} \\

&
6K & 20.83 & 39.65 & 44.48 & 49.30 & 40.18 & 20.28 & 31.27 & 22.73 & 25.09 & 69.00 & 89.95 & 41.47 & 4.00 & 85.00 & 55.69 & 57.82 & \underline{43.55} \\

&
8K & 20.49 & 40.80 & 48.16 & 48.78 & 41.65 & 24.79 & 31.81 & 23.46 & 25.17 & 70.00 & 90.17 & 41.99 & 5.00 & 94.00 & 55.77 & 57.29 & \underline{44.96} \\

\midrule

\multirow{4}{*}{\ourmethod{}} &
2K & 18.29 & 36.65 & 45.44 & 46.09 & 35.41 & 13.79 & 28.16 & 21.45 & 25.01 & 60.00 & 85.24 & 37.00 & 1.00 & 60.50 & 54.13 & 42.01 & \textbf{38.14} \\

&
4K & 22.34 & 40.60 & 49.15 & 50.14 & 40.30 & 21.65 & 31.38 & 23.44 & 25.06 & 66.50 & 87.92 & 41.41 & 2.50 & 88.50 & 55.55 & 52.24 & \textbf{43.67} \\

&
6K & 22.29 & 40.68 & 50.14 & 51.74 & 42.19 & 24.83 & 32.39 & 23.53 & 25.19 & 71.00 & 90.02 & 42.00 & 3.00 & 95.00 & 55.86 & 54.39 & \textbf{45.27} \\

& 
8K & 22.41 & 40.77 & 50.10 & 49.83 & 43.58 & 28.09 & 32.78 & 23.60 & 25.17 & 72.00 & 90.17 & 42.46 & 3.50 & 96.50 & 55.85 & 55.65 & \textbf{45.78} \\

\bottomrule
\end{tabular}
}
\vspace{-5mm}
\end{center}
\end{table*}
\subsection{LongBench}
\label{subsec:longbench}
LongBench \cite{bai-etal-2024-longbench} is a bilingual, multi-task benchmark suite for LLMs, providing a comprehensive stress test for long prompt inputs. 
LongBench is useful for evaluating cache eviction methods in a resource constrained environments with a fixed memory budget: 51\% of prompts are longer than the largest KV cache size of 8K. For cache budgets of 6k and 8k tokens, prompts in LongBench are compressed by 33\% and 23\% respectively on average, (see \cref{appendix:subsec:longbench-statistics} for more detail.)

\cref{table:longbench_reduced} summarizes the evaluation results of Llama 3.1-8B-Instruct and Llama 3.2-3B-Instruct on the English subset of LongBench with 2K, 4K, 6K, and 8K cache budgets using various eviction policies with block prompt processing enabled with $B=128$. As shown in \cref{table:longbench_reduced}, \ourmethod{} outperforms other eviction strategies across most tasks, even demonstrating better performance with smaller cache budgets.
\ourmethod{} shows significant 
a improvement on the PassageRetrieval-en (PR-en) dataset, which tests whether long-term dependencies within a long prompt can be correctly recognized \citep{bai-etal-2024-longbench}, while achieving near full-context model performance even with the smallest budget. Adding a sliding window to \ourmethod{} improves coding task performance (\cref{table:longbench_full_rotated_sliding_window}).
We observed similar trends in the full LongBench task suite as shown in \cref{table:longbench_full_rotated} and in the additional results in \cref{appendix:longbench-extended}.

\ourmethod{} exhibits similar or better performance compared to competing methods. 
Notably, the attention-based methods (e.g., H2O, TOVA, and SnapKV) show significant performance improvements over the $B=128$ case.
This result supports our hypothesis: an eviction scheme robust to changes in the scope of comparison among tokens is essential in memory constrained environments where token-wise attention weight can't be fully materialized.

\paragraph{Additional Results} We present the full evaluation results in \cref{table:longbench_full_rotated} and more complete comparisons on LongBench in \cref{appendix:longbench-extended}, such as: standard prompt processing with a single large block (i.e. $B=\infty$) in \cref{table:longbench_unrestricted_reduced}; eviction method performance with Qwen 2.5-3B/7B-Instruct in \cref{table:longbench_full_rotated_qwen}; performance behavior with block sizes $B=[64, 256]$ in \cref{table:longbench_full_rotated_block_64_256}; and performance on \ourmethod{} combined with a sliding window as described in \cref{subsec:keydiff-sw}. We also compare against the $L_2$-norm minimizing eviction method of \cite{devoto2024simple} in \cref{table:longbench_full_rotated}.

\subsection{Math-500 Reasoning Benchmark} 
\label{subsec:reasoning}
Reasoning is an important long-context task for LLMs. Unlike other long-context use cases, reasoning typically involves a relatively short prompt followed by a long generation, which presents unique challenges for token eviction methods. To evaluate the effectiveness of token eviction methods, we apply \ourmethod{} and SnapKV to the DeepSeek-R1-Distill-Qwen-7B and Llama-8B distilled models \cite{guo2025deepseek}, and assess their performance on the Math-500 reasoning benchmark \cite{hendrycks2021measuring}. Surprisingly, we found that Llama equipped with \ourmethod{} and a moderate KV cache budget performs comparably to, or slightly better than, the eviction-free baseline, while also outperforming SnapKV. We kindly refer the reader to \cref{appendix:math500} for additional details on the reasoning task evaluation.

\subsection{Ablation Study}
\label{subsec:ablation}

We evaluate the design choices of \ourmethod{}, including the similarity metrics and the choice of the anchor vector, and validate the efficacy of \ourmethod{}.
We provide a full description of the test setup in \cref{appendix:ablation-study} and summarize the findings here:
\begin{itemize}[leftmargin=*]
    \item \ourmethod{} anchor choice does not greatly impact benchmark accuracies (See \cref{table:anchor_vector_ablation}).
    \item \ourmethod{} using cosine similarity as the distance metric outperforms other metrics (See \cref{table:similarity_metric_ablation})
\end{itemize}

\subsection{Latency and Complexity}
\label{subsec:inf-latency}
Additionally, in order to demonstrate that \ourmethod{} decreases end-to-end inference latency, we measured time to first token for the Llama 3.2 3B instruct model using different block prompt processing sizes and cache strategies. These results are visualized in \cref{fig:ttft_eager,fig:ttft_flash}. 
Since \ourmethod{} does not require attention weight materialization, FlashAttention \cite{dao2022flashattention} can be used, resulting in up to 30\% lower latency than TOVA and SnapKV.
We compare the complexity of \ourmethod{} with competitors in \cref{appendix:subsec:runtime-and-memory-complexity} and perform a complete FLOP count in \cref{sec:keydiff-flop-count}.

\begin{figure*}[t!]
     \centering
     \begin{subfigure}[b]{0.99\linewidth}
         \centering
         \includegraphics[width=\linewidth]{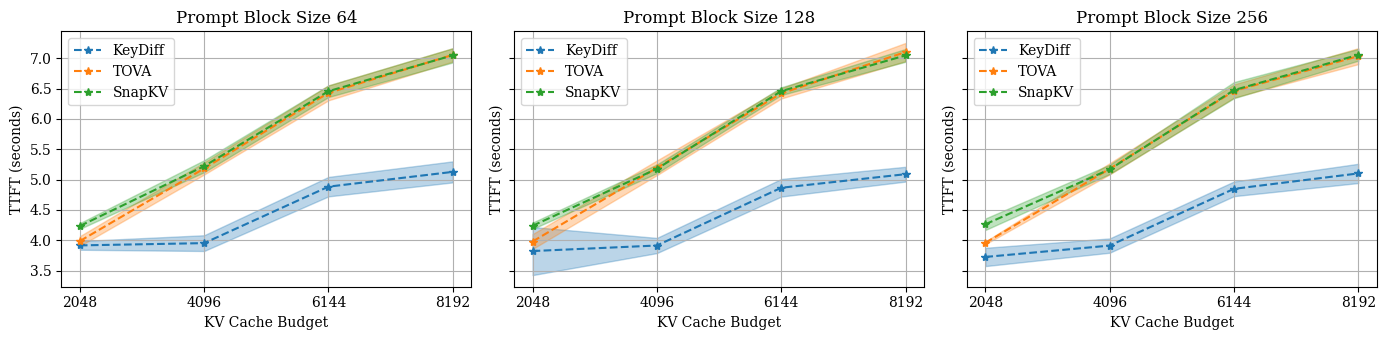}
     \end{subfigure}
     \hfill
     \vspace{-2mm}
     \caption{Time-to-first-token (TTFT) for Llama 3.2-3B using Flash Attention with different eviction strategies with block prompt processing sizes $64$, $128$, and $256$.}
    \label{fig:ttft_flash}
    \vspace{-5mm}
\end{figure*}
\section{Related Work}
\label{sec:related_work}
\paragraph{Sparse Attention} LLMs often exhibit high attention sparsity, where a small subset of keys receives a significant proportion of attention scores. This characteristic allows sparse approximation techniques to reduce the computational cost of attention. Similar to PagedAttention \cite{kwon2023efficient}, \citet{tang2024quest} estimates the importance of a page (a contiguous set of keys) to a given query, whereas \citet{rehg2024kv} further refined the budgets in a per-head manner. On the contrary, sample-based methods \cite{zhu2024sampleattention, ribarsparq} attempt to approximate token importance by inspecting the attention scores from the last few queries or certain query channel dimensions.
Despite their effectiveness in reducing computational costs, these methods do not address the memory overhead of the KV cache, which typically retains all KVs.

\paragraph{KV Cache Compression}
Different approaches to compress the KV cache include architecture modification such as GQA \cite{ainslie2023gqa}, which shares a KV cache across a small number of heads. Other techniques to compress the KV cache include quantization such as in \cite{hooper2024kvquant,liu2024kivi,zhang2024kv} in which the authors use various techniques to take advantage of existing patterns to efficiently quantize and compress the KV cache. More related to our work \cite{yang2024no} uses a scoring mechanism to determine the precision of the quantization for different tokens. 

\paragraph{Token Eviction Methods} Unlike the sparse attention and KV cache compression methods, eviction methods \textit{evict} KVs from the cache to reduce the size of the KV cache. 
As discussed in \cref{subsec:kv_cache_eviction}, the majority of the token eviction methods employ their own rules to decide the importance of the tokens by manipulating the attention score $A$. For example, by appropriately choosing the aggregation functions $\phi(A)$ of \cref{eq:eviction_policy}, we can obtain existing attention-based eviction methods as discussed in \cref{appendix:subsec:attention-based-eviction-method}.
 Attention-based eviction may be a better choice when the entire prompt is being processed at once, as the eviction can be done by assessing the importance of all tokens simultaneously.
However, computing the full attention score of long prompts could be prohibitively expensive in resource-constrained environments.
\section{Conclusion}
\label{sec:conclusion}
Inspired by our observation that distinctive keys tend to have high attention scores, we propose \ourmethod{}, a training-free KV cache eviction method based on key similarity that enables large language models to operate in memory and compute constrained environments. We justify \ourmethod{} by showing that it minimizes the pairwise cosine similarity among keys in the KV cache, maximizing the aforementioned diversity. \ourmethod{} significantly outperforms state-of-the-art KV cache eviction methods under similar memory constraints, with only a 1.5\% and 0.04\% accuracy drop from the non-evicting baseline while achieving 33\% and 23\% KV cache memory reduction on LongBench. Similar to other token eviction methods, \ourmethod{} is primarily designed and evaluated for the GQA attention mechanism used in models such as Llama and Qwen. In future work, we plan to extend \ourmethod{} for seamless integration with other attention variants, such as Multi-Head Latent Attention \cite{guo2025deepseek}.

\bibliographystyle{plainnat}
\bibliography{reference}

\newpage
\appendix
\onecolumn
\renewcommand \thepart{}
\renewcommand \partname{}

\newpage
\rule[0pt]{\columnwidth}{3pt}
\begin{center}
    \huge{\ourmethod{} \\
    Supplementary Material}
\end{center}
\vspace*{3mm}
\rule[0pt]{\columnwidth}{1pt}
\vspace*{-.5in}

\appendix
\addcontentsline{toc}{section}{}
\part{}
\parttoc

\renewcommand{\theequation}{A.\arabic{equation}}
\setcounter{equation}{0}

\section{Extended Related Work}
\subsection{Attention-based eviction methods}
\label{appendix:subsec:attention-based-eviction-method}
In this section, we provide a unified framework to understand prominent attention-based eviction methods. As mentioned in \cref{eq:attn_based_eviction_policy}, we can specify attention-based eviction methods under the unified framework with proper selection of the aggregation function $\phi(A)$ as follows:
\begin{itemize}[leftmargin=*]
    \item TOVA \cite{oren2024transformers}: $\phi^{\mathrm{TOVA}}(A) = A_{-1, :}$,
    \item H2O \cite{zhang2024h2o}: $\phi^{\mathrm{H2O}}(A) = A_{\mathrm{prev}} + A^{\top}1$,
    \item SnapKV \cite{xiaoefficient}: $\phi^{\mathrm{SnapKV}}(A) = (A^{\top}1) \ast K$, where $K$ is a vector of $\frac{1}{k}$ and $k$ is the kernel size of average smoothing.     
\end{itemize}

\section{Runtime and Memory Complexity}
\label{appendix:subsec:runtime-and-memory-complexity}
We analyze the runtime and memory complexity for the prominent KV cache eviction algorithms in \cref{table:runtime-and-memory}. For a given block size $B$ and cache budget $N$, \ourmethod{} requires $\mathcal{O}(N+B)$ runtime and memory. 
The same holds true for TOVA, since it only requires computing the bottom row of $A$.
Sink attention retains the $k$ first tokens in the input sequence, followed by a sliding window of size $L$, resulting in $\mathcal{O}(k+L) = O(N)$ memory and runtime, since $k+L$ equals the chosen cache budget.
SnapKV computes attention over a sliding window of size $L$ against $N+B$ keys from the incoming block and the key cache, so the memory and runtime complexity is $\mathcal{O}((N+B)L)$. 
H2O accumulates attention weights over all tokens, and computes the attention over the current block, so it will require $\mathcal{O}(NB+B^2)$ memory overhead and runtime.
We summarize these details in \cref{table:runtime-and-memory}

\begin{table}[h]
\caption{\textbf{Runtime and memory complexity of token eviction methods.}}
\label{table:runtime-and-memory}
\begin{center}
\begin{tabular}{lll} 
\toprule
 & Runtime Complexity & Memory Complexity \\
\midrule
\ourmethod{} & $\mathcal{O}(N+B)$ & $\mathcal{O}(N+B)$ \\ 
TOVA & $\mathcal{O}(N+B)$ & $\mathcal{O}(N+B)$ \\ 
H2O & $\mathcal{O}(NB+B^2)$ & $\mathcal{O}(NB+B^2)$ \\ 
SnapKV & $\mathcal{O}((N+B)L)$ & $\mathcal{O}((N+B)L)$ \\ 
Sink & $\mathcal{O}(N)$ & $\mathcal{O}(1)$ \\ 
\bottomrule
\end{tabular}
\end{center}
\end{table}
 \subsection{FLOP count of KeyDiff}
 \label{sec:keydiff-flop-count}
The bulk of the computation in KeyDiff (neglecting the `topk` operator) is the following two expressions:
\begin{itemize}[leftmargin=*]
    \item $\mu(\hat{K}) = \frac{1}{n}\sum_{i=1}^n \frac{k_i}{\|k_i\|}$
    \item $s_i = \textrm{CosSim}(\mu(\hat{K}), k_i) = \frac{\mu(\hat{K})\cdot k_i}{\max(\|\mu(\hat{K})\|\cdot\|k_i\|, \,\epsilon)},\quad i=1,\dots, n$
\end{itemize}
We will count the total number of additions, multiplications, square roots and divisions required by KeyDiff separately, since division and square root implementation are hardware-dependent, then assign weights to each operation at the end for a final count. Norms are assumed to be 2-norms. We count the FLOPs required for each operation as follows:
\begin{itemize}[leftmargin=*]
\item $\|k_i\| = \sqrt{ k_i \cdot k_i}$: since $k_i \in \mathbb{R}^d$: $d$ multiplications, $d-1$ additions, one square root. Repeating for each $i$, this contributes $nd$ multiplications, $n(d-1)$ additions, and $n$ square roots.
\item $\frac{k_i}{c}$: naively, $d$ divisions, but can be rewritten as one division and $d$ multiplications. Repeating for each $i$, this contributes $nd$ multiplications and $n$ divisions.
\item $\frac{1}{n}\sum_{i=1}^n c_i$, for $c\in \mathbb{R}^d$: one division, $(n-1)d$ additions.
\end{itemize}
 
Combining the above, we can compute the anchor vector using $2nd$ multiplications, $2nd-n-d$ additions, $n$ square roots and $n+1$ divisions. 

To compute the cosine similarity score, we have from above that $\mu(\hat{K}) \cdot k_i$ requires $nd$ multiplications and $n(d-1)$ additions. Also from above, we have that computing $\|\mu(\hat{K})\|$ requires $d$ multiplications, $d-1$ additions and one square root. We reuse the computation of $\|k_i\|$ from the previous step and compute $\|k_i\|\|\mu(\hat{K})\|$ in $n$ multiplications and $\max(\|k_i\|\|\mu(\hat{K})\|, \,\epsilon)$ with more $n$ additions (assuming boolean comparison equals addition in cost). We can then divide through to compute $\frac{\mu(\hat{K}) \cdot k_i}{\max(\|k_i\|\|\mu(\hat{K})\|, \, \epsilon)}$ with $n$ divisions. Therefore, computing the cosine similarity between the anchor and each key requires $nd+d+n$ multiplications, $nd+d-1$ additions, $n+1$ divisions and one square root.

Adding everything up, KeyDiff requires:
\begin{enumerate}[leftmargin=*]
\item $3nd+d+n$ multiplications,
\item $3nd-n-1$ additions,
\item $2n+2$ divisions,
\item $n+1$ square roots,
\end{enumerate}

If, based on x86 instruction tables, we declare additions and square roots cost one FLOP (i.e. can be computed in one cycle), multiplications cost three FLOPs, division is roughly 47 FLOPs, we arrive at a final FLOP count of:
\begin{equation}
3(3nd+d+n)+ (3nd-n-1)+ 47*(2n+2)+ (n+1) =
(12d+97)n+3d+94.
\end{equation}

This is linear in the number of keys $n$ with a small constant, relative to the quadratic complexity of the attention operator.

\section{\ourmethod{}: A Theoretical Perspective}
\subsection{Additional PCA Visualizations}
\label{appendix:subsec:pca-visualizations}
In order to demonstrate the phenomena in \cref{fig:pca-plot} persists across all heads and layers, we have included several more visualizations as seen in \cref{fig:pca-plot-27-4}, \cref{fig:pca-plot-20-0}, and \cref{fig:pca-plot-8-1}. 
\begin{figure*}[ht!]
     \centering
     \begin{subfigure}[b]{0.24\linewidth}
         \centering
         \includegraphics[width=\linewidth]{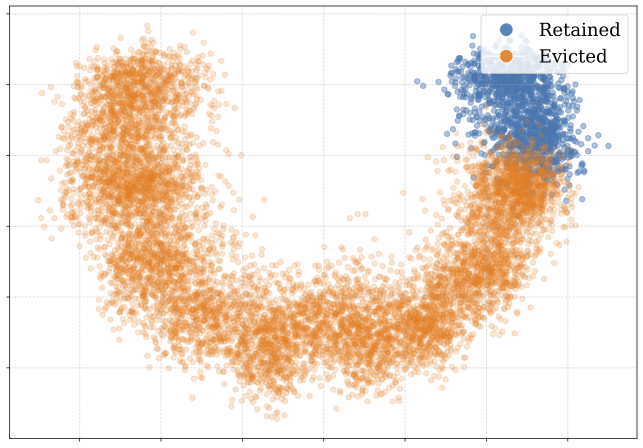}
         \caption{Sink Attention}
     \end{subfigure}
     \begin{subfigure}[b]{0.24\linewidth}
         \centering
         \includegraphics[width=\linewidth]{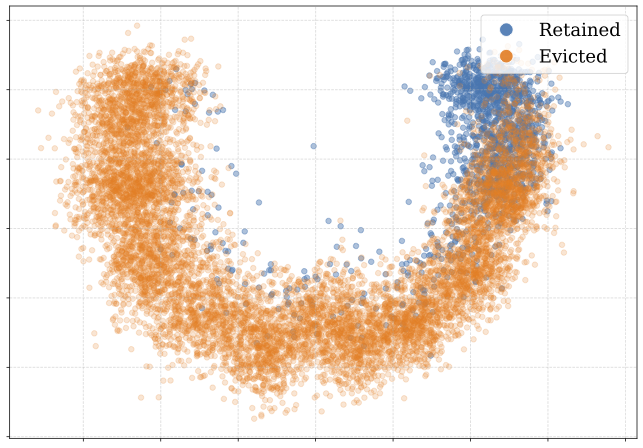}
         \caption{TOVA}
     \end{subfigure}
     \begin{subfigure}[b]{0.24\linewidth}
         \centering
         \includegraphics[width=\linewidth]{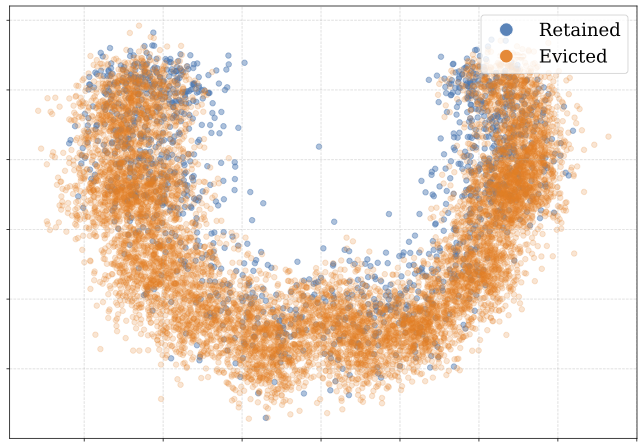}
         \caption{\ourmethod{}}
     \end{subfigure}
     \begin{subfigure}[b]{0.24\linewidth}
         \centering
         \includegraphics[width=\linewidth]{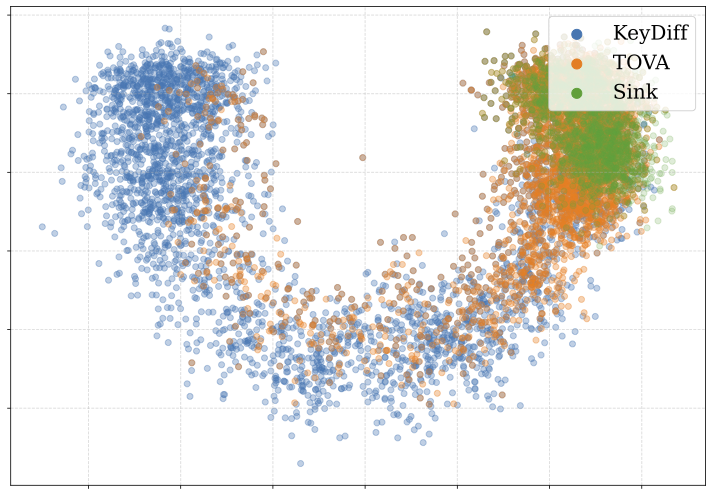}
         \caption{Retained keys only}
     \end{subfigure}
     \hfill
     \caption{\textbf{(a, b, and c)} PCA Visualizations in two dimensions of a key cache managed with Sink, TOVA, and \ourmethod{}. Retained tokens are \textcolor{blue}{blue}, while evicted tokens are \textcolor{orange}{orange}. 
     Keys are taken from layer $27$ and head $4$ of Llama3.2-3B-Instruct, and generated using the NarrativeQA dataset. \textbf{(d)} PCA visualization of the retained keys for each KV cache eviction method}
    \label{fig:pca-plot-27-4}
\end{figure*}

\begin{figure*}[ht!]
     \centering
     \begin{subfigure}[b]{0.24\linewidth}
         \centering
         \includegraphics[width=\linewidth]{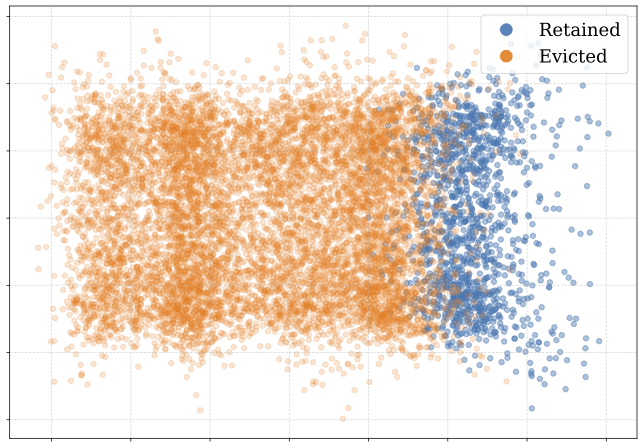}
         \caption{Sink Attention}
     \end{subfigure}
     \begin{subfigure}[b]{0.24\linewidth}
         \centering
         \includegraphics[width=\linewidth]{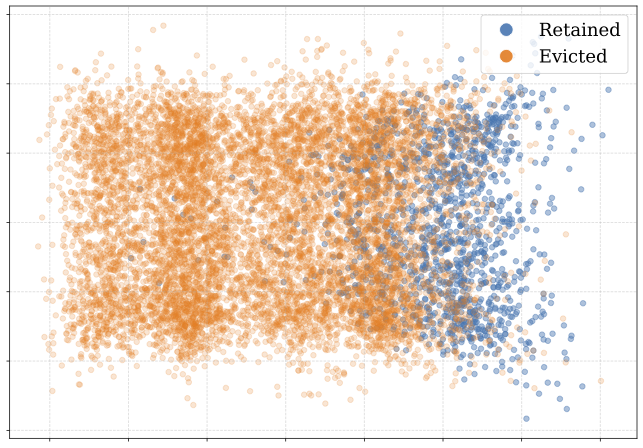}
         \caption{TOVA}
     \end{subfigure}
     \begin{subfigure}[b]{0.24\linewidth}
         \centering
         \includegraphics[width=\linewidth]{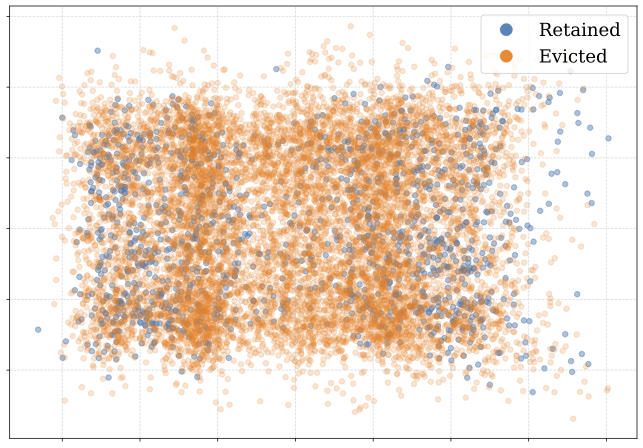}
         \caption{\ourmethod{}}
     \end{subfigure}
     \begin{subfigure}[b]{0.24\linewidth}
         \centering
         \includegraphics[width=\linewidth]{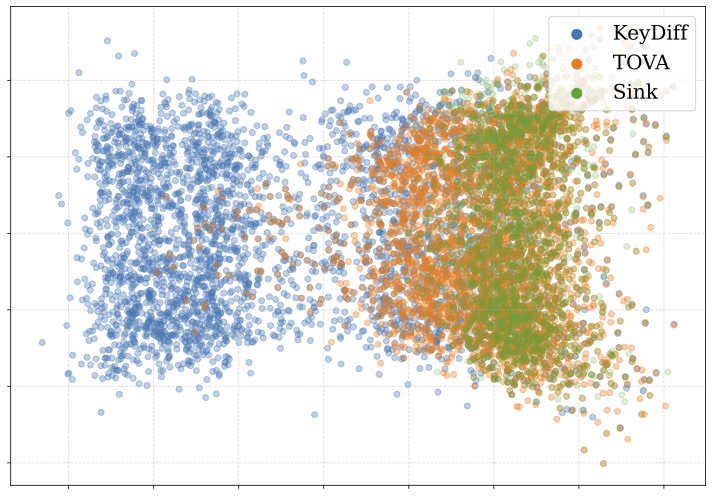}
         \caption{Retained keys only}
     \end{subfigure}
     \hfill
     \caption{Keys taken from layer $20$ and head $0$ of Llama3.2-3B-Instruct}
    \label{fig:pca-plot-20-0}
\end{figure*}

\begin{figure*}[ht!]
     \centering
     \begin{subfigure}[b]{0.24\linewidth}
         \centering
         \includegraphics[width=\linewidth]{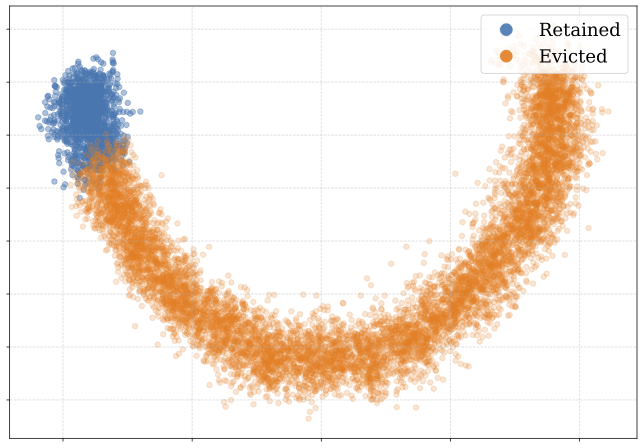}
         \caption{Sink Attention}
     \end{subfigure}
     \begin{subfigure}[b]{0.24\linewidth}
         \centering
         \includegraphics[width=\linewidth]{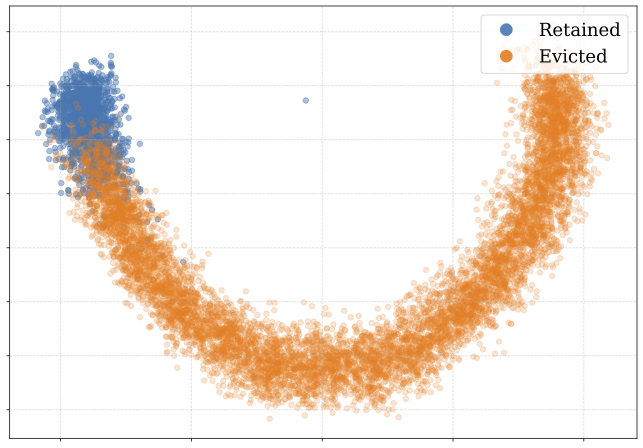}
         \caption{TOVA}
     \end{subfigure}
     \begin{subfigure}[b]{0.24\linewidth}
         \centering
         \includegraphics[width=\linewidth]{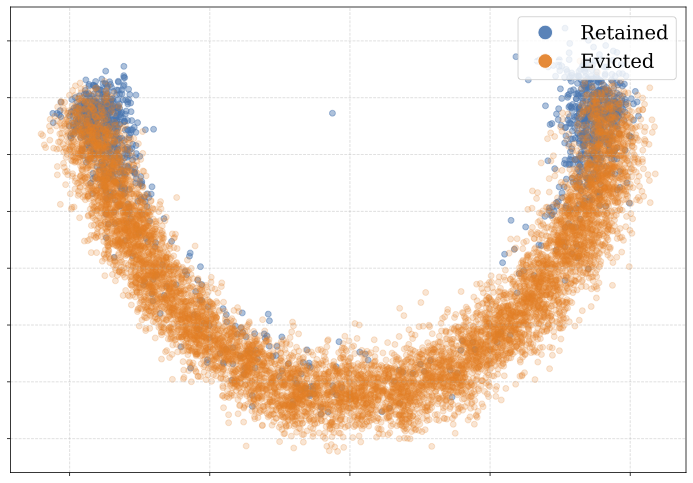}
         \caption{\ourmethod{}}
     \end{subfigure}
     \begin{subfigure}[b]{0.24\linewidth}
         \centering
         \includegraphics[width=\linewidth]{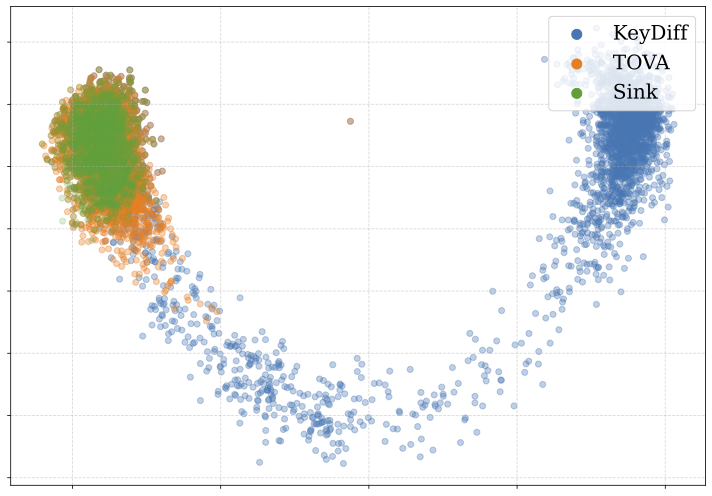}
         \caption{Retained keys only}
     \end{subfigure}
     \hfill
     \caption{Keys taken from layer $8$ and head $1$ of Llama3.2-3B-Instruct}
    \label{fig:pca-plot-8-1}
\end{figure*}

\begin{figure*}[ht!]
     \centering
     \begin{subfigure}[b]{0.24\linewidth}
         \centering
         \includegraphics[width=\linewidth]{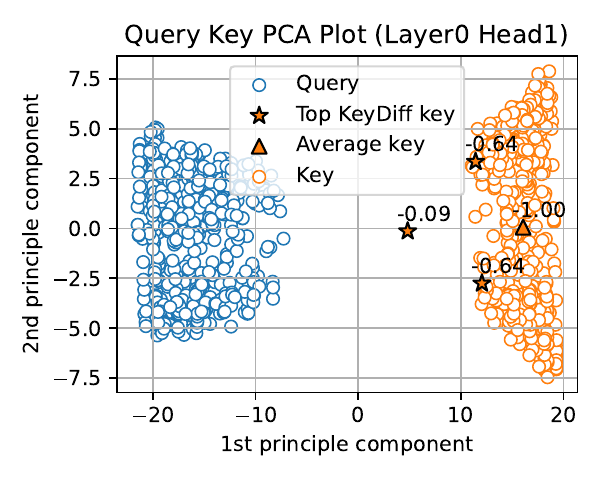}
         \caption{Layer 0 Head 1}
     \end{subfigure}
     \begin{subfigure}[b]{0.24\linewidth}
         \centering
         \includegraphics[width=\linewidth]{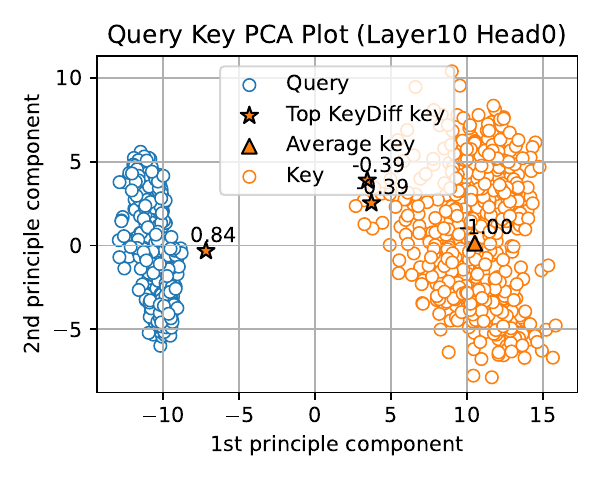}
         \caption{Layer 10 Head 1}
     \end{subfigure}
     \begin{subfigure}[b]{0.24\linewidth}
         \centering
         \includegraphics[width=\linewidth]{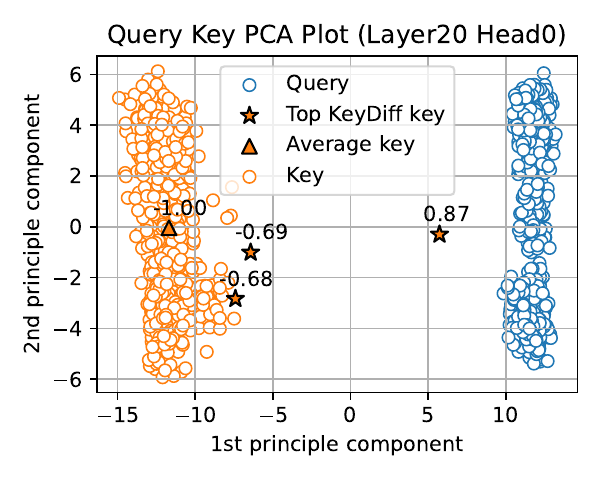}
         \caption{Layer 20 Head 1}
     \end{subfigure}
     \begin{subfigure}[b]{0.24\linewidth}
         \centering
         \includegraphics[width=\linewidth]{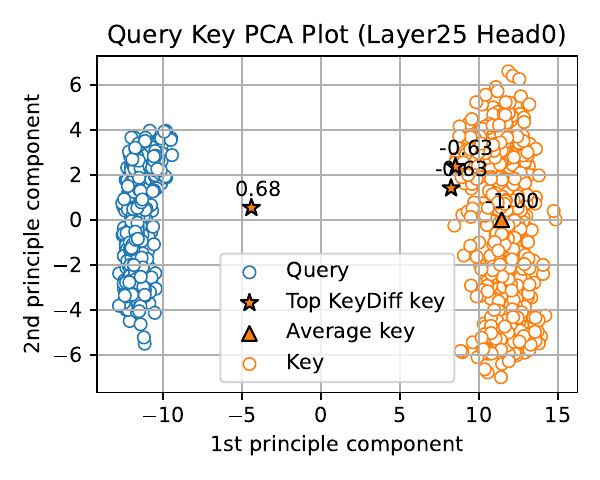}
         \caption{Layer 25 Head 1}
     \end{subfigure}
     \hfill
     \caption{PCA plots of Query and Keys from Llama-3.2-3B-Instruct}
    \label{fig:pca-plot}
\end{figure*}

\subsection{Derivation of \ourmethod{} from an Optimization Perspective}
\label{subsec:opt_deriv_keydiff}
To leverage the observation in \cref{subsec:attention_and_dissimilarity}, we minimize the sum of pairwise cosine similarities of each key retained in the cache. 
This can be formulated as a constrained optimization problem with the keys $K \in \mathbb{R}^{n \times d}$ whose element is $k_i$, and budget $N$ smaller than $n$:
\begin{mini}
    {S}{\sum\limits_{i \in S}\sum\limits_{j\in S} \frac{ k_i \cdot k_j }{\|k_i\| \|k_j\|} }{}{}
    \addConstraint{S}{\subseteq \{1,\hdots,n\} }
    \addConstraint{|S|}{=N}
    \label{eq:min_cossim_opt}
\end{mini}

This is a combinatorial optimization problem, which is difficult to solve efficiently, particularly during inference. 
However, we can relax \cref{eq:min_cossim_opt} to produce a more tractable approximation to the original problem. First, we rewrite \cref{eq:min_cossim_opt} by normalizing keys $k_i$ such that $\hat{k}_i = \frac{k_i}{\|k_i\|}$ resulting in: 
\begin{align*} \sum\limits_{i \in S}\sum\limits_{j\in S}  \hat{k}_i \cdot \hat{k}_j &= \sum\limits_{i \in S} \hat{k}_i  \cdot \left(\sum\limits_{j \in S} \hat{k}_j \right)
= \sum\limits_{i \in S} \langle \hat{k}_i , N\mu (\hat{K}_S) \rangle
\end{align*}
where $\mu(\hat{K}_S)=\frac{1}{N}\sum\limits_{j \in S} \hat{k}_j$ is the empirical mean of normalized keys in $S$. This objective requires recomputing $\mu( \hat{K}_S)$ for each candidate subset $S$. The sub-sampled mean tends to converge to the mean over the entire set, the problem can be further relaxed by replacing $\mu(\hat{K}_S)$ with $\mu(\hat{K}) = \frac{1}{n} \sum_{i=1}^{n} \hat{k}_i$. Dropping $N$ from the objective (since it doesn't affect the solution) yields: 
\begin{mini}
    {S}{\sum\limits_{i \in S}  \hat{k}_i \cdot \mu(\hat{K})  }{}{}
    \addConstraint{S}{\subseteq \{1,\hdots,n\} }
    \addConstraint{|S|}{= N}
    \label{eq:relaxed_min_cossim_opt}
\end{mini}
The optimal solution of \cref{eq:relaxed_min_cossim_opt} can be found by sorting tokens using their cosine similarity with $\mu(\hat{K})$ and selecting the smallest $N$, leading to the algorithm described in \cref{eq:efficient-keydiff}.

\paragraph{Key Cache Diversity and \ourmethod{}}
To empirically verify \ourmethod{}'s ability to retain diverse keys, we apply PCA to the keys computed in an attention block of Llama 3.2-3B-Instruct after evaluating a long context prompt.
We visualize the distribution of  retained and evicted keys by applying sink attention \cite{xiaoefficient}, TOVA \cite{oren2024transformers}, and \ourmethod{} as eviction policies in  \cref{fig:pca-plot}.
Visual observation reveals the tokens retained by sink attention and TOVA tend to tightly cluster together while \ourmethod{}'s retained tokens are more evenly distributed.
 This observation generalizes across heads and layers, as shown in \cref{appendix:subsec:pca-visualizations}.

We also visualize the log determinant of the Gram matrix of the key cache $\log(\det(KK^T))$ generated using different eviction policies in \cref{fig:gram_matrix}. This quantity corresponds to the volume of space spanned by the keys in $\mathcal{C}$. 
The distribution of volumes for \ourmethod{} attains 
higher values, indicating that the retained keys are more distinctive than TOVA and sink, corroborating the results of \cref{fig:pca-plot}. Details of how these plots were generated are discussed in \cref{appendix:subsec:sec3-setup}.

\begin{figure}[ht!]
     \vspace{-2mm}
     \centering
     \begin{subfigure}[b]{.5\linewidth}
         \centering
         \includegraphics[width=\linewidth]{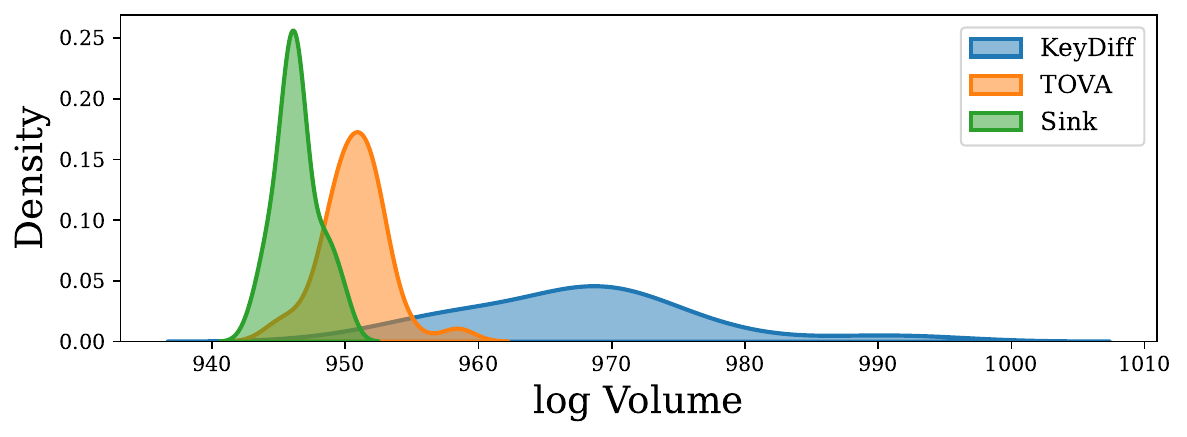}
     \end{subfigure}
     \vspace{-1mm}
     \caption{Distribution of
     $\log\left(\textrm{det}\left(K K^T\right)\right)$
     from the Qasper dataset in LongBench. Larger values mean more of the key space is spanned by the key cache. \ourmethod{} retains keys that span a greater volume of the ambient space than TOVA or sink attention.}
    \label{fig:gram_matrix}
\vspace{-4mm}
\end{figure}

\subsection{Attention Sinks and Approximate Collinearity}
\label{appendix:theoretical-justification}
\begin{figure*}[ht!]
     \vspace{-2mm}
     \centering
     \begin{subfigure}[b]{0.49\linewidth}
         \centering
         \includegraphics[width=\linewidth]{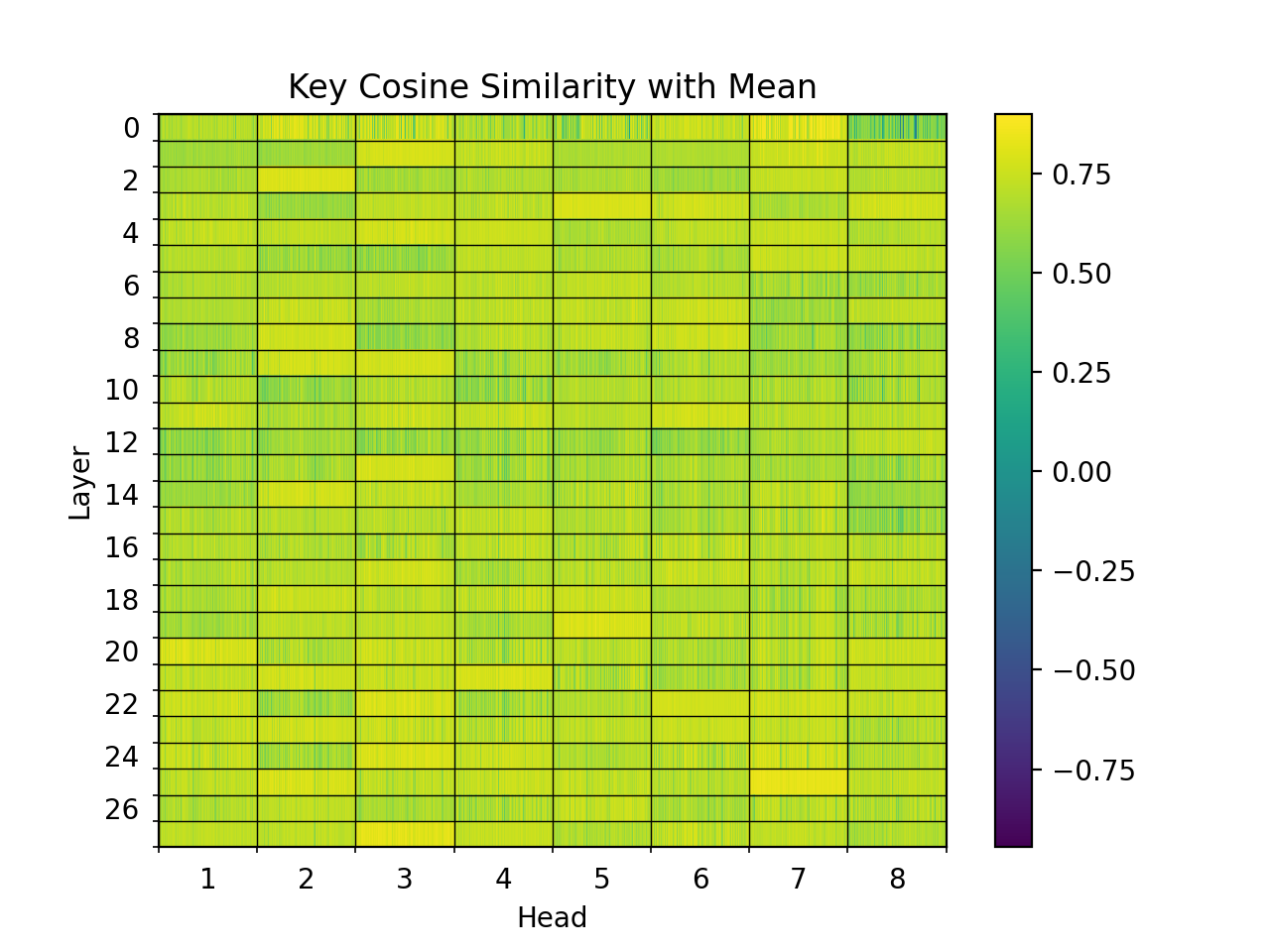}
         \caption{Keys}
     \end{subfigure}
     \begin{subfigure}[b]{0.49\linewidth}
         \centering
         \includegraphics[width=\linewidth]{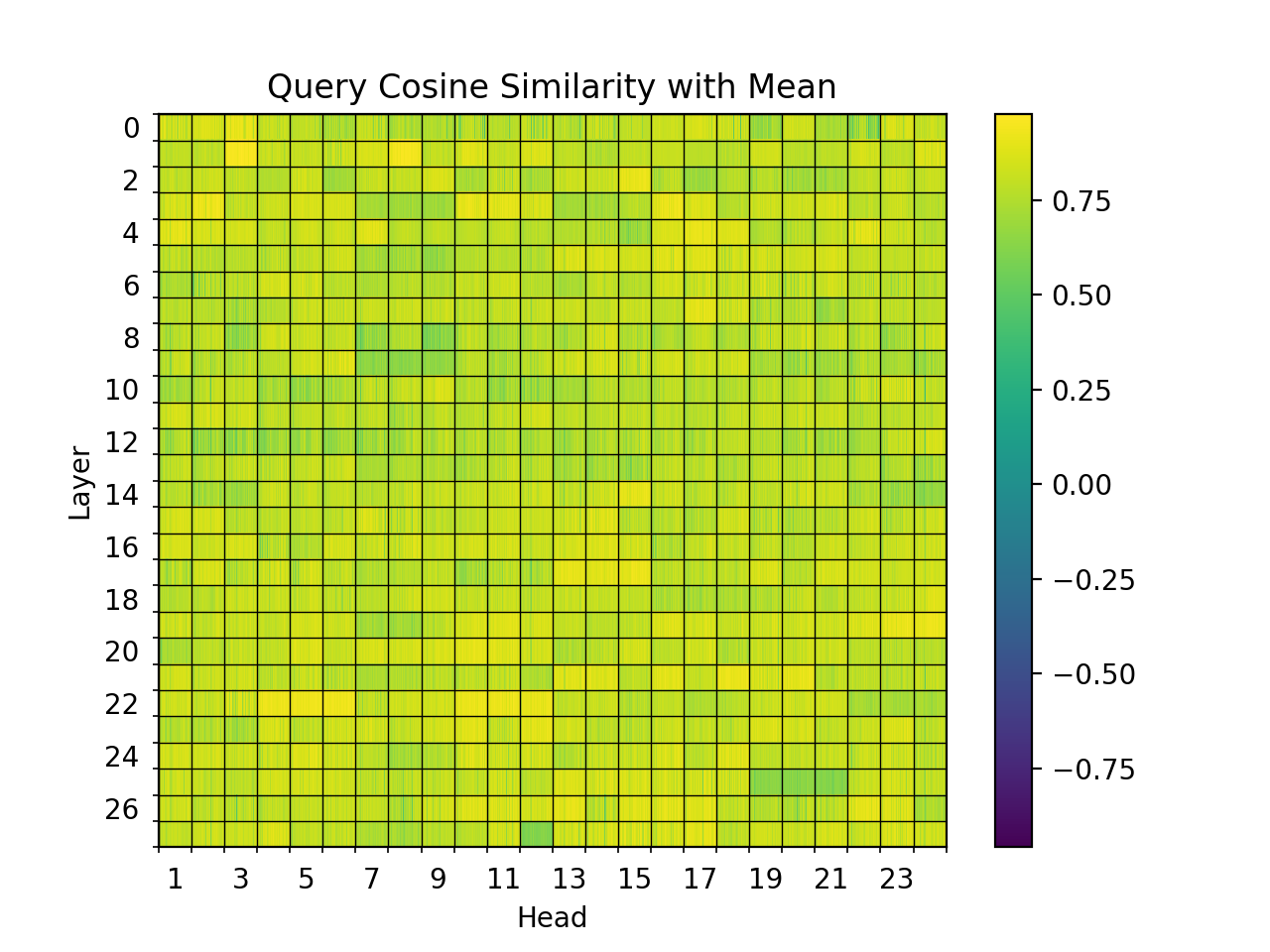}
         \caption{Queries}
     \end{subfigure}
     \hfill
     \vspace{-2mm}
     \caption{Cosine similarity between keys and their mean (left) and queries and their mean (right) across heads and layers}
    \label{fig:key_query_dist_plot}
\end{figure*}

\begin{figure*}[ht!]
     \vspace{-2mm}
     \centering
         \includegraphics[width=.7\linewidth]{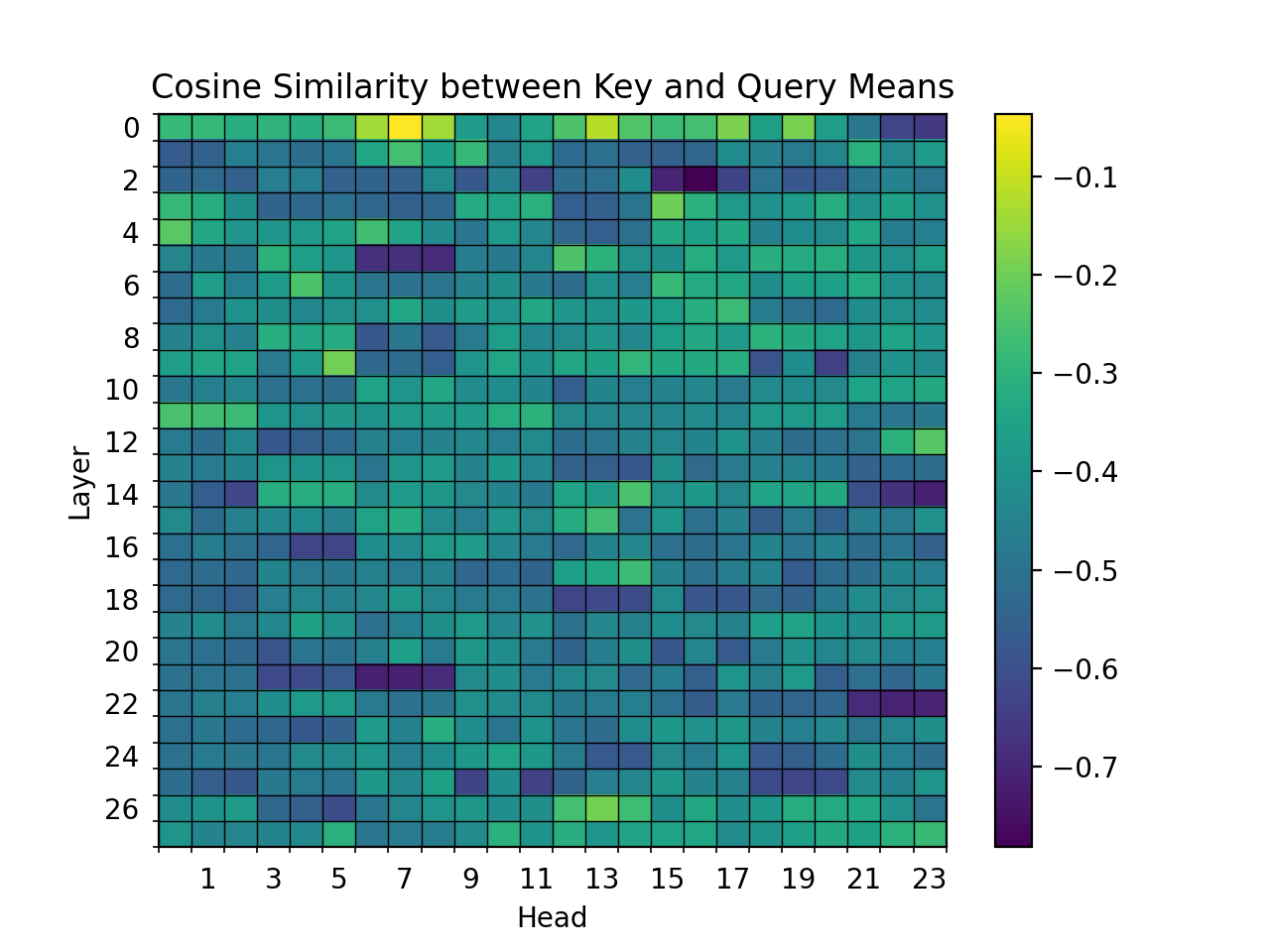}
     \caption{Cosine similarity of mean key and mean query for each head and layer.}
    \label{fig:mean_key_query_cossim}
\end{figure*}

To better understand why there exists a negative correlation between cosine similarity between keys and attention scores, we look to recent research that seeks to the importance of attention sinks in decoder-based LLMs. The authors in \cite{barbero2025llms} show that attention sinks emerge from training decoder-based LLMs since they can denoise the model and prevent rank collapse by limiting over mixing in attention heads. Moreover, attention patterns in decoder based models demonstrate that most attention logits are quite small (and almost always negative) for most keys and queries. This allows the attention sink to have high attention activation, preventing over mixing in addition to allowing the heads to specialize and selectively identify important tokens.

At the same time, the results in \cite{godey2024anisotropy} suggest that hidden states, keys and queries are all approximately collinear in the sense that $\textrm{CosSim}(x_i, x_j) \gg 0$. In geometric terms, this means that most key tokens and query tokens lie in the same direction in Euclidean space. Our own results, seen in \cref{fig:key_query_dist_plot} demonstrate that this is the case for both keys and queries. Our experiments show that most keys and queries lie within a small angular distance from the mean key and query. More than this, we see that across all heads, the mean key and mean query have negative cosine similarity \cref{fig:mean_key_query_cossim}. Moreover, as seen in \cref{fig:key_query_l2_dist}, we find that the norms of keys and queries are tightly clustered around a relatively fixed value. This means that variations in the norm of key and query tokens have less impact on the magnitude of attention scores than their direction. These three observations indicate that most keys and queries combine to create uniformly small attention logits, and that larger attention weights are constructed by projecting keys closer to the direction of the mean query. This appears to be the fundamental mechanism through which over mixing is prevented: if most attention activations are very small, each head can increase the activations of a small number of keys across most queries selectively projecting them to be more aligned with the distribution of queries. This hypothesis is further supported by the fact that sink tokens themselves often have small norm, which results in an approximate no-op in the attention head as in \cite{barbero2025llms}, however in this case, the only way for a key corresponding to a sink token to have high attention scores is if it is as parallel as possible to the set of query tokens.

\begin{figure*}[ht!]
     \vspace{-2mm}
     \centering
     \begin{subfigure}[b]{0.49\linewidth}
         \centering
         \includegraphics[width=\linewidth]{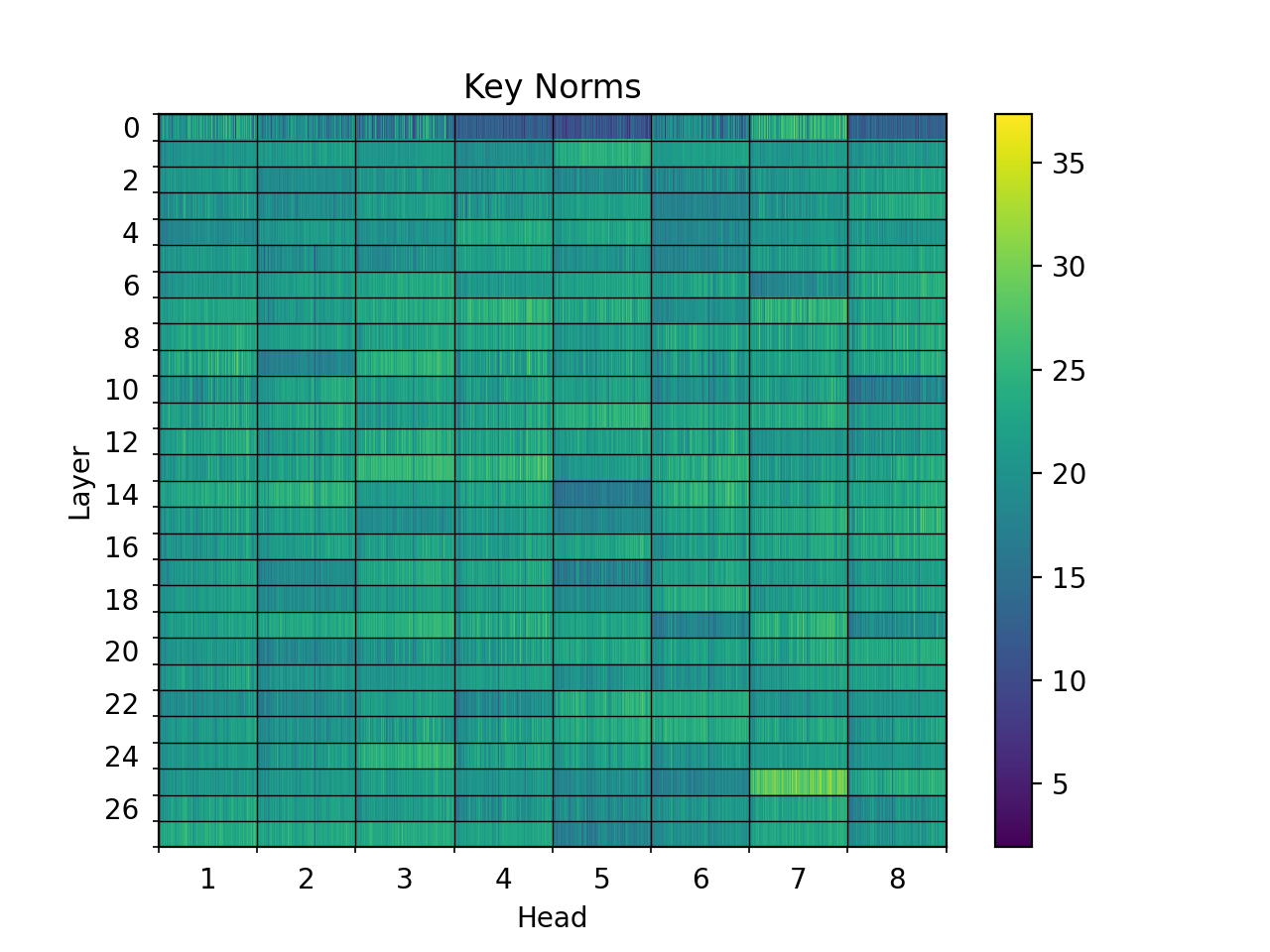}
     \end{subfigure}
     \begin{subfigure}[b]{0.49\linewidth}
         \centering
         \includegraphics[width=\linewidth]{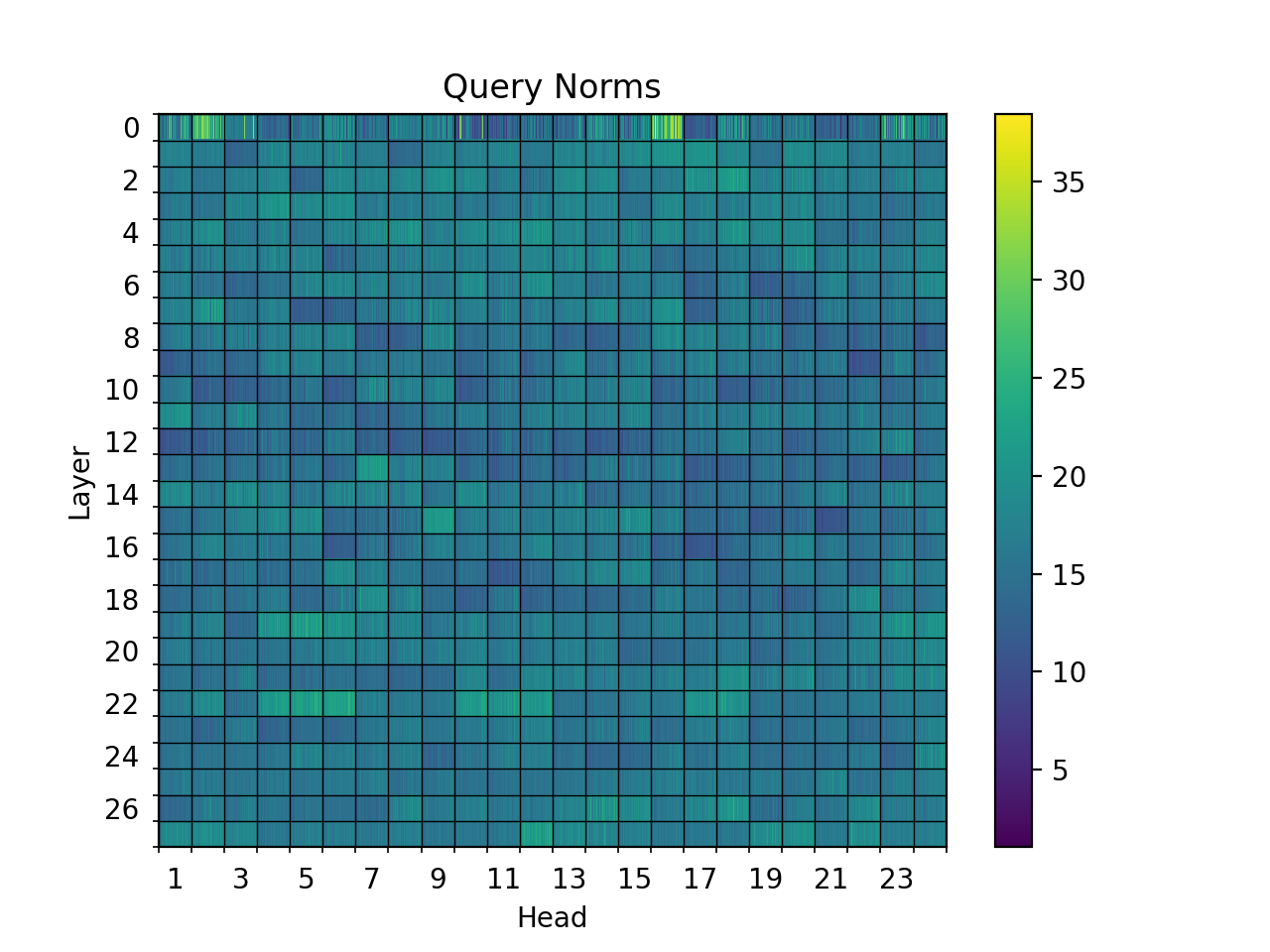}
     \end{subfigure}
     \hfill
     \caption{Distribution of $L^2$ norms for keys and queries across heads and layers.}
    \label{fig:key_query_l2_dist}
\end{figure*}

\begin{figure*}[ht!]
     \vspace{-2mm}
     \centering
         \includegraphics[width=.8\linewidth]{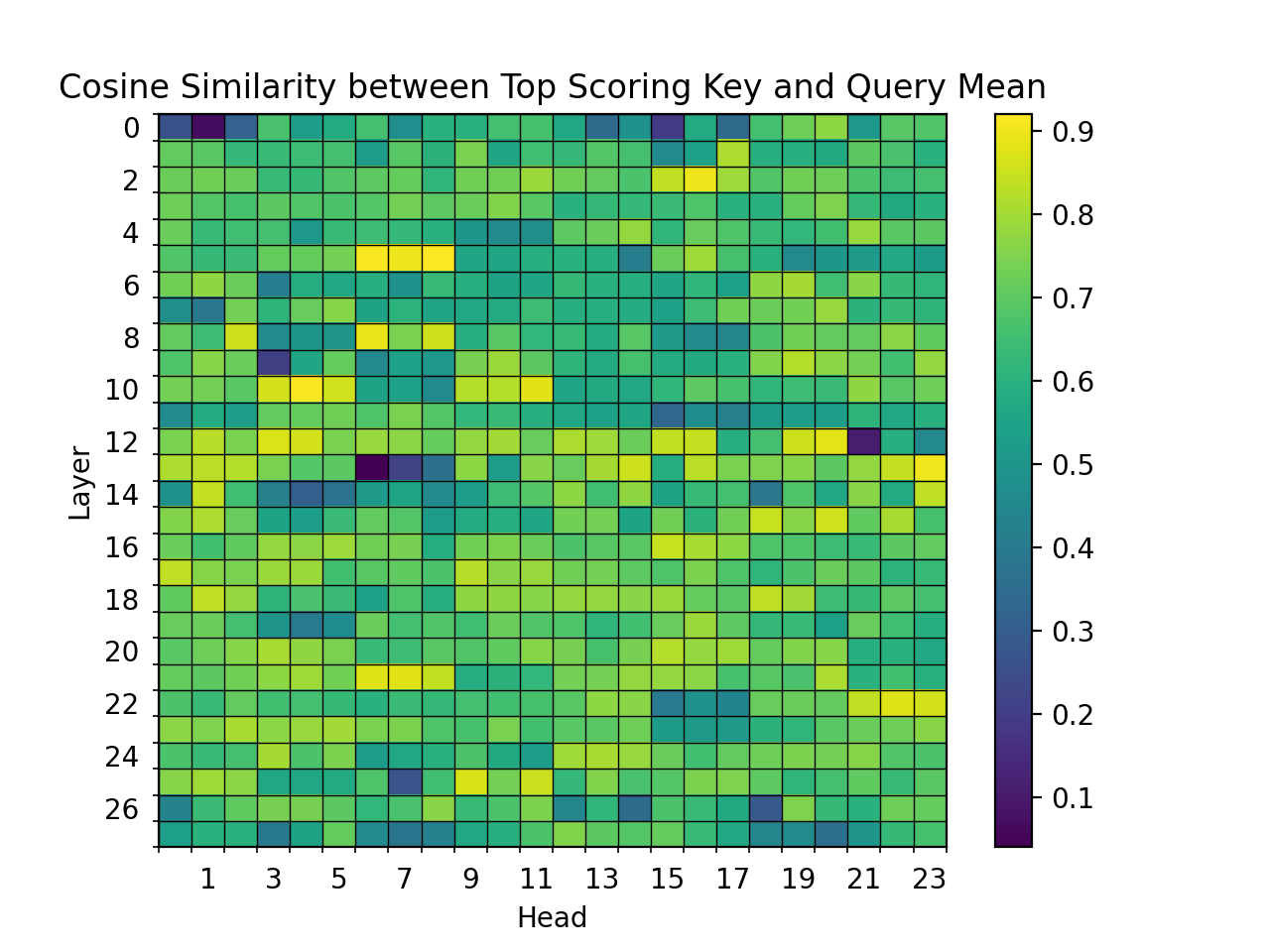}
     \vspace{-2mm}
     \caption{Cosine similarity between highest \ourmethod{} scoring key token with mean query.}
    \label{fig:keydiff_query_scores}
\end{figure*}

To verify the above hypothesis, we show \cref{fig:keydiff_query_scores} that keys which have maximum angular difference from the mean key are aligned with the mean query, resulting in very large attention weights. This demonstrates how LLMs exploit the geometry of the hidden states and projections to limit over mixing, and selectively identify important tokens. 

To summarize, we have for all attention heads in decoder based transformer models:
\begin{itemize}[leftmargin=*]
    \item \emph{The majority of keys and queries are approximately collinear with their mean.}
    \item \emph{Mean keys and mean queries have negative cosine similarity across all heads.}
    \item \emph{Most keys and queries have a similar L2 norm.}
    \item \emph{Decoder based attention heads can selectively increase attention weights for a fixed key by aligning it with the mean query.}
    \item \emph{Key token importance can hence be measured by the angular distance between a key and the mean key.}
\end{itemize}

We can show mathematically with some reasonable assumptions based on the above observations that key tokens with persistently high attention scores must be geometrically aligned with queries. 

\begin{theorem}Suppose that for a fixed query token $q$, there is a set of key tokens $\{k_i\}_{i=1}^n$ such that $||k_i||_2^2 < M, \; \forall \;i$. Without loss of generality suppose $||q||=1$, the scaling parameter is $1$ and assume ${k^*}$ is a key not in $\{k_i\}_{i=1}^n$ with $||{k^*}||_2^2 < M$ that has attention weight $w > 0$: 
\[w = \frac{\exp({k^*}^\top q)}{\exp({k^*}^\top q) + \sum\limits_{i=1}^n \exp(k_i^\top q)}.\] Then
\[\frac{ \log(\frac{n}{n+1}) -  \log(1-w)}{2M} -1  \leq \textrm{CosSim}({k^*}, q) \]
\end{theorem}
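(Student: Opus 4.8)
The plan is to invert the softmax formula for $w$, bound the competing logits $k_i^\top q$ using Cauchy--Schwarz together with the norm bound $\|k_i\|_2^2<M$, and then convert the resulting lower bound on the logit $k^{*\top}q$ into a lower bound on $\operatorname{CosSim}(k^*,q)=k^{*\top}q/\|k^*\|_2$ (recalling $\|q\|_2=1$) by dividing through by $\|k^*\|_2<\sqrt{M}$.

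First I would dispose of the trivial regime. If $w\le \frac{1}{n+1}$, then $1-w\ge\frac{n}{n+1}$, so $\log(1-w)\ge\log\frac{n}{n+1}$ and the claimed left-hand side is at most $-1\le\operatorname{CosSim}(k^*,q)$; hence we may assume $w>\frac{1}{n+1}$, which yields $\log w>-\log(n+1)$. Writing $A=e^{k^{*\top}q}$ and $B=\sum_{i=1}^n e^{k_i^\top q}$, the identity $w=A/(A+B)$ rearranges to $A=\frac{w}{1-w}B$, i.e.
\[
k^{*\top}q=\log\frac{w}{1-w}+\log\sum_{i=1}^n e^{k_i^\top q}.
\]
Cauchy--Schwarz gives $k_i^\top q\ge -\|k_i\|_2\|q\|_2>-\sqrt{M}$, hence $\sum_i e^{k_i^\top q}>ne^{-\sqrt{M}}$ and $\log\sum_i e^{k_i^\top q}>\log n-\sqrt{M}$. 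Combining with $\log\frac{w}{1-w}=\log w-\log(1-w)>-\log(n+1)-\log(1-w)$ gives
\[
k^{*\top}q>\log\frac{n}{n+1}-\log(1-w)-\sqrt{M}.
\]

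It remains to pass from this estimate to a bound on $\operatorname{CosSim}(k^*,q)=k^{*\top}q/\|k^*\|_2$. In the principal case $k^{*\top}q\ge 0$ --- which, one checks from the displayed bound together with $\sqrt{M}\le 2M$, holds automatically whenever the claimed right-hand side is nonnegative --- dividing by $0<\|k^*\|_2<\sqrt{M}$ only increases the right side, so
\[
\operatorname{CosSim}(k^*,q)>\frac{\log\frac{n}{n+1}-\log(1-w)-\sqrt{M}}{\sqrt{M}}=\frac{\log\frac{n}{n+1}-\log(1-w)}{\sqrt{M}}-1,
\]
and since $\log\frac{n}{n+1}-\log(1-w)>0$ in this regime while $\sqrt{M}\le 2M$, the right-hand side dominates $\frac{\log(n/(n+1))-\log(1-w)}{2M}-1$, giving the claim. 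The complementary case $k^{*\top}q<0$ forces $\operatorname{CosSim}(k^*,q)<0$; here the displayed bound gives $\log\frac{n}{n+1}-\log(1-w)<\sqrt{M}$, so the claimed right-hand side lies in $(-1,0)$, and closing the gap between it and $\operatorname{CosSim}(k^*,q)\ge -1$ is precisely where one invokes the paper's ``reasonable assumptions'' --- most directly, that $\|k^*\|_2$ is not much smaller than $\sqrt{M}$, i.e.\ comparable to the other key norms.

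I expect the final paragraph to be the main obstacle: reconciling the loose constant $2M$ (a convenient relaxation of the sharper denominator $\sqrt{M}$, legitimate once $M\ge\frac14$, which is benign since key norms are well above $1$ in practice), and especially the sub-case $k^{*\top}q<0$, where dividing a negative numerator by a potentially tiny $\|k^*\|_2$ would break the inequality unless one controls $\|k^*\|_2$ from below using the empirically observed clustering of key norms. By contrast, the softmax inversion and the Cauchy--Schwarz logit estimates are entirely routine.
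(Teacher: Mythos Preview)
Your core strategy matches the paper's: rearrange the softmax to isolate $k^{*\top}q$, lower-bound the competing logits via Cauchy--Schwarz, and then pass to $\operatorname{CosSim}(k^*,q)$. The execution differs in two noteworthy ways.

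First, the paper bounds $|k_i^\top q|\le M$ directly (tacitly assuming $M\ge 1$ so that $\sqrt{M}\le M$), whereas you keep the sharp bound $\sqrt{M}$ and only relax to $2M$ at the end via $\sqrt{M}\le 2M$. Your intermediate inequality $\operatorname{CosSim}(k^*,q)>\frac{\log(n/(n+1))-\log(1-w)}{\sqrt{M}}-1$ is therefore strictly tighter than the stated theorem; the $2M$ denominator is a concession, not the natural constant.

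Second, the mechanism producing the factor $2M$ is different. You divide $k^{*\top}q$ by $\|k^*\|<\sqrt{M}$ and then relax. The paper instead performs a somewhat circular substitution: after reaching $\log(wn)-M-\log(1-w)\le M\operatorname{CosSim}(k^*,q)$, it lower-bounds $\log w$ again using the crude denominator bound $(n+1)e^M$, which reintroduces $k^{*\top}q$ on the left, then replaces that occurrence by $-M\operatorname{CosSim}(k^*,q)$ and moves it to the right-hand side, yielding $2M\operatorname{CosSim}(k^*,q)$. This trick avoids ever dividing by $\|k^*\|$ explicitly.

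On the negative-cosine case you flag: you are right to be cautious, and you are not missing anything the paper supplies. The paper's own step $k^{*\top}q\le M\operatorname{CosSim}(k^*,q)$ (and later $k^{*\top}q\ge -M\operatorname{CosSim}(k^*,q)$) is only valid when $\operatorname{CosSim}(k^*,q)\ge 0$; for negative cosine the inequality reverses, and indeed one can construct small-norm antiparallel $k^*$ with $w>1/(n+1)$ that violate the stated bound. So the gap you identify in your Case $k^{*\top}q<0$ is a genuine gap in the theorem as written, not a defect of your argument relative to the paper's. Your explicit handling of the trivial regime $w\le 1/(n+1)$ and your honest accounting of the remaining case are improvements over the paper's presentation.
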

\begin{proof}
To show this we have that
\begin{align*}
    w &= \frac{\exp({k^*}^\top q)}{\exp({k^*}^\top q) + \sum\limits_{i=1}^n \exp(k_i^\top q)} \\
    w\left(\exp({k^*}^\top q) + \sum\limits_{i=1}^n \exp(k_i^\top q) \right)  &= \exp({k^*}^\top q) \\
    w\sum\limits_{i=1}^n \exp(k_i^\top q) &= (1-w) \exp({k^*}^\top q) \\
\intertext{Note that $-M \leq k_i^\top q \leq M$ and hence}
    w\sum\limits_{i=1}^n \exp(-M) &\leq (1-w) \exp({k^*}^\top q)\\
    wn\exp(-M) &\leq (1-w) \exp({k^*}^\top q) \\
    \frac{wn\exp(-M)}{1-w} & \leq \exp({k^*}^\top q)\\\
    \log(wn) - M - \log(1-w) & \leq M  \textrm{CosSim}({k^*}, q) \\
    \frac{\log(wn) - M -  \log(1-w)}{M} & \leq \textrm{CosSim}({k^*}, q) \\
    \frac{\log(\frac{\exp({k^*}^\top q)}{\exp({k^*}^\top q) + \sum\limits_{i=1}^n \exp(k_i^\top q)}n) - M -  \log(1-w)}{M} & \leq \textrm{CosSim}({k^*}, q) \\
    \frac{\log(\frac{\exp({k^*}^\top q)}{(n+1)\exp(M)}n) - M -  \log(1-w)}{M} & \leq \textrm{CosSim}({k^*}, q) \\
    \frac{-M\textrm{CosSim}(k^*, q) + \log(\frac{n}{n+1}) - 2M -  \log(1-w)}{M} & \leq \textrm{CosSim}({k^*}, q) \\
    \frac{ \log(\frac{n}{n+1}) - 2M -  \log(1-w)}{M} & \leq 2\textrm{CosSim}({k^*}, q) \\
    \frac{ \log(\frac{n}{n+1}) -  \log(1-w)}{2M} -1 & \leq \textrm{CosSim}({k^*}, q) \\
\end{align*}
Taking the limit as $n\to \infty$ produces \cref{lemma:key-cossim-reduced}
\end{proof}

The above proof demonstrates that as long as the norms of the keys are bounded, in order for an attention head to be able to freely allocate $w$ attention weight to a given key ${k^*}$, the cosine similarity between ${k^*}$ and $q$ must be high, therefore ${k^*}$ and $q$ must be approximately collinear.

Using this result, we can also show that as long as the cosine similarity between ${k^*}$ and $q$ is high, while the cosine similarity between $\bar{k}$ and $q$ is low, $\textrm{CosSim}({k^*},\bar{k})$ is small. Note that, since empirical results demonstrate that most keys have high cosine similarity with their mean $\bar{k}$, a key with high importance, approximately collinear to $q$, will also have low cosine similarity to $\bar{k}$. Generally, this also suggests that key tokens with low cosine similarity to $\bar{k}$ have greater importance.

In order to show this, we need the following auxiliary result.
\begin{lemma}\label{lemma:orthsum}
    Suppose $\{x_1, ..., x_n\}$ is an orthonormal basis of $\mathbb{R}^n$ and $y \in \mathbb{R}^n$. Define $\alpha_i = \textrm{CosSim}(x_i, y)$. Then $\sum\limits_{i=1}^n \alpha_i^2 = 1$.
\end{lemma}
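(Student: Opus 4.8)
The statement is a direct consequence of Parseval's identity for an orthonormal basis, so the plan is essentially a one-line computation with a small amount of bookkeeping about the cosine-similarity normalization. First I would observe that since $\{x_1,\dots,x_n\}$ is orthonormal we have $\|x_i\|=1$ for every $i$, so that $\alpha_i = \operatorname{CosSim}(x_i,y) = \frac{\langle x_i, y\rangle}{\|x_i\|\,\|y\|} = \frac{\langle x_i, y\rangle}{\|y\|}$ (implicitly assuming $y\neq 0$, which is the only case in which the cosine similarities are defined; I would state this assumption explicitly).

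Next I would expand $y$ in the orthonormal basis: $y = \sum_{i=1}^n \langle x_i, y\rangle\, x_i$. Taking squared norms and using orthonormality (the cross terms $\langle x_i, x_j\rangle$ vanish for $i\neq j$ and equal $1$ for $i=j$) gives $\|y\|^2 = \sum_{i=1}^n \langle x_i, y\rangle^2$. This is the only nontrivial ingredient, and it is standard.

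Finally I would combine the two displays:
\[
\sum_{i=1}^n \alpha_i^2 \;=\; \sum_{i=1}^n \frac{\langle x_i, y\rangle^2}{\|y\|^2} \;=\; \frac{1}{\|y\|^2}\sum_{i=1}^n \langle x_i, y\rangle^2 \;=\; \frac{\|y\|^2}{\|y\|^2} \;=\; 1.
\]
There is no real obstacle here; the only thing to be careful about is the degenerate case $y=0$, where $\operatorname{CosSim}$ is undefined, and the (automatic) fact that $x_i \neq 0$ so each $\alpha_i$ is well-defined. The lemma will then be invoked in the subsequent argument to control $\operatorname{CosSim}(k^*,\bar k)$ by decomposing $\bar k$ (or its relevant component) against an orthonormal frame that includes a direction aligned with $q$.
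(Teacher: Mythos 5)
Your proof is correct and is essentially the same as the paper's: both are applications of Parseval's identity, the only cosmetic difference being that the paper normalizes $y$ to $y/\|y\|$ before expanding in the orthonormal basis, while you expand $y$ directly and divide by $\|y\|^2$ at the end. Your explicit remark that $y\neq 0$ is implicitly required is a minor but reasonable addition the paper omits.
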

\begin{proof}
    Note that $\alpha_i = \frac{y^\top x_i}{||y|| ||x_i||} = \frac{y^\top x_i}{||y||}$. If we expand $\frac{y}{||y||}$ in the basis $\{x_1, ..., x_n\}$ we see that 
    \begin{align*}
        \frac{y}{||y||} &= \sum\limits_{i=1}^n \left({\frac{y}{||y||}}^\top x_i\right) x_i\\
        &= \sum\limits_{i=1}^n \alpha_i x_i \\
    \end{align*}
    But then, we know that since $\left|\left|\frac{y}{||y||} \right|\right|_2^2 = 1$, then $\left\langle \sum\limits_{i=1}^n \alpha_i x_i ,  \sum\limits_{i=1}^n \alpha_i x_i \right\rangle = 1$. But we have
    \begin{align*}
    \left\langle \sum\limits_{i=1}^n \alpha_i x_i ,  \sum\limits_{i=1}^n \alpha_i x_i \right\rangle & = \sum\limits_{i=1}^n\sum\limits_{j=1}^n \langle\alpha_ix_i, \alpha_j x_j \rangle \\
    &= \sum\limits_{i=1}^n \langle\alpha_ix_i, \alpha_i x_i \rangle \\
    & = \sum\limits_{i=1}^n \alpha_i^2
    \end{align*}
    proving the result.
\end{proof}

\begin{theorem}\label{theorem:keydiff}
Consider tokens ${k^*}$, $q$, $\bar{k}$ as above where $\bar{k}$ is the average of the keys tokens. Suppose $\textrm{CosSim}({k^*}, q) = \beta_q > 0$ and $\textrm{CosSim}(\bar{k}, q) = \alpha_q < 0$. Then $\textrm{CosSim}(\bar{k}, {k^*}) \leq 1 + \alpha_q \beta_q - \frac{1}{2}\alpha_q^2 -\frac{1}{2}\beta_q^2$.
\end{theorem}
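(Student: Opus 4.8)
The plan is to reduce the statement to a short inner‑product estimate after passing to a convenient orthonormal basis, and then finish with Cauchy--Schwarz and AM--GM.

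First I would normalize. Since $\textrm{CosSim}$ is invariant under multiplying either argument by a positive scalar, and all three vectors are assumed nonzero (so the similarities are well defined), I may assume $\|k^*\|=\|\bar{k}\|=\|q\|=1$. Then the cosine similarity of any two of these vectors equals their Euclidean inner product; in particular $k^*\cdot q=\beta_q$ and $\bar{k}\cdot q=\alpha_q$, and the goal reduces to showing $k^*\cdot\bar{k}\le 1+\alpha_q\beta_q-\tfrac12\alpha_q^2-\tfrac12\beta_q^2$.

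Next I would extend $q$ to an orthonormal basis $\{e_1=q,e_2,\dots,e_n\}$ of the ambient space and apply \cref{lemma:orthsum} twice, once with $y=k^*$ and once with $y=\bar{k}$. This yields $\beta_q^2+\sum_{i\ge 2}(k^*\cdot e_i)^2=1$ and $\alpha_q^2+\sum_{i\ge 2}(\bar{k}\cdot e_i)^2=1$, i.e.\ the components of $k^*$ and $\bar{k}$ orthogonal to $q$ have squared norms $1-\beta_q^2$ and $1-\alpha_q^2$. Expanding in this basis gives $k^*\cdot\bar{k}=(k^*\cdot e_1)(\bar{k}\cdot e_1)+\sum_{i\ge 2}(k^*\cdot e_i)(\bar{k}\cdot e_i)=\alpha_q\beta_q+\sum_{i\ge 2}(k^*\cdot e_i)(\bar{k}\cdot e_i)$. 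I would bound the remaining sum by Cauchy--Schwarz and the two identities above, $\sum_{i\ge 2}(k^*\cdot e_i)(\bar{k}\cdot e_i)\le\sqrt{(1-\beta_q^2)(1-\alpha_q^2)}$, and then apply AM--GM to the two nonnegative quantities $1-\beta_q^2$ and $1-\alpha_q^2$: $\sqrt{(1-\beta_q^2)(1-\alpha_q^2)}\le\tfrac12\bigl((1-\beta_q^2)+(1-\alpha_q^2)\bigr)=1-\tfrac12\alpha_q^2-\tfrac12\beta_q^2$. Adding $\alpha_q\beta_q$ to both sides gives the claimed bound.

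There is no serious obstacle here; it is a short convexity/geometry argument (equivalently, the spherical triangle inequality $\angle(k^*,\bar{k})\ge\angle(k^*,q)-\angle(\bar{k},q)$ composed with $\cos$ monotonicity, but the orthonormal‑basis route reuses \cref{lemma:orthsum} directly). The only points that need care are (i) justifying the normalization and the implicit nonzero assumption so the cosine similarities make sense, and (ii) observing that the sign hypotheses $\beta_q>0>\alpha_q$ are not actually needed for the inequality itself, which holds for all admissible $\alpha_q,\beta_q\in[-1,1]$; those hypotheses only supply the interpretation that a key strongly aligned with $q$ is necessarily far (in cosine similarity) from the mean key $\bar{k}$.
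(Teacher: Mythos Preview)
Your proposal is correct and follows essentially the same route as the paper: normalize, expand in an orthonormal basis containing $q$, invoke \cref{lemma:orthsum} to control the orthogonal components, and then apply an elementary inequality to the cross term. The only cosmetic difference is that the paper bounds $\sum_i|\alpha_i||\beta_i|$ termwise via Young's inequality, whereas you first apply Cauchy--Schwarz to get $\sqrt{(1-\alpha_q^2)(1-\beta_q^2)}$ and then AM--GM; both arrive at the identical bound. Your remark that the sign hypotheses $\beta_q>0>\alpha_q$ are not needed for the inequality itself is correct and worth noting.
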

\begin{proof}
Consider the cosine similarity of $\bar{k}$ and ${k^*}$:
\begin{align*}
    \textrm{CosSim}(\bar{k}, {k^*}) &= \frac{{k^*}^\top\bar{k}}{||{k^*}|| ||\bar{k}||}
\end{align*}
expand $\bar{k}$ in an orthonormal basis which contains $q$, $\{q,r_1, ..., r_{n-1}\}$ such that $$\bar{k} = ||\bar{k}|| \left(\alpha_q  q +  \sum\limits_{i=1}^{n-1} \alpha_i r_i \right)$$ where $\alpha_i = \textrm{CosSim}(\bar{k}, r_i)$. Additionally, define $\beta_i = \textrm{CosSim}({k^*}, r_i)$ and note that by the definition of an orthonormal basis and the cosine similarity operation, using the result from \cref{lemma:orthsum} we have that $\alpha_q^2 + \sum\limits_{i=1}^{n-1}\alpha_i^2 = 1$ and that $\beta_q^2 + \sum\limits_{i=1}^{n-1}\beta_i^2 = 1$. Now we have that 

\begin{align*}
    \frac{{k^*}^\top\bar{k}}{||{k^*}|| ||\bar{k}||} & = \frac{{k^*}^\top \left(||\bar{k}||\alpha_q  q +  \sum\limits_{i=1}^{n-1} ||\bar{k}|| \alpha_i r_i \right)}{||{k^*}|| ||\bar{k}||} \\
    &= \alpha_q \beta_q + \frac{1}{||{k^*}||} \sum\limits_{i=1}^{n-1} \alpha_i {k^*}^\top r_i \\
    & = \alpha_q \beta_q + \sum\limits_{i=1}^{n-1} \alpha_i \beta_i \\
    & \leq \alpha_q \beta_q + \sum\limits_{i=1}^{n-1} |\alpha_i| |\beta_i| \\
\intertext{Applying Young's inequality we obtain}
    & \leq \alpha_q \beta_q + \frac{1}{2}\sum\limits_{i=1}^{n-1} \alpha_q^2 + \beta_q^2 \\
    & = \alpha_q \beta_q +  \frac{1}{2}(1-\alpha_q^2) + \frac{1}{2}(1-\beta_q^2)\\
    & = 1 + \alpha_q \beta_q - \frac{1}{2}\alpha_q^2 -\frac{1}{2}\beta_q^2 \\
\end{align*}
\end{proof}
Note that on the domain $\alpha_q \in [-1,0)$, $\beta_q \in (0,1]$ the function $1 + \alpha_q \beta_q - \frac{1}{2}\alpha_q^2 -\frac{1}{2}\beta_q^2$ is bounded above by $1$ and decreasing to $-\frac{1}{2}$ as $\alpha \rightarrow -1$. Hence, the smaller $\textrm{CosSim}(\bar{k}, q) = \alpha_q$ is, the smaller $\textrm{CosSim}(\bar{k}, {k^*})$ must be. 

\subsection{Correlation Analysis}
\label{appendix:subsec:correlation-analysis}

We report the Spearman rank correlation between key cosine similarity and attention scores for each layer of the Llama-3.2-3B-Instruct model. Correlations are averaged over randomly sampled LongBench-Musique prompts. From \cref{tab:spearman_correlation}, we observe a consistently high correlation ($\rho \approx 0.94$ on average)
across all layers, indicating that geometrically distinctive keys
(i.e., those with low pairwise cosine similarity) are strongly aligned with
tokens receiving higher attention scores.
This empirical evidence supports our theoretical claim in \cref{subsec:keydiff-theory}
that key diversity serves as a reliable proxy for token importance.

\begin{table}[t]
\centering
\caption{Spearman correlation ($\rho$) between negative key cosine similarity and attention scores for each transformer layer of Llama-3.2-3B-Instruct.}
\label{tab:spearman_correlation}
\scriptsize
\begin{tabular}{cccccccc}
\toprule
Layer & 1 & 2 & 3 & 4 & 5 & 6 & 7 \\
\midrule
$\rho$ & 0.8997 & 0.9561 & 0.9332 & 0.9496 & 0.9517 & 0.9484 & 0.9587 \\
\midrule
Layer & 8 & 9 & 10 & 11 & 12 & 13 & 14 \\
\midrule
$\rho$ & 0.9578 & 0.9534 & 0.9570 & 0.9628 & 0.9554 & 0.9658 & 0.9578 \\
\midrule
Layer & 15 & 16 & 17 & 18 & 19 & 20 & 21 \\
\midrule
$\rho$ & 0.9477 & 0.9538 & 0.9280 & 0.9373 & 0.9328 & 0.9340 & 0.9311 \\
\midrule
Layer & 22 & 23 & 24 & \multicolumn{4}{c}{Mean $\pm$ Std} \\
\midrule
$\rho$ & 0.9140 & 0.9060 & 0.8950 & \multicolumn{4}{c}{$0.94 \pm 0.02$} \\
\bottomrule
\end{tabular}
\end{table}

\section{TTFT Analysis}
We have measured end-to-end inference latency (measured as time to first token (TTFT)) for Llama 3.2-3B via the standard Huggingface API when the model is using eager attention and FlashAttention as in \cref{fig:ttft_eager} and \cref{fig:ttft_flash}.  Tests are performed on NVIDIA A100 80GB GPUs. We test various block sizes and KV cache budgets. KeyDiff outperforms TOVA and SnapKV with FlashAttention as well as with eager attention for large cache budgets. We can see inference latency with KeyDiff is independent of block size when using FlashAttention because of KeyDiff's linear complexity and its lack of required materialized attention weights.

\begin{figure*}[ht!]
     \centering
     \begin{subfigure}[b]{0.99\linewidth}
         \centering
         \includegraphics[width=\linewidth]{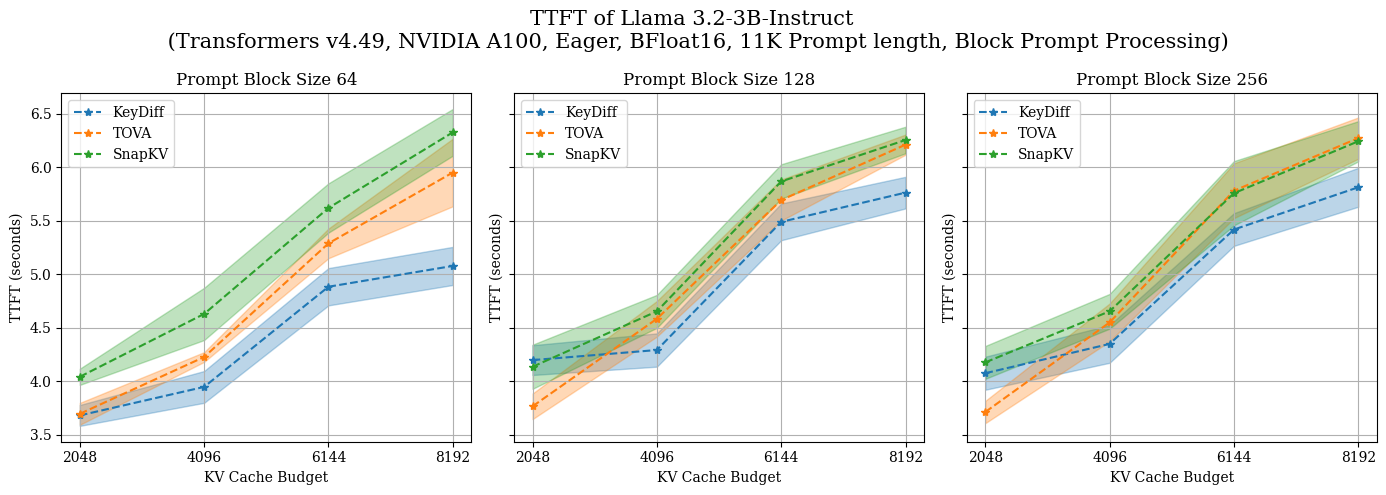}
     \end{subfigure}
     \hfill
     \vspace{-2mm}
     \caption{TTFT for eager attention with different cache eviction strategies using block size $64$, $128$, $256$ for block prompt processing.}
    \label{fig:ttft_eager}
\end{figure*}

\section{Math-500 reasoning benchmark}
\label{appendix:math500}
In order to measure the effectiveness of different caching methods on reasoning tasks, we tested several different model using various caching algorithms on the Math 500 reasoning benchmark. Specifically, we test \ourmethod{} with a sliding window whose window size is $20\%$ of the KV cache budget, and SnapKV on the DeepSeek-R1-Distill-Llama-8B and DeepSeek-R1-Distill-Qwen-7B models. We randomly sample 5 responses with $\textrm{TopP}=0.95$, $\textrm{Temperature}=0.95$ with the 4096, 6144, and 32,786 max generation lengths. The reported scores are the average of accuracies over the random samples. 

\cref{table:math500-llama-8b} summarizes the Math-500 evaluation results for Llama-8B. As shown in the table, token eviction methods generally perform well even with KV cache budgets that are strictly smaller than the maximum generation length. Surprisingly, we found that \ourmethod{} slightly outperforms baseline methods in certain configurations (e.g., \ourmethod{} with a 2K budget for 4K generation length, and \ourmethod{} with a 4K budget for 8K generation length).

To further analyze the effectiveness of token eviction, we measure accuracy in cases where the context length of the baseline method (i.e., prompt length + generation length) exceeds the available KV cache budget. As shown in \cref{table:math500-llama-8b-split}, for samples where eviction is actively triggered, \ourmethod{} continues to outperform the token eviction baseline (SnapKV), and often achieves accuracy close to or slightly better than the non-evicting baseline.

We also conducted a similar evaluation on DeepSeek-R1-Distill-Qwen-7B and observed a slight performance degradation for token eviction methods compared to full KV cache baselines (See \cref{table:math500-qwen2.5-7b,table:math500-qwen-2.5-7b-split}.) However, \ourmethod{} still demonstrates comparable performance to SnapKV overall. This discrepancy may stem from architectural differences that Llama uses a lower GQA \cite{ainslie2023gqa} ratio than Qwen, which results in more information compression in the KV cache. We hypothesize that models with more compression like Qwen are more sensitive to eviction since each evicted token contains more information in Qwen than Llama by design.

\begin{table*}[!t]
\caption{\textbf{Math 500 results on DeepSeek-R1-Llama-8B distilled model (Higher is better)}. We highlight the methods showing the best performance within a given budget with \textbf{boldface}. }
\label{table:math500-llama-8b}
\begin{center}
\resizebox{.8\textwidth}{0.18\textheight}{%
\begin{tabular}{cccccc}
\toprule


Method & Max Gen. Length & Budget & Flex Match & Exact Match & Avg. Gen. Length \\
\midrule
\midrule
\addlinespace
Full & 4K& N/A & 0.711 & 0.537 & 2769 \\
\midrule
\multirow{2}{*}{KeyDiff} &
\multirow{2}{*}{4K} 
  & 1024 & \bf{0.695} & 0.531 & 2753 \\
& & 2048 & \bf{0.720} & 0.546 & 2740 \\
\midrule
\multirow{2}{*}{SnapKV}  &
\multirow{2}{*}{4K}  
   & 1024 & 0.689 & 0.529 & 2759 \\
 & & 2048 & 0.714 & 0.544 & 2757 \\
\midrule
\midrule
\addlinespace

Full & 8K& N/A & 0.840 & 0.628 & 3812 \\
\midrule
\multirow{2}{*}{KeyDiff} &
\multirow{2}{*}{8K} 
  & 2048 & \bf{0.819} & 0.617 & 3888\\
& & 4096 & \bf{0.844} & 0.634 & 3805\\
\midrule
\multirow{2}{*}{SnapKV}  &
\multirow{2}{*}{8K}  
   & 2048 & 0.805 & 0.610 & 3898\\
 & & 4096 & 0.828 & 0.627 & 3898 \\
\midrule
\midrule
\addlinespace

Full & 32K& N/A & 0.898 & 0.668 & 6869\\
\midrule
\multirow{3}{*}{KeyDiff} &
\multirow{3}{*}{32K} 
   & 2048 & \bf{0.883} & 0.662 & 7678 \\
 & & 4096 & \bf{0.894} & 0.668 & 7312 \\
 & & 8192 & \bf{0.894} & 0.667 & 7096 \\
\midrule
\multirow{3}{*}{SnapKV}  &
\multirow{3}{*}{32K}  
   & 2048 & 0.849 & 0.641 & 7509 \\
 & & 4096 & 0.884 & 0.661 & 7218 \\
 & & 8192 & 0.893 & 0.665 & 7005 \\

\bottomrule
\end{tabular}
}
\end{center}
\end{table*}

\begin{table*}[!t]
\caption{\textbf{Math 500 results on DeepSeek-R1-Llama-8B distilled model (Higher is better)}. We highlight the methods showing the best performance within a given budget with \textbf{boldface}. }
\label{table:math500-llama-8b-split}
\begin{center}
\begin{tabular}{cccc}
\toprule

\multicolumn{4}{c}{Max Gen. Length $=$ 4K} \\
\midrule
\midrule
\multicolumn{4}{c}{Num Tokens $>$ 1K (497/500 samples)} \\
\midrule
 &  Budget & Flex & Exact   \\
\midrule
Full &  N/A & 0.709 & 0.534  \\ 
\midrule
   KeyDiff & 1024 & \bf{0.693} & \bf{0.528} \\
   SnapKV  & 1024 &      0.687 &      0.526 \\
\midrule
\midrule
\multicolumn{4}{c}{Num Tokens $>$ 2K (240/500 samples)} \\
\midrule
 Full    &   N/A &      0.604 &      0.449 \\
\midrule
 KeyDiff &  2048 & \bf{0.618} & \bf{0.463} \\
 SnapKV  &  2048 &      0.610 &      0.458 \\

\bottomrule
\end{tabular}
\begin{tabular}{cccc}
\toprule

\multicolumn{4}{c}{Max Gen. Length $=$ 8K} \\
\midrule
\midrule
\multicolumn{4}{c}{Num Tokens $>$ 2K (353/500 samples)} \\
\midrule
 &  Budget & Flex & Exact   \\
\midrule
Full &  N/A & 0.783 & 0.584  \\ 
\midrule
   KeyDiff & 2048 & \bf{0.756} & \bf{0.570} \\
   SnapKV  & 2048 &      0.736 &      0.560 \\
\midrule
\midrule
\multicolumn{4}{c}{Num Tokens $>$ 4K (195/500 samples)} \\
\midrule
 Full    &   N/A &      0.650 &      0.455 \\
\midrule
 KeyDiff &  4096 & \bf{0.662} & \bf{0.470} \\
 SnapKV  &  4096 &      0.637 &      0.453 \\

\bottomrule
\end{tabular}
\begin{tabular}{cccc}
\toprule

\multicolumn{4}{c}{Max Gen. Length $=$ 32K} \\
\midrule
\midrule
\multicolumn{4}{c}{Num Tokens $>$ 2K (353/500 samples)} \\
\midrule
 &  Budget & Flex & Exact   \\
\midrule
Full &  N/A & 0.783 & 0.584  \\ 
\midrule
   KeyDiff & 2048 & \bf{0.756} & \bf{0.570} \\
   SnapKV  & 2048 &      0.736 &      0.560 \\
\midrule
\midrule
\multicolumn{4}{c}{Num Tokens $>$ 4K (195/500 samples)} \\
\midrule
 Full    &   N/A &      0.650 &      0.455 \\
\midrule
 KeyDiff &  4096 & \bf{0.662} & \bf{0.470} \\
 SnapKV  &  4096 &      0.637 &      0.453 \\

\midrule
\midrule
\multicolumn{4}{c}{Num Tokens $>$ 8K (162/500 samples)} \\
\midrule
 Full    &   N/A &      0.793 &      0.550 \\
\midrule
 KeyDiff &  8192 & \bf{0.786} & \bf{0.545} \\
 SnapKV  &  8192 &      0.782 &      0.541 \\

\bottomrule
\end{tabular}
\end{center}
\end{table*}

\begin{table*}[!t]
\caption{\textbf{Math 500 results on DeepSeek-R1-Qwen-7B distilled model (Higher is better)}. We highlight the methods showing the best performance within a given budget with \textbf{boldface}. }
\label{table:math500-qwen2.5-7b}
\begin{center}
\resizebox{.8\textwidth}{0.18\textheight}{%
\begin{tabular}{cccccc}
\toprule


Method & Max Gen. Length & Budget & Flex Match & Exact Match & Avg. Gen. Length \\
\midrule
\midrule
\addlinespace
Full & 4K& N/A & 0.764 & 0.579 & 2630 \\
\midrule
\multirow{2}{*}{KeyDiff} &
\multirow{2}{*}{4K} 
  & 1024 & 0.666 & 0.512 & 2692 \\
& & 2048 & \bf{0.749} & 0.570 & 2629 \\
\midrule
\multirow{2}{*}{SnapKV}  &
\multirow{2}{*}{4K}  
   & 1024 & \bf{0.692} & 0.533 & 2655 \\
 & & 2048 & \bf{0.749} & 0.566 & 2637 \\
\midrule
\midrule
\addlinespace

Full & & N/A & 0.877 & 0.658 & 3287 \\
\midrule
\multirow{2}{*}{KeyDiff} &
\multirow{2}{*}{8K} 
  & 2048 & 0.811 & 0.613 & 3348 \\
& & 4096 & 0.867 & 0.647 & 3208 \\
\midrule
\multirow{2}{*}{SnapKV}  &
\multirow{2}{*}{8K}  
   & 2048 & \bf{0.812} & 0.612 & 3328 \\
 & & 4096 & \bf{0.868} & 0.647 & 3214 \\
\midrule
\midrule
\addlinespace

Full & 32K& N/A & 0.923 & 0.682 & 4051\\
\midrule
\multirow{3}{*}{KeyDiff} &
\multirow{3}{*}{32K} 
   & 2048 & 0.811 & 0.613 & 4322 \\
 & & 4096 & \bf{0.897} & 0.647 & 3800 \\
 & & 8192 & \bf{0.891} & 0.663 & 3741 \\
\midrule
\multirow{3}{*}{SnapKV}  &
\multirow{3}{*}{32K}  
   & 2048 & \bf{0.812} & 0.612 & 4279 \\
 & & 4096 & 0.868 & 0.647 & 3828 \\
 & & 8192 & \bf{0.891} & 0.662 & 3808 \\

\bottomrule
\end{tabular}
}
\end{center}
\end{table*}

\begin{table*}[!t]
\caption{\textbf{Math 500 results on DeepSeek-R1-Qwen-7B distilled model (Higher is better)}. We highlight the methods showing the best performance within a given budget with \textbf{boldface}. }
\label{table:math500-qwen-2.5-7b-split}
\begin{center}
\begin{tabular}{cccc}
\toprule

\multicolumn{4}{c}{Max Gen. Length $=$ 4K} \\
\midrule
\midrule
\multicolumn{4}{c}{Num Tokens $>$ 1K (497/500 samples)} \\
\midrule
 &  Budget & Flex & Exact   \\
\midrule
Full &  N/A &                0.762 &     0.578  \\ 
\midrule
   KeyDiff & 1024 &          0.664 &      0.511 \\
   SnapKV  & 1024 &     \bf{0.690} &      0.532 \\
\midrule
\midrule
\multicolumn{4}{c}{Num Tokens $>$ 2K (328/500 samples)} \\
\midrule
 Full    &   N/A &      0.650 &      0.478 \\
\midrule
 KeyDiff &  2048 &      0.628 &      0.464 \\
 SnapKV  &  2048 &   \bf{0.629} &      0.459 \\

\bottomrule
\end{tabular}
\begin{tabular}{cccc}
\toprule

\multicolumn{4}{c}{Max Gen. Length $=$ 8K} \\
\midrule
\midrule
\multicolumn{4}{c}{Num Tokens $>$ 2K (317/500 samples)} \\
\midrule
 &  Budget & Flex & Exact   \\
\midrule
Full &  N/A & 0.816 & 0.603  \\ 
\midrule
   KeyDiff & 2048 &         0.713 &      0.533 \\
   SnapKV  & 2048 &    \bf{0.718} &      0.533 \\
\midrule
\midrule
\multicolumn{4}{c}{Num Tokens $>$ 4K (179/500 samples)} \\
\midrule
 Full    &   N/A &           0.626 &      0.441 \\
\midrule                    
 KeyDiff &  4096 &           0.611 &      0.419 \\
 SnapKV  &  4096 &      \bf{0.615} &      0.422 \\

\bottomrule
\end{tabular}
\begin{tabular}{cccc}
\toprule

\multicolumn{4}{c}{Max Gen. Length $=$ 32K} \\
\midrule
\midrule
\multicolumn{4}{c}{Num Tokens $>$ 2K (360/500 samples)} \\
\midrule
 &  Budget & Flex & Exact   \\
\midrule
Full &  N/A & 0.889 & 0.642   \\ 
\midrule
   KeyDiff & 2048 &      0.713 & 0.533  \\
   SnapKV  & 2048 & \bf{0.718} & 0.533  \\
\midrule
\midrule
\multicolumn{4}{c}{Num Tokens $>$ 4K (141/500 samples)} \\
\midrule
 Full    &   N/A &      0.787 &      0.524 \\
\midrule
 KeyDiff &  4096 &      0.611 &      0.419 \\
 SnapKV  &  4096 & \bf{0.615} &      0.422 \\

\midrule
\midrule
\multicolumn{4}{c}{Num Tokens $>$ 8K (53/500 samples)} \\
\midrule
 Full    &   N/A &      0.550 &      0.275 \\
\midrule
 KeyDiff &  8192 & \bf{0.392} &      0.222 \\
 SnapKV  &  8192 &      0.381 &      0.215 \\

\bottomrule
\end{tabular}
\end{center}
\end{table*}
\section{Additional discussion for LongBench}
\label{appendix:longbench-extended}

\subsection{Empirical Motivation for \ourmethod{} Setup}
\label{appendix:subsec:sec3-setup}

To generate \cref{fig:pca-plot}, we used the first sample from the test split of the narrativeqa task in LongBench to prefill the KV cache of Llama3.2-3B-Instruct with a block size of $B=128$.
The KV cache had a maximum size of $4096$ while the sample was much longer, requiring KV eviction.  
We applied PCA to the key cache and repeated the process for sink attention, TOVA and \ourmethod{}. 

To construct \cref{fig:gram_matrix}, we sample 100 prompts from the \texttt{Qasper} dataset in LongBench \cite{bai-etal-2024-longbench}, compute the log determinant of $KK^T$ of the keys in the KV caches of each head and layer of Llama 3.2-3B-Instruct using a cache budget of $N=2048$ and a block size of $B=128$, and plot the distribution in \cref{fig:gram_matrix}.
We show this key distribution for sink attention, TOVA and \ourmethod{}. 
\subsection{Experiment Setup}
\label{appendix:subsec:longbench-setup}

In this subsection, we provide the experimental setup for \ourmethod{} and the baselines for the LongBench experiments. The LongBench evaluation is conducted using the default parameters of the LongBench evaluator with predefined prompt templates. Tests are performed on NVIDIA A100 80GB GPUs. 

For TOVA, H2O, and SnapKV, the set of attention weights computed from a single key cache due to grouped query attention \cite{ainslie2023gqa} is aggregated by taking the average over the attention weights. Additionally, only for SnapKV, we apply average smoothing to the attention score with a kernel size of 7 and keep the most recent 32 tokens in the cache, following the suggestion of the original paper. For Sink, we used the first four tokens as the attention sink, following the suggestion of the original paper.

\subsection{Longbench dataset statistics}
\label{appendix:subsec:longbench-statistics}

In this section, we provide the length statistics of the Longbench Benchmark and in-depth analysis of compression ratios for the given KV cache budgets, such as 2K, 4K, 6K, and 8K. 

\paragraph{Prompt lengths} We measure the number of tokens in the samples using LLama tokenizer \cite{touvron2023llama}. As shown in \cref{fig:longbench-statistics}, LongBench exhibits variability in sample length from the datasets.

\begin{figure}[h]
    \centering
    \includegraphics[width=1.0\linewidth]{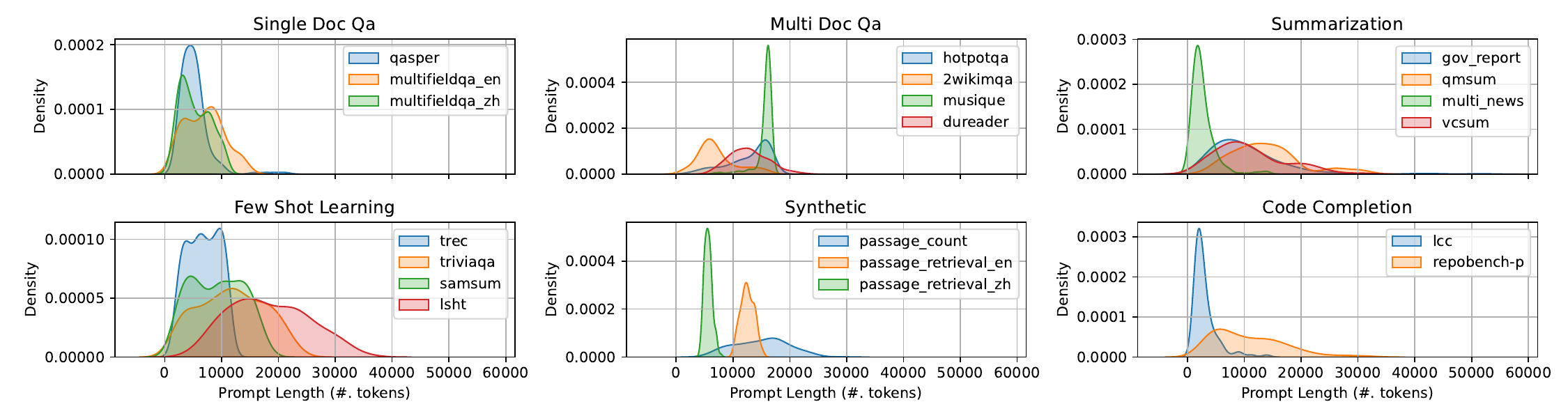}
    \caption{\textbf{Histograms of sample lengths measured by number of tokens}}
    \label{fig:longbench-statistics}
\end{figure}

\paragraph{Compression ratio} The majority of other KV cache eviction studies assume an unconstrained memory footprint. Before they compress the cache by applying an eviction policy, they first set the target compression ratio and evict the appropriate number of KV pairs to satisfy the compression ratio \cite{zhang2024h2o, oren2024transformers, li2024snapkv}. On the other hand, we fix the KV cache size and ensure the number of cached tokens is less than or equal to the predefined cache size. Due to these differences, it is less straightforward to set appropriate KV cache budgets to satisfy the target compression ratios. Instead, we provide the average compression ratio, which is defined as: 
\begin{equation*} 
\text{Average Compression Ratio} = \frac{1}{I}\sum_{i=1}^{I} \frac{N}{L_i},
\end{equation*} 
where $N$ is the KV cache budget, and $L_i$ is the length of the $i$-th prompt (sample). We replace the summand with 1 whenever $N \geq L_{i}$, as compression doesn't occur in that setting.

As summarized in \cref{table:compression_ratio}, 2K cache budgets have a 0.31 average compression ratio, which indicates 69\% of input prompts are compressed. Our largest cache budget, 8K, exhibits a 0.77 average compression ratio.

\begin{table}[ht!]
    \centering
    \resizebox{0.75\textwidth}{!}{
    \begin{tabular}{r|ccccc|c}
        \toprule
         & $\leq$ 2K & 2K $\leq L \leq$ 4K & 4K $\leq L \leq$ 6K & 6K $\leq L \leq$ 8K & $\geq$ 8K & Total \\ 
        \hline
        NarrativeQA           & 0   & 0   & 0   & 8   & 192 & 200  \\
        Qasper                & 1   & 77  & 83  & 25  & 14  & 200  \\
        MultifidelityQA-En    & 9   & 31  & 21  & 32  & 57  & 150  \\
        MultifidelityQA-Zh    & 14  & 69  & 47  & 33  & 37  & 200  \\
        \hline
        HotPotQA              & 1   & 4   & 12  & 12  & 171 & 200  \\
        2wikimqa              & 8   & 17  & 68  & 54  & 53 & 200  \\
        musique               & 0   & 0   & 0   & 3   & 197 & 200  \\
        dureader              & 0   & 0   & 0   & 16  & 184 & 200  \\
        \hline
        gov report            & 0   & 20  & 29  & 45  & 106 & 200  \\
        qmsum                 & 0   & 1   & 17  & 15  & 167 & 200  \\
        multi news            & 99  & 71  & 19  & 6   & 5   & 200  \\
        vcsum                 & 4   & 20  & 18  & 32  & 126 & 200  \\
        \hline
        trec                  & 4  & 43  & 41   & 39  & 73  & 200  \\
        triviaqa              & 4  & 21  & 15   & 21  & 139 & 200  \\
        samsum                & 6  & 29  & 34   & 17  & 114 & 200  \\
        lsht                  & 0  & 0   & 3    & 8   & 189 & 200  \\
        \hline
        passage count         & 0  & 0   & 3    & 13  & 184 & 200  \\
        passage-retrieval-En  & 0  & 0   & 0    & 0   & 200 & 200  \\
        passage-retrieval-Zh  & 0  & 0   & 160  & 10  & 0   & 170  \\
        \hline
        lcc                   & 80 & 86  & 21   & 4   & 9   & 200  \\
        repobench-p           & 0  & 25  & 37   & 25  & 113 & 200  \\
        \bottomrule
        
    \end{tabular}
    }
    \caption{\textbf{Distribution of sample length measured by Llama2 tokenizer}}
    \label{tab:five_column_example}
\end{table}
\begin{table}[ht!]
    \centering
    \begin{tabular}{r|cccc}
        \toprule
        & 2K & 4K & 6K & 8K \\ 
        \hline
        NarrativeQA           & 0.10  & 0.20  & 0.30  & 0.40  \\
        Qasper                & 0.47  & 0.82  & 0.96  & 0.98  \\
        MultifidelityQA-En    & 0.40  & 0.66  & 0.82  & 0.93  \\
        MultifidelityQA-Zh    & 0.49  & 0.78  & 0.91  & 0.98  \\
        \hline
        HotPotQA              & 0.19  & 0.37  & 0.52  & 0.66  \\
        2wikimqa              & 0.36  & 0.65  & 0.85  & 0.92  \\
        musique               & 0.13  & 0.27  & 0.40  & 0.53  \\
        dureader              & 0.17  & 0.34  & 0.52  & 0.68  \\
        \hline
        gov report            & 0.28  & 0.52  & 0.70  & 0.81  \\
        qmsum                 & 0.18  & 0.36  & 0.52  & 0.66  \\
        multi news            & 0.81  & 0.96  & 0.98  & 0.53  \\
        vcsum                 & 0.27  & 0.49  & 0.65  & 0.68  \\
        \hline
        trec                  & 0.38  & 0.66  & 0.84  & 0.94  \\
        triviaqa              & 0.26  & 0.46  & 0.60  & 0.72  \\
        samsum                & 0.32  & 0.55  & 0.71  & 0.82  \\
        lsht                  & 0.13  & 0.27  & 0.39  & 0.52  \\
        \hline
        passage count         & 0.15  & 0.31  & 0.46  & 0.60  \\
        passage-retrieval-En  & 0.16  & 0.33  & 0.49  & 0.66  \\
        passage-retrieval-Zh  & 0.37  & 0.74  & 0.99  & 1.00  \\
        \hline
        lcc                   & 0.78  & 0.95  & 0.98  & 0.99  \\
        repobench-p           & 0.27  & 0.52  & 0.67  & 0.78  \\
        \hline
        average               & 0.31  & 0.53  & 0.67  & 0.77  \\
        \bottomrule
        
    \end{tabular}
    \caption{\textbf{Compression ratio of prompts w.r.t. various KV cache budgets}}
    \label{table:compression_ratio}
\end{table}

\subsection{Additional Results}
\label{appendix:subsec:longbench-result}

\begin{table*}[!t]
\caption{Resource unrestricted LongBench Results (Higher is better). All methods processes the input prompt in parallel (i.e., block size $=\infty$) and make an eviction decision with all token information in the input. The token eviction is made at every step of generation if the budget exceeds. We highlight the methods showing the best performance within a given budget with \textbf{boldface}. We omit NarrativeQA from evaluation due to higher chance of OOM errors.}
\label{table:longbench_unrestricted_reduced}
\begin{center}
\resizebox{0.8\textwidth}{0.9\textheight}{%
\begin{adjustbox}{angle=90}
\begin{tabular}{cccccccccccccccccccccccc}
\toprule

&
&
\multicolumn{3}{c}{Single Doc. QA} &
\multicolumn{4}{c}{Multi Doc. QA} &
\multicolumn{4}{c}{Summarization} &
\multicolumn{4}{c}{Few$\-$shot Learning} &
\multicolumn{3}{c}{Synthetic} &
\multicolumn{2}{c}{Code} &
\\

\cmidrule(lr){3-6}
\cmidrule(lr){7-10}
\cmidrule(lr){11-14}
\cmidrule(lr){15-18}
\cmidrule(lr){19-21}
\cmidrule(lr){22-23}

&
& 
Qasper & MF-en & MF-zh &
HotpotQA & 2WikiMQA & Musique & DuReader &
GovReport & QMSum & MultiNews & VCSum &
TREC & TriviaQA & SAMSum & LSHT &
PCount & PR-en & PR-zh &
Lcc & RB-P & 
Avg. 
\\

\midrule
\midrule
\addlinespace

\multicolumn{2}{c}{Llama3.1-8B} & 47.00 & 56.12 & 59.86 & 57.33 & 47.81 & 32.25 & 35.64 & 34.86 & 25.32 & 27.02 & 17.28 & 73.00 & 91.61 & 43.37 & 45.50 & 8.33 & 99.50 & 99.00 & 61.66 & 51.94 & 50.72 \\

\midrule

\multirow{4}{*}{H2O} &
2K & 22.75 & 32.73 & 25.93 & 43.56 & 29.49 & 0.00 & 5.35 & 3.70 & 4.73 & 17.42 & 4.44 & 46.19 & 54.88 & 10.39 & 12.20 & 16.13 & 100.00 & 37.75 & 42.44 & 16.44 & 26.33 \\

&
4K & 35.49 & 43.14 & 39.80 & 52.82 & 36.60 & 10.00 & 6.11 & 9.34 & 7.54 & 24.61 & 7.38 & 54.31 & 65.83 & 23.18 & 16.67 & 16.39 & 100.00 & 61.50 & 55.96 & 28.68 & 34.77 \\

&
6K & 44.05 & 47.74 & 47.88 & 52.67 & 45.90 & 11.11 & 7.97 & 15.55 & 13.77 & 26.58 & 9.13 & 60.11 & 78.42 & 32.35 & 13.64 & 12.04 & 100.00 & 94.00 & 60.23 & 39.82 & 40.65 \\

&
8K & 45.93 & 51.12 & 55.72 & 53.09 & 48.83 & 13.58 & 15.63 & 25.89 & 17.39 & 27.18 & 12.39 & 67.71 & 86.91 & 40.05 & 13.33 & 13.73 & 100.00 & 99.00 & 60.82 & 45.17 & 44.67 \\

\midrule

\multirow{4}{*}{TOVA} &
2K & 44.37 & 55.47 & 58.07 & 59.16 & 48.26 & 16.67 & 26.58 & 30.54 & 24.37 & 26.81 & 16.66 & 71.00 & 91.93 & 45.29 & 28.89 & 9.84 & 100.00 & 96.58 & 61.65 & 51.83 & \textbf{48.20} \\

&
4K & 46.45 & 56.23 & 58.82 & 60.72 & 49.96 & 15.28 & 29.93 & 33.52 & 25.56 & 27.18 & 17.40 & 72.08 & 91.40 & 44.49 & 29.79 & 13.06 & 100.00 & 99.00 & 61.95 & 52.19 & \textbf{49.25} \\

&
6K & 46.66 & 54.26 & 59.31 & 55.99 & 50.26 & 16.22 & 34.54 & 34.35 & 25.21 & 27.22 & 17.34 & 72.96 & 91.56 & 44.82 & 30.43 & 14.72 & 100.00 & 99.00 & 62.18 & 53.33 & 49.52 \\

&
8K & 46.57 & 55.44 & 59.16 & 59.87 & 51.77 & 13.58 & 35.28 & 35.21 & 25.98 & 27.27 & 16.80 & 73.23 & 90.70 & 44.71 & 34.78 & 13.59 & 100.00 & 99.00 & 61.96 & 54.32 & \textbf{49.96} \\

\midrule

\multirow{4}{*}{Sink} &
2K & 33.33 & 33.74 & 34.73 & 45.37 & 38.46 & 20.97 & 18.31 & 26.08 & 21.41 & 24.98 & 16.08 & 67.50 & 90.00 & 40.99 & 21.25 & 2.50 & 36.00 & 18.00 & 57.08 & 53.81 & 35.03 \\

&
4K & 38.57 & 41.15 & 46.38 & 48.07 & 40.87 & 22.51 & 17.50 & 29.28 & 21.89 & 25.12 & 16.84 & 71.50 & 90.52 & 41.32 & 24.75 & 2.50 & 49.00 & 22.50 & 57.94 & 53.64 & 38.09 \\

&
6K & 40.41 & 43.74 & 52.60 & 50.08 & 42.57 & 22.32 & 17.71 & 30.54 & 22.38 & 25.25 & 17.37 & 72.50 & 90.77 & 41.94 & 26.25 & 2.50 & 57.00 & 21.00 & 57.14 & 54.07 & 39.41 \\

&
8K & 40.53 & 44.58 & 54.77 & 49.13 & 42.38 & 23.75 & 20.35 & 31.35 & 22.61 & 25.28 & 17.45 & 72.50 & 90.77 & 42.27 & 27.75 & 3.00 & 70.00 & 20.50 & 57.13 & 55.35 & 40.57 \\
\midrule

\multirow{4}{*}{SnapKV} &
2K & 45.01 & 52.53 & 56.15 & 57.54 & 50.17 & 25.00 & 32.35 & 32.99 & 25.38 & 27.24 & 18.14 & 70.85 & 89.48 & 39.71 & 26.53 & 15.86 & 98.57 & 87.04 & 60.56 & 51.92 & 48.15 \\

&
4K & 46.28 & 55.05 & 59.57 & 55.49 & 49.70 & 24.69 & 35.09 & 33.78 & 26.63 & 27.15 & 17.48 & 72.68 & 90.28 & 42.59 & 28.57 & 13.38 & 98.08 & 98.50 & 61.14 & 52.05 & 49.41 \\

&
6K & 46.82 & 55.83 & 59.07 & 59.68 & 50.27 & 22.22 & 35.82 & 34.89 & 25.17 & 27.24 & 17.51 & 72.31 & 90.76 & 44.31 & 23.26 & 14.00 & 100.00 & 99.00 & 62.13 & 54.43 & \textbf{49.74} \\

&
8K & 46.72 & 55.33 & 59.79 & 57.18 & 51.51 & 15.28 & 34.83 & 35.28 & 25.71 & 27.24 & 17.40 & 71.96 & 90.46 & 45.08 & 23.26 & 10.92 & 100.00 & 98.99 & 61.76 & 54.80 & 49.17 \\

\midrule

\multirow{4}{*}{\ourmethod{}} &
2K & 44.58 & 53.88 & 53.65 & 57.40 & 47.66 & 14.89 & 33.44 & 29.97 & 25.88 & 26.95 & 16.10 & 72.00 & 92.27 & 43.51 & 31.82 & 11.38 & 100.00 & 95.92 & 59.26 & 45.66 & 47.81 \\

&
4K & 46.05 & 54.87 & 57.52 & 59.04 & 49.97 & 13.58 & 36.51 & 33.17 & 25.92 & 27.12 & 16.85 & 73.00 & 90.13 & 43.87 & 31.82 & 13.28 & 100.00 & 97.67 & 60.57 & 50.49 & 49.07 \\

&
6K & 46.61 & 54.49 & 59.19 & 59.70 & 50.03 & 12.99 & 36.04 & 34.42 & 26.67 & 27.27 & 17.47 & 73.33 & 90.94 & 44.62 & 33.33 & 11.83 & 100.00 & 99.00 & 61.29 & 53.20 & 49.62 \\

&
8K & 46.55 & 55.11 & 59.28 & 57.48 & 50.51 & 13.58 & 35.01 & 34.81 & 25.87 & 27.22 & 17.15 & 72.59 & 91.22 & 44.69 & 27.27 & 11.20 & 100.00 & 99.00 & 61.95 & 53.62 & 49.21 \\
\midrule
\midrule
\addlinespace


\multicolumn{2}{c}{Llama3.2-3B} &
40.23 & 50.09 & 55.84 & 50.69 & 42.29 & 26.84 & 36.24 & 33.09 & 24.30 & 25.21 & 16.41 & 72.50 & 90.11 & 42.58 & 34.00 & 3.00 & 96.50 & 20.50 & 56.22 & 56.52 & 43.66 \\

\midrule

\multirow{4}{*}{H2O} &
2K & 20.21 & 24.23 & 18.98 & 28.28 & 23.51 & 9.88 & 10.40 & 1.43 & 4.13 & 16.14 & 3.05 & 48.00 & 41.36 & 12.23 & 16.00 & 4.81 & 1.52 & 0.50 & 40.16 & 17.58 & 17.12 \\

&
4K & 32.58 & 33.87 & 35.46 & 35.29 & 27.46 & 17.84 & 12.25 & 6.96 & 9.26 & 23.10 & 5.17 & 56.00 & 59.11 & 21.77 & 13.70 & 4.95 & 13.71 & 7.25 & 51.49 & 29.58 & 24.84 \\

&
6K & 38.61 & 44.28 & 46.68 & 42.34 & 36.93 & 11.61 & 15.57 & 13.52 & 13.03 & 24.44 & 7.74 & 63.50 & 72.81 & 30.46 & 12.86 & 4.76 & 63.00 & 19.00 & 54.28 & 39.85 & 32.76 \\

&
8K & 40.23 & 44.83 & 52.90 & 46.13 & 39.78 & 14.74 & 20.79 & 22.65 & 17.06 & 24.77 & 10.55 & 70.50 & 83.55 & 35.08 & 15.07 & 4.04 & 81.91 & 20.00 & 55.35 & 45.83 & 37.29 \\
\midrule

\multirow{4}{*}{TOVA} &
2K & 38.22 & 48.83 & 54.09 & 48.08 & 42.21 & 14.81 & 27.89 & 28.71 & 23.27 & 24.94 & 15.64 & 70.50 & 89.47 & 42.97 & 22.22 & 5.88 & 95.92 & 18.50 & 56.14 & 55.68 & \textbf{41.20} \\

&
4K & 40.55 & 50.77 & 56.10 & 54.26 & 42.68 & 17.08 & 31.66 & 31.09 & 23.44 & 25.37 & 16.00 & 71.50 & 89.77 & 43.00 & 21.92 & 5.88 & 95.50 & 18.50 & 56.33 & 56.52 & 42.40 \\

&
6K & 40.57 & 50.12 & 56.62 & 52.67 & 43.12 & 19.44 & 34.78 & 32.91 & 23.88 & 25.30 & 15.87 & 72.50 & 89.07 & 42.73 & 24.00 & 4.35 & 94.97 & 20.00 & 56.30 & 57.38 & \textbf{42.83} \\

&
8K & 40.86 & 49.79 & 56.31 & 52.91 & 42.03 & 15.34 & 36.74 & 33.34 & 24.11 & 25.30 & 16.15 & 72.50 & 89.40 & 42.39 & 24.32 & 5.77 & 95.50 & 20.00 & 56.27 & 56.85 & 42.79 \\
\midrule

\multirow{4}{*}{Sink} &
2K & 33.33 & 33.74 & 34.73 & 45.37 & 38.46 & 20.97 & 18.31 & 26.08 & 21.41 & 24.98 & 16.08 & 67.50 & 90.00 & 40.99 & 21.25 & 2.50 & 36.00 & 18.00 & 57.08 & 53.81 & 35.03 \\

&
4K & 38.57 & 41.15 & 46.38 & 48.07 & 40.87 & 22.51 & 17.50 & 29.28 & 21.89 & 25.12 & 16.84 & 71.50 & 90.52 & 41.32 & 24.75 & 2.50 & 49.00 & 22.50 & 57.94 & 53.64 & 38.09 \\

&
6K & 40.41 & 43.74 & 52.60 & 50.08 & 42.57 & 22.32 & 17.71 & 30.54 & 22.38 & 25.25 & 17.37 & 72.50 & 90.77 & 41.94 & 26.25 & 2.50 & 57.00 & 21.00 & 57.14 & 54.07 & 39.41 \\

&
8K & 40.53 & 44.58 & 54.77 & 49.13 & 42.38 & 23.75 & 20.35 & 31.35 & 22.61 & 25.28 & 17.45 & 72.50 & 90.77 & 42.27 & 27.75 & 3.00 & 70.00 & 20.50 & 57.13 & 55.35 & 40.57 \\

\midrule

\multirow{4}{*}{\ourmethod{}} &
2K & 38.15 & 49.58 & 51.15 & 48.73 & 42.04 & 20.24 & 33.47 & 29.13 & 23.91 & 25.10 & 14.47 & 70.50 & 88.31 & 42.28 & 20.55 & 4.90 & 89.00 & 17.50 & 55.21 & 48.73 & 40.65 \\

& 
4K & 40.55 & 51.38 & 55.88 & 51.56 & 41.88 & 18.78 & 35.93 & 31.63 & 24.15 & 25.38 & 15.80 & 71.50 & 89.98 & 42.55 & 19.18 & 4.95 & 95.85 & 21.00 & 55.79 & 55.55 & \textbf{42.46} \\

&
6K & 40.54 & 51.04 & 55.42 & 51.78 & 41.71 & 15.26 & 36.60 & 32.80 & 24.67 & 25.34 & 16.32 & 72.50 & 90.58 & 43.19 & 22.22 & 5.21 & 97.42 & 20.50 & 55.87 & 56.28 & 42.76 \\

& 
8K & 40.66 & 50.93 & 56.16 & 53.52 & 42.07 & 17.79 & 37.08 & 33.31 & 24.27 & 25.31 & 16.00 & 72.50 & 89.48 & 42.52 & 25.33 & 5.00 & 96.46 & 20.00 & 56.20 & 56.80 & \textbf{43.07} \\

\bottomrule
\end{tabular}
\end{adjustbox}
}
\end{center}
\end{table*}

\begin{table*}[!t]
\caption{\textbf{Full Llama-3.1-8B/3.2-3B-Instruct LongBench Results with $B=128$ (Higher is better)}. We highlight the methods showing the best performance within a given budget with \textbf{boldface}. \textdagger: A subset of samples were evaluated due to OOM errors (183/200 samples are evaluated).}
\label{table:longbench_full_rotated}
\begin{center}
\resizebox{1.0\textwidth}{0.9\textheight}{%
\begin{adjustbox}{angle=90}
\begin{tabular}{cccccccccccccccccccccccc}
\toprule

&
&
\multicolumn{4}{c}{Single Doc. QA} &
\multicolumn{4}{c}{Multi Doc. QA} &
\multicolumn{4}{c}{Summarization} &
\multicolumn{4}{c}{Few$\-$shot Learning} &
\multicolumn{3}{c}{Synthetic} &
\multicolumn{2}{c}{Code} &
\\

\cmidrule(lr){3-6}
\cmidrule(lr){7-10}
\cmidrule(lr){11-14}
\cmidrule(lr){15-18}
\cmidrule(lr){19-21}
\cmidrule(lr){22-23}

&
&
{Narrative QA} & {Qasper} & {MF-en} & {MF-zh} &
{HotpotQA} & {2WikiMQA} & {Musique} & {DuReader} &
{GovReport} & {QMSum} & {MultiNews} & {VCSum} &
{TREC} & {TriviaQA} & {SAMSum} & {LSHT} &
{PCount} & {PR-en} & {PR-zh} &
{Lcc} & {RB-P} & 
{Avg.} 
\\

\midrule
\midrule
\addlinespace

\multicolumn{2}{c}{Llama3.1-8B} &
30.05\textsuperscript{\textdagger} & 47.00 & 56.12 & 59.86 & 57.33 & 47.81 & 32.25 & 35.64 & 34.86 & 25.32 & 27.02 & 17.28 & 73.00 & 91.61 & 43.37 & 45.50 & 8.33 & 99.50 & 99.00 & 61.66 & 51.94 & 49.74 \\
\midrule

\multirow{4}{*}{H2O} &
2K & 1.74 & 21.15 & 25.33 & 21.65 & 26.11 & 24.15 & 8.78 & 5.90 & 2.17 & 2.70 & 16.78 & 3.97 & 44.00 & 29.36 & 7.62 & 14.50 & 2.25 & 5.88 & 4.00 & 40.15 & 12.14 & 15.25 \\

&
4K & 4.07 & 36.16 & 36.00 & 38.02 & 33.52 & 32.87 & 17.78 & 5.68 & 6.66 & 5.95 & 24.09 & 6.03 & 55.00 & 47.65 & 17.41 & 18.50 & 4.00 & 24.50 & 31.25 & 54.85 & 21.43 & 24.83 \\

&
6K & 8.52 & 43.31 & 44.80 & 46.24 & 40.03 & 42.46 & 21.68 & 7.33 & 11.85 & 8.78 & 26.03 & 7.82 & 62.00 & 56.39 & 25.72 & 18.00 & 5.75 & 45.50 & 90.00 & 58.62 & 29.53 & 33.35 \\

&
8K & 13.85 & 44.94 & 47.81 & 56.14 & 43.64 & 44.90 & 23.65 & 11.01 & 18.78 & 11.35 & 26.49 & 9.96 & 69.50 & 69.05 & 33.41 & 19.50 & 5.25 & 62.50 & 98.67 & 59.74 & 36.26 & 38.40 \\

\midrule

\multirow{4}{*}{TOVA} &
2K & 22.57 & 37.26 & 39.43 & 36.96 & 45.74 & 34.48 & 14.77 & 16.98 & 28.87 & 21.17 & 26.95 & 16.21 & 62.50 & 90.73 & 42.74 & 18.75 & 0.00 & 18.00 & 32.00 & 62.68 & 52.48 & 34.35 \\

&
4K & 22.68 & 44.55 & 47.87 & 51.16 & 46.76 & 44.54 & 20.56 & 22.50 & 30.95 & 22.13 & 26.96 & 16.75 & 61.50 & 90.56 & 43.27 & 25.25 & 3.00 & 43.50 & 84.00 & 61.62 & 53.40 & 41.12 \\

&
6K & 24.59 & 45.93 & 53.92 & 55.45 & 55.09 & 47.43 & 25.07 & 27.68 & 32.33 & 24.10 & 27.00 & 16.91 & 68.50 & 90.81 & 43.89 & 29.00 & 4.25 & 67.00 & 98.67 & 61.50 & 52.39 & 45.31 \\
&
8K & 24.86 & 46.78 & 54.83 & 57.95 & 54.52 & 49.00 & 26.40 & 31.15 & 33.44 & 24.76 & 27.00 & 17.33 & 71.00 & 91.11 & 43.29 & 33.25 & 6.25 & 87.00 & 98.67 & 61.49 & 51.79 & 47.23 \\

\midrule

\multirow{4}{*}{Sink} &
2K & 21.83 & 34.27 & 29.24 & 32.82 & 38.64 & 29.50 & 12.59 & 16.18 & 28.51 & 20.21 & 26.62 & 15.54 & 65.00 & 89.46 & 42.20 & 22.25 & 2.00 & 25.50 & 32.50 & 64.95 & 59.54 & 33.78 \\

&
4K & 22.94 & 43.01 & 39.08 & 46.16 & 44.04 & 41.39 & 19.09 & 16.54 & 31.08 & 21.57 & 26.78 & 16.73 & 70.00 & 91.53 & 42.29 & 29.25 & 3.00 & 38.50 & 71.00 & 62.12 & 58.84 & 39.76 \\

&
6K & 25.41 & 47.40 & 44.13 & 52.78 & 47.39 & 45.73 & 21.90 & 17.55 & 32.53 & 22.19 & 26.87 & 17.05 & 72.00 & 91.25 & 43.41 & 33.75 & 3.08 & 52.50 & 98.00 & 62.22 & 56.24 & 43.49 \\

&
8K & 23.53 & 46.63 & 48.68 & 55.77 & 49.61 & 47.16 & 21.14 & 19.54 & 33.10 & 23.20 & 26.92 & 16.91 & 72.00 & 91.29 & 43.79 & 37.00 & 3.25 & 66.00 & 99.00 & 62.18 & 56.43 & 44.91 \\

\midrule

\multirow{4}{*}{SnapKV} &
2K & 21.81 & 37.22 & 37.19 & 38.29 & 46.10 & 35.42 & 16.53 & 16.37 & 29.83 & 21.05 & 26.77 & 16.16 & 61.00 & 88.84 & 42.56 & 21.75 & 4.03 & 51.50 & 81.17 & 62.37 & 51.45 & 38.45 \\

&
4K & 24.79 & 44.22 & 47.30 & 50.27 & 48.49 & 46.73 & 20.55 & 22.04 & 32.19 & 22.68 & 26.95 & 16.95 & 67.50 & 90.98 & 43.14 & 25.00 & 5.17 & 89.50 & 96.67 & 61.44 & 51.20 & 44.46 \\

&
6K & 24.10 & 45.57 & 50.44 & 55.27 & 53.12 & 48.41 & 24.27 & 27.46 & 33.43 & 23.53 & 27.03 & 16.84 & 71.50 & 92.28 & 43.58 & 27.00 & 5.25 & 98.00 & 99.00 & 61.32 & 52.16 & 46.65 \\

&
8K & 25.15 & 46.55 & 53.39 & 57.65 & 56.00 & 48.75 & 27.82 & 32.66 & 33.67 & 24.85 & 27.01 & 17.37 & 72.50 & 91.78 & 43.54 & 33.75 & 5.08 & 100.00 & 98.67 & 61.48 & 51.41 & 48.05 \\

\midrule

\multirow{4}{*}{KeyL2Norm\cite{devoto2024simple}} &
2K & 8.66 & 36.63 & 41.70 & 37.70 & 33.75 & 32.25 & 5.39 & 17.73 & 19.64 & 14.96 & 26.69 & 11.02 & 63.00 & 58.94 & 28.45 & 22.50 & 3.05 & 17.75 & 20.13 & 52.40 & 25.63 & 27.52 \\

&
4K & 15.38 & 44.06 & 51.75 & 47.52 & 50.22 & 45.56 & 18.44 & 27.61 & 29.50 & 22.27 & 26.93 & 13.44 & 69.50 & 79.41 & 37.50 & 27.00 & 4.50 & 58.00 & 80.50 & 58.82 & 35.08 & 40.14 \\

&
6K & 21.75 & 45.63 & 55.06 & 53.77 & 52.93 & 47.70 & 25.63 & 32.17 & 32.66 & 24.85 & 26.94 & 15.12 & 70.00 & 86.89 & 40.51 & 34.50 & 5.25 & 75.00 & 98.00 & 60.98 & 43.14 & 45.17 \\

&
8K & 25.12 & 45.70 & 56.02 & 56.57 & 58.14 & 47.77 & 30.29 & 34.09 & 33.81 & 24.89 & 26.94 & 15.89 & 71.50 & 89.26 & 41.34 & 39.00 & 7.25 & 87.00 & 99.00 & 62.05 & 48.28 & 47.61 \\

\midrule

\multirow{4}{*}{\ourmethod{}} &
2K & 26.64 & 41.73 & 50.99 & 51.18 & 51.59 & 46.47 & 22.84 & 32.37 & 29.02 & 23.86 & 26.76 & 14.81 & 66.50 & 85.92 & 39.26 & 42.25 & 3.17 & 96.00 & 96.25 & 59.17 & 39.42 & \textbf{45.06} \\

&
4K & 28.70 & 45.62 & 56.06 & 56.83 & 54.58 & 49.31 & 28.25 & 33.06 & 32.30 & 25.03 & 27.07 & 16.32 & 70.00 & 90.85 & 42.84 & 44.50 & 4.21 & 99.00 & 97.67 & 60.80 & 48.00 & \textbf{48.14} \\

&
6K & 29.90 & 46.33 & 55.11 & 59.00 & 56.80 & 49.50 & 31.52 & 34.97 & 33.44 & 24.58 & 26.98 & 16.80 & 72.00 & 90.99 & 43.10 & 47.00 & 5.27 & 99.50 & 99.00 & 61.40 & 49.70 & \textbf{49.19} \\
&
8K & 33.57 & 46.77 & 55.48 & 59.16 & 56.87 & 49.37 & 30.88 & 34.54 & 34.17 & 25.12 & 27.01 & 17.13 & 72.50 & 92.28 & 42.81 & 46.50 & 5.83 & 99.50 & 98.67 & 61.48 & 50.90 & \textbf{49.55} \\

\midrule
\midrule
\addlinespace


\multicolumn{2}{c}{Llama3.2-3B} &
23.76 & 40.23 & 50.09 & 55.84 & 50.69 & 42.29 & 26.84 & 36.24 & 33.09 & 24.30 & 25.21 & 16.41 & 72.50 & 90.11 & 42.58 & 34.00 & 3.00 & 96.50 & 20.50 & 56.22 & 56.52 & 42.71 \\
\midrule

\multirow{4}{*}{H2O} &
2K & 1.63 & 19.96 & 20.20 & 15.20 & 18.02 & 19.56 & 2.88 & 6.47 & 0.78 & 1.55 & 15.97 & 3.11 & 41.00 & 21.97 & 9.83 & 11.75 & 0.50 & 0.50 & 0.00 & 39.71 & 13.91 & 12.60 \\

&
4K & 2.92 & 31.94 & 33.23 & 33.25 & 24.49 & 28.08 & 7.55 & 10.10 & 5.44 & 6.30 & 22.77 & 4.81 & 53.00 & 38.85 & 20.33 & 15.50 & 1.50 & 7.50 & 6.25 & 51.23 & 22.94 & 20.38 \\

&
6K & 4.62 & 38.81 & 39.06 & 45.17 & 34.66 & 35.52 & 15.21 & 13.36 & 10.51 & 10.01 & 24.25 & 6.66 & 61.50 & 53.23 & 27.37 & 15.25 & 0.50 & 13.00 & 19.50 & 54.55 & 32.29 & 26.43 \\

&
8K & 9.65 & 39.66 & 43.20 & 52.60 & 38.09 & 40.41 & 21.46 & 18.55 & 17.80 & 13.28 & 24.67 & 9.12 & 70.00 & 64.30 & 32.19 & 17.00 & 2.00 & 24.50 & 21.50 & 55.00 & 39.09 & 31.15 \\
\midrule

\multirow{4}{*}{TOVA} &
2K & 17.14 & 30.14 & 32.44 & 31.64 & 35.96 & 30.05 & 13.08 & 9.62 & 26.15 & 19.70 & 25.04 & 15.47 & 56.50 & 87.81 & 40.48 & 16.75 & 2.50 & 11.50 & 6.50 & 55.51 & 52.36 & 29.35 \\

&
4K & 20.52 & 39.53 & 42.47 & 45.80 & 44.12 & 38.42 & 18.22 & 17.76 & 29.36 & 21.36 & 24.96 & 16.60 & 63.50 & 88.98 & 41.50 & 18.75 & 3.00 & 23.50 & 15.00 & 55.72 & 56.66 & 34.56 \\

&
6K & 20.22 & 39.78 & 45.86 & 52.93 & 49.08 & 41.54 & 20.43 & 24.78 & 30.50 & 22.17 & 25.11 & 16.37 & 66.50 & 89.00 & 42.50 & 21.00 & 4.00 & 46.50 & 20.50 & 55.57 & 57.53 & 37.71 \\

&
8K & 21.08 & 40.67 & 49.07 & 55.17 & 48.69 & 41.93 & 23.05 & 31.02 & 31.64 & 22.85 & 25.21 & 16.55 & 69.00 & 89.25 & 42.19 & 22.50 & 2.50 & 71.00 & 21.50 & 55.77 & 57.47 & 39.91 \\
\midrule

\multirow{4}{*}{Sink} &
2K & 16.85 & 30.69 & 26.58 & 27.32 & 33.26 & 25.27 & 13.82 & 9.38 & 26.74 & 19.15 & 25.15 & 15.88 & 65.00 & 86.17 & 40.79 & 19.50 & 1.50 & 19.50 & 8.50 & 56.65 & 52.73 & 29.54 \\

&
4K & 19.46 & 38.61 & 36.22 & 41.68 & 41.97 & 35.84 & 13.37 & 9.86 & 29.34 & 20.19 & 25.06 & 16.44 & 71.00 & 88.06 & 41.31 & 21.75 & 2.50 & 35.50 & 16.00 & 56.48 & 52.43 & 33.96 \\

&
6K & 19.33 & 40.29 & 37.95 & 49.68 & 46.48 & 40.29 & 15.31 & 11.10 & 30.43 & 21.35 & 25.14 & 16.64 & 71.50 & 88.93 & 42.04 & 23.50 & 3.50 & 47.00 & 19.50 & 56.55 & 54.11 & 36.22 \\

&
8K & 20.15 & 40.02 & 41.94 & 53.57 & 48.15 & 42.24 & 16.01 & 14.76 & 31.64 & 22.10 & 25.20 & 16.50 & 73.00 & 89.26 & 42.37 & 26.25 & 3.50 & 62.50 & 20.50 & 56.86 & 56.63 & 38.25 \\

\midrule

\multirow{4}{*}{SnapKV} &
2K & 17.38 & 31.37 & 31.48 & 29.65 & 37.77 & 30.05 & 11.54 & 9.66 & 27.03 & 19.93 & 24.97 & 15.97 & 59.00 & 88.13 & 40.48 & 16.25 & 3.50 & 32.50 & 9.00 & 56.32 & 55.91 & 30.85 \\

&
4K & 19.85 & 39.22 & 39.86 & 47.33 & 46.70 & 37.98 & 16.64 & 16.88 & 29.79 & 21.21 & 25.01 & 16.74 & 65.50 & 89.35 & 40.95 & 18.25 & 2.50 & 62.50 & 22.50 & 55.74 & 56.88 & 36.73 \\

&
6K & 20.83 & 39.65 & 44.48 & 51.84 & 49.30 & 40.18 & 20.28 & 25.32 & 31.27 & 22.73 & 25.09 & 16.81 & 69.00 & 89.95 & 41.47 & 18.75 & 4.00 & 85.00 & 20.50 & 55.69 & 57.82 & 39.52 \\

&
8K & 20.49 & 40.80 & 48.16 & 55.44 & 48.78 & 41.65 & 24.79 & 30.40 & 31.81 & 23.46 & 25.17 & 16.44 & 70.00 & 90.17 & 41.99 & 22.00 & 5.00 & 94.00 & 21.50 & 55.77 & 57.29 & 41.20 \\

\midrule

\multirow{4}{*}{KeyL2Norm \cite{devoto2024simple}} &
2K & 7.67 & 30.39 & 31.85 & 30.64 & 29.47 & 25.76 & 7.41 & 14.17 & 15.36 & 12.42 & 24.20 & 7.91 & 48.00 & 50.99 & 23.09 & 17.50 & 2.00 & 7.50 & 5.00 & 48.92 & 26.32 & 22.22 \\

&
4K & 12.92 & 37.59 & 44.71 & 43.85 & 38.89 & 33.42 & 12.41 & 22.42 & 24.63 & 19.27 & 24.77 & 11.37 & 63.00 & 72.51 & 31.75 & 19.00 & 3.87 & 9.50 & 11.00 & 55.82 & 40.08 & 30.13 \\

&
6K & 13.02 & 40.55 & 48.17 & 50.87 & 43.18 & 40.17 & 17.10 & 31.29 & 28.99 & 21.47 & 25.08 & 13.30 & 65.00 & 79.61 & 37.16 & 21.50 & 2.50 & 46.50 & 18.50 & 56.49 & 47.05 & 35.60 \\

&
8K & 15.72 & 40.54 & 47.88 & 54.29 & 49.29 & 43.79 & 22.22 & 33.18 & 31.86 & 22.50 & 25.19 & 14.28 & 70.00 & 84.92 & 39.45 & 23.00 & 1.50 & 69.00 & 20.50 & 56.82 & 50.73 & 38.89 \\

\midrule

\multirow{4}{*}{\ourmethod{}} &
2K & 18.29 & 36.65 & 45.44 & 47.47 & 46.09 & 35.41 & 13.79 & 28.89 & 28.16 & 21.45 & 25.01 & 13.56 & 60.00 & 85.24 & 37.00 & 24.88 & 1.00 & 60.50 & 12.00 & 54.13 & 42.01 & \textbf{35.09} \\

&
4K & 22.34 & 40.60 & 49.15 & 52.56 & 50.14 & 40.30 & 21.65 & 32.46 & 31.38 & 23.44 & 25.06 & 15.28 & 66.50 & 87.92 & 41.41 & 27.50 & 2.50 & 88.50 & 19.50 & 55.55 & 52.24 & \textbf{40.28} \\

&
6K & 22.29 & 40.68 & 50.14 & 54.51 & 51.74 & 42.19 & 24.83 & 34.64 & 32.39 & 23.53 & 25.19 & 15.88 & 71.00 & 90.02 & 42.00 & 28.75 & 3.00 & 95.00 & 21.50 & 55.86 & 54.39 & \textbf{41.88} \\

& 
8K & 22.41 & 40.77 & 50.10 & 55.62 & 49.83 & 43.58 & 28.09 & 34.30 & 32.78 & 23.60 & 25.17 & 15.77 & 72.00 & 90.17 & 42.46 & 30.75 & 3.50 & 96.50 & 21.50 & 55.85 & 55.65 & \textbf{42.40} \\

\bottomrule
\end{tabular}
\end{adjustbox}
}
\end{center}
\end{table*}

\begin{table*}[!t]
\caption{\textbf{Full Qwen-2.5-7B/3B-Instruct LongBench Results with $B=128$ (Higher is better)}. We highlight the best and second best methods within a given budget with \textbf{bold} and \underline{underline}.}
\label{table:longbench_full_rotated_qwen}
\begin{center}
\resizebox{0.75\textwidth}{0.9\textheight}{%
\begin{adjustbox}{angle=90}
\begin{tabular}{cccccccccccccccccccccccc}
\toprule

&
&
\multicolumn{4}{c}{Single Doc. QA} &
\multicolumn{4}{c}{Multi Doc. QA} &
\multicolumn{4}{c}{Summarization} &
\multicolumn{4}{c}{Few$\-$shot Learning} &
\multicolumn{3}{c}{Synthetic} &
\multicolumn{2}{c}{Code} &
\\

\cmidrule(lr){3-6}
\cmidrule(lr){7-10}
\cmidrule(lr){11-14}
\cmidrule(lr){15-18}
\cmidrule(lr){19-21}
\cmidrule(lr){22-23}

&
&
{Narrative QA} & {Qasper} & {MF-en} & {MF-zh} &
{HotpotQA} & {2WikiMQA} & {Musique} & {DuReader} &
{GovReport} & {QMSum} & {MultiNews} & {VCSum} &
{TREC} & {TriviaQA} & {SAMSum} & {LSHT} &
{PCount} & {PR-en} & {PR-zh} &
{Lcc} & {RB-P} & 
{Avg.} 
\\

\midrule
\midrule
\addlinespace

\multicolumn{2}{c}{Qwen2.5-7B} &
15.75 & 16.94 & 32.38 & 14.87 & 11.89 & 11.88 & 7.95 & 30.56 & 34.33 & 19.91 & 22.67 & 15.28 & 65.50 & 87.05 & 44.75 & 39.47 & 4.22 & 93.08 & 68.79 & 57.74 & 61.84 & 36.04 \\
\midrule

\multirow{4}{*}{H2O} &
2K & 2.39 & 7.29 & 12.42 & 11.73 & 8.55 & 11.06 & 2.73 & 6.07 & 3.62 & 6.60 & 15.69 & 3.44 & 42.50 & 28.21 & 10.63 & 16.00 & 0.65 & 0.00 & 1.50 & 35.10 & 18.77 & 11.66 \\

&
4K & 1.99 & 11.92 & 19.88 & 14.72 & 10.24 & 10.12 & 4.73 & 7.51 & 9.08 & 10.14 & 20.85 & 6.15 & 51.00 & 37.37 & 20.57 & 15.75 & 3.16 & 6.43 & 27.62 & 52.14 & 29.09 & 17.64 \\

&
6K & 3.34 & 14.79 & 23.94 & 15.33 & 11.45 & 11.30 & 5.52 & 9.30 & 14.63 & 14.27 & 22.06 & 8.68 & 55.75 & 51.99 & 28.01 & 18.50 & 1.39 & 9.41 & 54.53 & 54.68 & 38.32 & 22.25 \\

&
8K & 6.10 & 15.55 & 28.29 & 14.99 & 12.37 & 14.65 & 6.24 & 16.10 & 20.78 & 17.22 & 22.44 & 11.12 & 59.00 & 58.74 & 33.05 & 24.92 & 1.82 & 15.73 & 55.16 & 55.63 & 44.56 & 25.45 \\

\midrule

\multirow{4}{*}{TOVA} &
2K & 8.49 & 14.01 & 21.04 & 11.55 & 14.00 & 11.51 & 5.09 & 14.45 & 27.43 & 17.84 & 22.83 & 15.75 & 56.50 & 79.56 & 40.55 & 20.50 & 2.43 & 9.29 & 20.45 & 55.99 & 56.15 & \underline{25.02} \\

&
4K & 12.83 & 17.03 & 27.01 & 14.14 & 16.80 & 13.37 & 8.05 & 21.15 & 29.21 & 19.05 & 22.73 & 15.81 & 58.50 & 82.67 & 42.71 & 27.75 & 1.67 & 15.00 & 43.53 & 56.69 & 56.59 & 28.68 \\

&
6K & 15.77 & 15.33 & 30.31 & 14.58 & 19.30 & 13.78 & 9.11 & 25.70 & 30.40 & 19.95 & 22.91 & 15.10 & 61.50 & 83.47 & 42.90 & 27.60 & 1.15 & 21.75 & 55.16 & 57.68 & 57.99 & 30.54 \\

&
8K & 15.69 & 15.55 & 33.09 & 14.78 & 18.37 & 13.99 & 11.26 & 27.92 & 31.33 & 20.17 & 22.82 & 15.27 & 62.00 & 84.49 & 43.01 & 33.21 & 2.78 & 30.33 & 55.16 & 57.45 & 58.96 & 31.79 \\

\midrule

\multirow{4}{*}{Sink} &
2K & 7.68 & 14.68 & 19.36 & 12.98 & 8.58 & 9.34 & 3.97 & 10.66 & 27.75 & 17.96 & 22.33 & 14.26 & 62.00 & 75.26 & 42.76 & 23.00 & 1.07 & 7.50 & 21.70 & 50.11 & 49.57 & 23.93 \\

&
4K & 7.68 & 17.18 & 23.46 & 14.65 & 9.09 & 9.38 & 4.39 & 10.57 & 30.23 & 18.62 & 22.79 & 15.48 & 64.50 & 83.39 & 44.19 & 29.81 & 2.74 & 18.08 & 64.95 & 55.23 & 51.30 & 28.46 \\

&
6K & 7.37 & 16.61 & 25.73 & 14.74 & 11.29 & 11.27 & 5.69 & 11.49 & 31.47 & 18.72 & 22.86 & 15.62 & 64.50 & 84.86 & 44.47 & 31.07 & 3.59 & 41.48 & 71.21 & 55.89 & 55.99 & 30.76 \\

&
8K & 8.22 & 16.15 & 28.63 & 15.52 & 11.59 & 11.11 & 6.44 & 17.29 & 32.56 & 18.49 & 22.91 & 15.45 & 65.00 & 83.95 & 44.15 & 35.75 & 4.14 & 48.72 & 71.71 & 56.82 & 56.42 & 31.95 \\

\midrule

\multirow{4}{*}{SnapKV} &
2K & 11.60 & 12.45 & 23.66 & 12.54 & 12.38 & 10.64 & 7.03 & 14.40 & 27.57 & 18.27 & 22.85 & 15.23 & 58.00 & 81.78 & 41.13 & 23.67 & 3.76 & 19.42 & 35.09 & 55.83 & 56.53 & \textbf{26.85} \\

&
4K & 14.35 & 13.45 & 28.28 & 13.76 & 16.33 & 11.74 & 8.12 & 21.96 & 29.71 & 19.18 & 22.82 & 15.20 & 57.00 & 83.80 & 43.27 & 25.51 & 2.41 & 39.83 & 55.28 & 58.12 & 58.67 & \textbf{30.42} \\

&
6K & 14.34 & 16.35 & 31.12 & 14.16 & 17.56 & 14.10 & 8.74 & 25.56 & 31.09 & 20.16 & 22.84 & 15.04 & 60.00 & 83.80 & 42.99 & 30.81 & 2.91 & 54.17 & 55.16 & 57.48 & 60.26 & 32.32 \\

&
8K & 15.60 & 15.81 & 33.47 & 14.77 & 18.02 & 14.49 & 10.53 & 27.45 & 31.99 & 20.09 & 22.84 & 15.20 & 61.00 & 84.08 & 43.01 & 34.28 & 4.58 & 64.25 & 55.16 & 57.46 & 60.59 & 33.56 \\
\midrule

\multirow{4}{*}{\ourmethod{}} &
2K & 7.17 & 10.06 & 24.28 & 12.96 & 10.03 & 10.81 & 5.71 & 23.59 & 17.09 & 18.03 & 22.71 & 11.73 & 52.00 & 53.98 & 32.22 & 30.00 & 3.52 & 33.33 & 34.37 & 53.13 & 32.05 & 23.75 \\

&
4K & 13.16 & 12.00 & 32.08 & 14.15 & 13.04 & 13.68 & 5.39 & 27.83 & 25.61 & 20.42 & 22.76 & 13.37 & 54.00 & 70.90 & 40.23 & 39.62 & 3.37 & 58.42 & 54.86 & 55.95 & 42.27 & \underline{30.15} \\

&
6K & 13.42 & 14.90 & 35.11 & 14.62 & 18.70 & 14.09 & 8.34 & 30.74 & 29.83 & 21.08 & 22.86 & 14.39 & 60.50 & 77.03 & 42.00 & 38.08 & 4.13 & 69.83 & 54.33 & 56.76 & 51.50 & \textbf{32.96} \\

&
8K & 14.90 & 15.77 & 34.32 & 14.90 & 19.02 & 13.93 & 9.27 & 30.65 & 31.29 & 20.90 & 22.79 & 14.69 & 60.00 & 83.01 & 43.65 & 40.70 & 3.87 & 74.13 & 55.16 & 57.33 & 52.80 & \textbf{33.96} \\

\midrule
\midrule
\addlinespace


\multicolumn{2}{c}{Qwen2.5-3B} &
18.08 & 22.49 & 39.72 & 28.99 & 27.86 & 20.45 & 18.93 & 32.95 & 32.80 & 23.74 & 24.89 & 10.95 & 67.50 & 85.05 & 43.88 & 37.50 & 5.00 & 40.97 & 20.61 & 51.91 & 47.53 & 33.42 \\
\midrule

\multirow{4}{*}{H2O} &
2K & 1.80 & 9.18 & 11.62 & 12.62 & 8.54 & 7.31 & 2.70 & 8.57 & 5.93 & 6.97 & 16.89 & 4.15 & 38.00 & 21.87 & 7.69 & 16.00 & 1.00 & 3.00 & 2.69 & 37.36 & 22.90 & 11.75 \\

&
4K & 2.82 & 17.34 & 23.27 & 21.55 & 10.18 & 10.47 & 3.03 & 11.05 & 11.06 & 10.73 & 22.93 & 5.77 & 50.75 & 34.93 & 18.03 & 16.25 & 4.35 & 7.32 & 16.64 & 47.74 & 29.42 & 17.89 \\

&
6K & 5.52 & 18.62 & 27.93 & 27.26 & 12.61 & 15.07 & 4.26 & 14.65 & 14.92 & 13.89 & 24.21 & 7.55 & 58.00 & 45.94 & 24.93 & 16.00 & 2.91 & 9.10 & 21.32 & 49.50 & 34.54 & 21.37 \\

&
8K & 6.16 & 19.84 & 32.32 & 29.66 & 16.01 & 17.74 & 4.99 & 19.42 & 20.21 & 16.49 & 24.54 & 9.22 & 64.00 & 56.10 & 32.56 & 20.25 & 3.13 & 11.61 & 21.32 & 50.61 & 38.80 & 24.52 \\
\midrule

\multirow{4}{*}{TOVA} &
2K & 11.69 & 14.94 & 25.33 & 19.90 & 17.29 & 12.58 & 5.91 & 15.34 & 26.67 & 21.49 & 24.78 & 16.58 & 51.50 & 68.80 & 41.79 & 17.75 & 0.23 & 6.00 & 8.68 & 49.79 & 48.60 & 24.08 \\

&
4K & 12.19 & 18.31 & 32.56 & 27.33 & 20.58 & 13.80 & 7.74 & 21.11 & 28.82 & 22.27 & 24.98 & 15.82 & 59.00 & 80.66 & 43.05 & 21.25 & 1.11 & 9.56 & 19.18 & 49.93 & 46.74 & 27.43 \\

&
6K & 13.62 & 19.56 & 34.64 & 28.72 & 21.67 & 16.25 & 8.47 & 27.26 & 30.17 & 23.10 & 24.94 & 14.53 & 63.50 & 81.88 & 42.97 & 26.25 & 1.16 & 10.58 & 21.32 & 51.30 & 47.70 & 29.03 \\

&
8K & 14.66 & 20.93 & 37.77 & 29.72 & 22.57 & 17.08 & 9.63 & 29.10 & 31.12 & 23.17 & 24.83 & 13.48 & 67.00 & 84.11 & 43.55 & 28.25 & 2.06 & 13.08 & 21.32 & 51.32 & 47.64 & 30.11 \\
\midrule

\multirow{4}{*}{Sink} &
2K & 9.71 & 13.75 & 22.11 & 20.97 & 11.63 & 14.67 & 4.43 & 11.89 & 27.39 & 19.45 & 24.36 & 13.00 & 56.00 & 58.77 & 42.37 & 22.75 & 2.50 & 8.75 & 4.33 & 48.27 & 49.72 & \underline{23.18} \\

&
4K & 11.46 & 18.28 & 30.40 & 24.02 & 15.50 & 14.62 & 6.97 & 11.48 & 30.08 & 20.12 & 24.86 & 13.35 & 63.00 & 68.77 & 43.11 & 29.50 & 3.00 & 11.75 & 16.75 & 51.76 & 50.47 & 26.63 \\

&
6K & 13.01 & 20.03 & 32.59 & 27.06 & 18.62 & 15.77 & 9.37 & 13.35 & 30.98 & 20.70 & 24.97 & 13.05 & 66.50 & 75.39 & 42.77 & 30.00 & 4.00 & 14.92 & 20.44 & 52.32 & 50.35 & 28.39 \\

&
8K & 10.26 & 21.27 & 35.15 & 29.49 & 24.31 & 17.60 & 9.40 & 17.59 & 31.81 & 21.14 & 24.99 & 12.07 & 68.50 & 79.17 & 43.32 & 34.50 & 1.00 & 24.00 & 20.44 & 51.47 & 49.38 & 29.85 \\

\midrule

\multirow{4}{*}{SnapKV} &
2K & 11.70 & 13.91 & 24.28 & 20.75 & 14.80 & 10.89 & 7.42 & 15.09 & 27.40 & 21.63 & 24.64 & 15.71 & 54.50 & 75.35 & 42.72 & 22.38 & 2.50 & 18.33 & 19.06 & 49.65 & 50.59 & \textbf{25.87} \\

&
4K & 12.98 & 22.21 & 31.77 & 26.57 & 18.33 & 14.41 & 10.83 & 21.14 & 29.14 & 22.38 & 24.89 & 15.88 & 61.00 & 84.17 & 42.63 & 21.17 & 3.75 & 25.42 & 22.46 & 50.22 & 48.77 & \textbf{29.05} \\

&
6K & 14.16 & 20.09 & 36.15 & 28.41 & 19.14 & 15.59 & 12.70 & 26.21 & 30.35 & 22.75 & 24.91 & 14.96 & 65.00 & 83.92 & 43.52 & 25.50 & 5.00 & 32.20 & 21.32 & 51.04 & 47.49 & \textbf{30.50} \\

&
8K & 12.76 & 20.88 & 37.10 & 30.10 & 22.49 & 18.19 & 13.83 & 29.54 & 31.33 & 23.37 & 24.80 & 13.75 & 65.50 & 84.88 & 44.49 & 28.00 & 5.20 & 35.83 & 21.32 & 51.31 & 47.82 & 31.55 \\

\midrule

\multirow{4}{*}{\ourmethod{}} &
2K & 3.99 & 10.20 & 22.71 & 15.77 & 8.93 & 13.12 & 5.51 & 24.79 & 17.35 & 16.56 & 24.31 & 10.53 & 57.50 & 41.19 & 27.43 & 25.25 & 3.88 & 11.32 & 10.68 & 46.44 & 34.33 & 20.56 \\

&
4K & 9.39 & 18.61 & 31.37 & 23.64 & 18.96 & 18.10 & 7.86 & 27.01 & 25.64 & 22.28 & 24.70 & 10.93 & 65.00 & 63.02 & 37.74 & 30.50 & 4.00 & 20.08 & 26.85 & 49.27 & 39.24 & \underline{27.34} \\

&
6K & 10.51 & 19.71 & 35.51 & 28.89 & 26.92 & 18.28 & 11.47 & 31.83 & 29.32 & 23.63 & 24.90 & 11.46 & 64.50 & 75.36 & 41.51 & 34.50 & 3.57 & 31.41 & 21.32 & 50.60 & 42.95 & \underline{30.39} \\

& 
8K & 12.24 & 20.49 & 38.52 & 29.60 & 23.05 & 19.41 & 15.95 & 30.92 & 31.10 & 23.89 & 24.83 & 11.80 & 67.50 & 79.05 & 41.73 & 36.50 & 3.08 & 40.21 & 21.32 & 51.05 & 45.88 & \textbf{31.82} \\

\bottomrule
\end{tabular}
\end{adjustbox}
}
\end{center}
\end{table*}

\begin{table*}[!t]
\caption{\textbf{Llama-3.2-3B-Instruct LongBench Results with prompt block size $B \in [64,256]$ (Higher is better)}. We highlight the best and second best methods within a given budget with \textbf{bold} and \underline{underline}.}
\label{table:longbench_full_rotated_block_64_256}
\begin{center}
\resizebox{0.75\textwidth}{0.9\textheight}{%
\begin{adjustbox}{angle=90}
\begin{tabular}{cccccccccccccccccccccccc}
\toprule

&
&
\multicolumn{4}{c}{Single Doc. QA} &
\multicolumn{4}{c}{Multi Doc. QA} &
\multicolumn{4}{c}{Summarization} &
\multicolumn{4}{c}{Few$\-$shot Learning} &
\multicolumn{3}{c}{Synthetic} &
\multicolumn{2}{c}{Code} &
\\

\cmidrule(lr){3-6}
\cmidrule(lr){7-10}
\cmidrule(lr){11-14}
\cmidrule(lr){15-18}
\cmidrule(lr){19-21}
\cmidrule(lr){22-23}

&
&
{Narrative QA} & {Qasper} & {MF-en} & {MF-zh} &
{HotpotQA} & {2WikiMQA} & {Musique} & {DuReader} &
{GovReport} & {QMSum} & {MultiNews} & {VCSum} &
{TREC} & {TriviaQA} & {SAMSum} & {LSHT} &
{PCount} & {PR-en} & {PR-zh} &
{Lcc} & {RB-P} & 
{Avg.} 
\\

\midrule
\midrule
\addlinespace

\multicolumn{2}{c}{$B=64$} &
23.76 & 40.23 & 50.09 & 55.84 & 50.69 & 42.29 & 26.84 & 36.24 & 33.09 & 24.30 & 25.21 & 16.41 & 72.50 & 90.11 & 42.58 & 34.00 & 3.00 & 96.50 & 20.50 & 56.22 & 56.52 & 42.71 \\
\midrule

\multirow{4}{*}{H2O} &
2K & 1.30 & 18.23 & 16.96 & 14.25 & 12.26 & 16.84 & 0.72 & 6.88 & 0.78 & 1.29 & 16.24 & 2.97 & 35.00 & 18.07 & 9.78 & 13.50 & 0.50 & 1.50 & 0.25 & 39.98 & 12.68 & 11.43 \\

&
4K & 2.30 & 31.92 & 31.59 & 32.44 & 25.02 & 22.58 & 4.89 & 9.29 & 5.36 & 5.57 & 23.02 & 4.80 & 50.50 & 33.05 & 18.76 & 13.75 & 1.00 & 2.50 & 3.75 & 50.11 & 21.57 & 18.75 \\

&
6K & 3.12 & 38.87 & 37.63 & 44.58 & 34.38 & 35.35 & 12.13 & 12.82 & 10.43 & 9.38 & 24.31 & 6.63 & 63.00 & 45.46 & 26.08 & 14.25 & 0.00 & 8.50 & 17.50 & 53.84 & 30.66 & 25.19 \\

&
8K & 9.11 & 40.09 & 45.04 & 52.58 & 39.24 & 38.25 & 15.88 & 17.82 & 17.97 & 12.96 & 24.70 & 9.26 & 70.50 & 60.57 & 32.71 & 16.00 & 0.50 & 21.50 & 19.50 & 54.72 & 39.26 & 30.39 \\

\midrule

\multirow{4}{*}{TOVA} &
2K & 17.24 & 30.03 & 31.04 & 32.07 & 36.58 & 28.97 & 12.17 & 10.50 & 26.35 & 19.78 & 25.07 & 15.20 & 60.50 & 87.45 & 41.07 & 15.50 & 1.00 & 10.50 & 6.00 & 55.30 & 52.36 & 29.27 \\

&
4K & 19.59 & 39.27 & 42.16 & 44.54 & 44.58 & 37.63 & 18.62 & 17.44 & 28.82 & 21.46 & 25.18 & 16.49 & 62.50 & 89.48 & 41.89 & 17.50 & 3.50 & 24.00 & 15.00 & 55.14 & 56.58 & 34.35 \\

&
6K & 21.53 & 40.32 & 46.16 & 52.81 & 49.44 & 40.35 & 18.73 & 25.74 & 30.47 & 22.30 & 25.23 & 16.23 & 66.00 & 90.00 & 42.48 & 21.00 & 3.00 & 47.00 & 18.50 & 55.15 & 58.03 & 37.64 \\

&
8K & 21.32 & 40.87 & 50.20 & 54.84 & 49.35 & 42.11 & 24.52 & 30.71 & 31.60 & 23.05 & 25.20 & 16.57 & 69.00 & 90.50 & 41.80 & 23.25 & 5.50 & 74.50 & 19.50 & 55.45 & 58.34 & 40.39 \\

\midrule

\multirow{4}{*}{Sink} &
2K & 15.68 & 29.91 & 26.61 & 27.42 & 33.16 & 25.43 & 13.36 & 9.37 & 26.70 & 19.25 & 25.01 & 16.04 & 64.50 & 86.33 & 41.04 & 19.00 & 1.50 & 19.00 & 8.50 & 56.48 & 52.91 & 29.39 \\

&
4K & 19.35 & 37.77 & 36.91 & 41.61 & 41.46 & 35.26 & 12.88 & 10.30 & 29.59 & 20.27 & 25.01 & 16.09 & 69.00 & 88.06 & 41.88 & 21.25 & 2.50 & 36.00 & 16.50 & 55.85 & 52.51 & 33.81 \\

&
6K & 19.36 & 40.01 & 38.16 & 49.57 & 46.39 & 39.01 & 14.20 & 10.41 & 30.54 & 21.64 & 25.07 & 16.93 & 71.00 & 88.50 & 42.09 & 23.50 & 3.50 & 47.50 & 19.50 & 56.02 & 53.97 & 36.04 \\

&
8K & 20.49 & 39.89 & 42.21 & 53.08 & 47.72 & 41.44 & 16.37 & 15.08 & 31.80 & 22.04 & 25.07 & 16.89 & 72.00 & 89.26 & 42.20 & 25.25 & 3.50 & 61.50 & 20.50 & 56.21 & 56.42 & 38.04 \\

\midrule

\multirow{4}{*}{SnapKV} &
2K & 18.07 & 31.21 & 30.60 & 31.32 & 37.31 & 30.69 & 11.71 & 9.98 & 26.98 & 19.87 & 25.13 & 16.05 & 61.00 & 87.85 & 40.36 & 16.25 & 3.00 & 32.00 & 8.00 & 56.78 & 55.77 & \underline{30.95} \\

&
4K & 19.30 & 39.01 & 40.81 & 44.86 & 47.83 & 37.74 & 16.75 & 16.94 & 29.90 & 21.37 & 25.25 & 16.70 & 65.00 & 88.88 & 40.99 & 16.75 & 3.50 & 59.50 & 18.00 & 55.15 & 57.20 & \underline{36.26} \\

&
6K & 20.85 & 40.32 & 45.58 & 52.80 & 48.03 & 41.63 & 18.47 & 24.98 & 30.68 & 22.32 & 25.12 & 16.71 & 68.50 & 90.00 & 41.49 & 18.25 & 4.50 & 85.00 & 18.50 & 55.46 & 57.30 & \underline{39.36} \\

&
8K & 20.64 & 41.10 & 47.89 & 54.70 & 48.49 & 41.79 & 21.58 & 31.46 & 32.03 & 23.32 & 25.22 & 16.65 & 71.00 & 90.00 & 41.44 & 21.00 & 4.00 & 95.00 & 19.50 & 55.44 & 57.54 & \underline{40.94} \\

\midrule

\multirow{4}{*}{\ourmethod{}} &
2K & 17.40 & 38.12 & 45.25 & 47.09 & 45.28 & 34.23 & 13.97 & 27.96 & 28.34 & 21.09 & 24.94 & 13.45 & 56.00 & 83.29 & 38.53 & 24.25 & 1.00 & 63.50 & 12.50 & 54.41 & 40.00 & \textbf{34.79} \\

&
4K & 22.38 & 42.00 & 50.84 & 53.16 & 47.34 & 40.56 & 21.43 & 32.82 & 30.96 & 23.32 & 25.08 & 15.35 & 67.50 & 87.09 & 42.53 & 27.50 & 2.00 & 89.00 & 18.50 & 54.39 & 53.09 & \textbf{40.33} \\

&
6K & 22.25 & 41.55 & 50.32 & 54.91 & 51.61 & 42.02 & 24.62 & 34.60 & 32.40 & 23.63 & 25.26 & 16.18 & 71.00 & 88.42 & 41.90 & 29.25 & 3.50 & 95.00 & 19.50 & 55.69 & 55.27 & \textbf{41.85} \\

&
8K & 21.57 & 41.24 & 50.12 & 55.33 & 49.98 & 43.78 & 27.45 & 34.30 & 32.43 & 23.67 & 25.21 & 15.99 & 71.50 & 90.84 & 42.32 & 30.00 & 3.00 & 96.50 & 19.50 & 55.54 & 56.56 & \textbf{42.23} \\

\midrule
\midrule
\addlinespace


\multicolumn{2}{c}{$B=256$} &
23.76 & 40.23 & 50.09 & 55.84 & 50.69 & 42.29 & 26.84 & 36.24 & 33.09 & 24.30 & 25.21 & 16.41 & 72.50 & 90.11 & 42.58 & 34.00 & 3.00 & 96.50 & 20.50 & 56.22 & 56.52 & 42.71 \\
\midrule

\multirow{4}{*}{H2O} &
2K & 1.87 & 19.19 & 23.35 & 16.06 & 20.95 & 17.91 & 2.33 & 7.31 & 0.83 & 1.73 & 16.29 & 3.28 & 42.50 & 25.82 & 9.83 & 14.50 & 2.75 & 0.00 & 1.00 & 38.59 & 14.57 & 13.36 \\

&
4K & 5.58 & 31.49 & 33.28 & 32.36 & 26.85 & 28.34 & 8.61 & 10.83 & 5.46 & 6.27 & 23.13 & 4.98 & 54.00 & 41.73 & 19.14 & 15.00 & 1.00 & 8.00 & 6.50 & 51.39 & 24.24 & 20.87 \\

&
6K & 7.49 & 37.99 & 41.54 & 44.71 & 39.53 & 36.18 & 15.46 & 13.87 & 10.77 & 10.46 & 24.60 & 6.94 & 61.50 & 58.64 & 27.20 & 15.50 & 1.00 & 15.00 & 19.00 & 54.30 & 32.45 & 27.34 \\

&
8K & 9.92 & 39.71 & 43.84 & 52.15 & 39.14 & 39.23 & 19.28 & 18.04 & 17.78 & 13.60 & 24.98 & 8.99 & 71.00 & 67.64 & 32.83 & 17.25 & 2.50 & 30.00 & 21.00 & 55.25 & 39.33 & 31.59 \\
\midrule

\multirow{4}{*}{TOVA} &
2K & 18.46 & 30.80 & 33.74 & 32.24 & 39.73 & 32.18 & 14.10 & 10.17 & 26.32 & 20.17 & 25.18 & 15.67 & 62.00 & 89.36 & 40.60 & 16.25 & 2.00 & 16.00 & 4.50 & 55.98 & 53.42 & 30.42 \\

&
4K & 20.36 & 38.18 & 42.53 & 46.18 & 46.83 & 36.60 & 17.81 & 17.09 & 28.99 & 20.54 & 25.23 & 16.34 & 63.50 & 89.13 & 41.55 & 19.12 & 3.50 & 29.50 & 14.50 & 55.55 & 55.91 & 34.71 \\

&
6K & 20.71 & 40.46 & 45.82 & 52.95 & 51.33 & 40.92 & 21.47 & 24.80 & 30.71 & 22.37 & 25.48 & 16.40 & 67.00 & 88.50 & 41.91 & 20.25 & 3.50 & 49.00 & 20.00 & 55.85 & 56.74 & 37.91 \\

&
8K & 20.84 & 40.79 & 48.02 & 54.82 & 50.12 & 40.71 & 25.17 & 30.85 & 31.47 & 22.98 & 25.49 & 16.41 & 71.00 & 89.00 & 41.99 & 22.25 & 2.00 & 76.50 & 21.00 & 55.93 & 57.59 & 40.23 \\
\midrule

\multirow{4}{*}{Sink} &
2K & 15.48 & 30.74 & 26.93 & 27.76 & 33.86 & 25.63 & 13.30 & 9.56 & 26.70 & 19.41 & 25.15 & 15.56 & 66.00 & 86.44 & 41.17 & 18.50 & 1.00 & 19.00 & 8.50 & 56.47 & 52.37 & 29.50 \\

&
4K & 18.91 & 38.38 & 37.28 & 41.76 & 41.81 & 35.17 & 13.07 & 9.96 & 29.23 & 20.61 & 25.01 & 16.38 & 70.50 & 88.06 & 42.07 & 21.75 & 1.50 & 36.00 & 17.00 & 56.05 & 51.98 & 33.93 \\

&
6K & 19.37 & 40.30 & 38.14 & 49.77 & 46.14 & 39.60 & 14.18 & 11.12 & 30.48 & 21.54 & 25.20 & 16.28 & 71.00 & 88.80 & 41.79 & 23.50 & 3.50 & 47.00 & 20.00 & 55.65 & 53.53 & 36.04 \\

&
8K & 20.35 & 40.19 & 42.96 & 53.25 & 47.82 & 41.10 & 17.87 & 14.83 & 31.32 & 22.14 & 25.17 & 16.56 & 72.50 & 89.26 & 42.47 & 26.25 & 5.00 & 62.50 & 20.00 & 56.00 & 56.03 & 38.27 \\
\midrule

\multirow{4}{*}{SnapKV} &
2K & 17.04 & 31.55 & 31.75 & 31.87 & 37.25 & 34.03 & 12.17 & 9.80 & 27.17 & 20.16 & 25.26 & 16.27 & 61.00 & 87.63 & 40.95 & 17.50 & 4.00 & 35.00 & 10.00 & 55.93 & 54.39 & \underline{31.46} \\

&
4K & 19.67 & 39.34 & 40.95 & 45.81 & 44.27 & 38.78 & 16.17 & 16.61 & 29.99 & 21.15 & 25.49 & 16.71 & 66.50 & 89.15 & 40.76 & 19.75 & 3.50 & 66.00 & 19.00 & 55.44 & 56.45 & \underline{36.74} \\

&
6K & 22.98 & 40.13 & 44.80 & 52.39 & 50.61 & 39.28 & 20.31 & 24.84 & 31.28 & 22.17 & 25.42 & 16.55 & 70.00 & 89.79 & 41.91 & 19.75 & 3.50 & 86.50 & 20.00 & 55.93 & 57.49 & \underline{39.79} \\

&
8K & 20.23 & 40.75 & 47.40 & 54.77 & 49.74 & 41.47 & 24.74 & 31.19 & 31.95 & 22.96 & 25.45 & 16.27 & 71.50 & 90.17 & 42.12 & 23.25 & 4.00 & 93.50 & 21.00 & 55.93 & 57.36 & \underline{41.23} \\
\midrule

\multirow{4}{*}{\ourmethod{}} &
2K & 18.99 & 37.20 & 46.57 & 46.61 & 45.14 & 33.12 & 15.15 & 29.54 & 28.30 & 21.92 & 25.41 & 14.44 & 58.50 & 86.30 & 37.77 & 23.75 & 1.00 & 67.00 & 13.50 & 54.07 & 41.59 & \textbf{35.52} \\

&
4K & 21.00 & 41.48 & 48.69 & 52.94 & 47.07 & 40.26 & 22.69 & 33.07 & 31.11 & 23.35 & 25.22 & 15.57 & 65.50 & 88.72 & 41.79 & 26.50 & 2.00 & 91.00 & 19.00 & 55.79 & 51.50 & \textbf{40.20} \\

&
6K & 21.61 & 40.78 & 49.68 & 55.08 & 50.64 & 41.38 & 24.57 & 34.62 & 32.02 & 23.36 & 25.56 & 15.86 & 71.50 & 89.42 & 42.37 & 28.75 & 3.00 & 94.00 & 21.00 & 55.96 & 55.96 & \textbf{41.77} \\

& 
8K & 22.24 & 40.83 & 50.23 & 55.45 & 50.50 & 43.28 & 28.37 & 33.88 & 32.67 & 23.85 & 25.48 & 15.83 & 72.50 & 90.34 & 42.31 & 30.50 & 3.00 & 95.50 & 21.00 & 56.01 & 55.58 & \textbf{42.35} \\

\bottomrule
\end{tabular}
\end{adjustbox}
}
\end{center}
\end{table*}

\begin{table*}[!t]
\caption{\textbf{\ourmethod{} + Recent tokens Llama-3.1-8B/3.2-3B-Instruct LongBench Results with $B=128$ (Higher is better)}. We highlight the methods showing the best performance within a given budget with \textbf{boldface}. $\text{X}\%$ indicates $\text{X}\%$ of KV cache budget is reserved to keep the recent tokens, while the remaining cache budget is managed by \ourmethod{} algorithm. \textdagger: A subset of samples were evaluated due to OOM errors (183/200 samples are evaluated).}
\label{table:longbench_full_rotated_sliding_window}
\begin{center}
\resizebox{0.75\textwidth}{0.9\textheight}{%
\begin{adjustbox}{angle=90}
\begin{tabular}{cccccccccccccccccccccccc}
\toprule

&
&
\multicolumn{4}{c}{Single Doc. QA} &
\multicolumn{4}{c}{Multi Doc. QA} &
\multicolumn{4}{c}{Summarization} &
\multicolumn{4}{c}{Few$\-$shot Learning} &
\multicolumn{3}{c}{Synthetic} &
\multicolumn{2}{c}{Code} &
\\

\cmidrule(lr){3-6}
\cmidrule(lr){7-10}
\cmidrule(lr){11-14}
\cmidrule(lr){15-18}
\cmidrule(lr){19-21}
\cmidrule(lr){22-23}

&
&
{Narrative QA} & {Qasper} & {MF-en} & {MF-zh} &
{HotpotQA} & {2WikiMQA} & {Musique} & {DuReader} &
{GovReport} & {QMSum} & {MultiNews} & {VCSum} &
{TREC} & {TriviaQA} & {SAMSum} & {LSHT} &
{PCount} & {PR-en} & {PR-zh} &
{Lcc} & {RB-P} & 
{Avg.} 
\\

\midrule
\midrule
\addlinespace

\multicolumn{2}{c}{Llama3.1-8B} &
30.05\textsuperscript{\textdagger} & 47.00 & 56.12 & 59.86 & 57.33 & 47.81 & 32.25 & 35.64 & 34.86 & 25.32 & 27.02 & 17.28 & 73.00 & 91.61 & 43.37 & 45.50 & 8.33 & 99.50 & 99.00 & 61.66 & 51.94 & 49.74 \\
\midrule

\multirow{4}{*}{Sink} &
2K & 21.83 & 34.27 & 29.24 & 32.82 & 38.64 & 29.50 & 12.59 & 16.18 & 28.51 & 20.21 & 26.62 & 15.54 & 65.00 & 89.46 & 42.20 & 22.25 & 2.00 & 25.50 & 32.50 & 64.95 & 59.54 & 33.78 \\

&
4K & 22.94 & 43.01 & 39.08 & 46.16 & 44.04 & 41.39 & 19.09 & 16.54 & 31.08 & 21.57 & 26.78 & 16.73 & 70.00 & 91.53 & 42.29 & 29.25 & 3.00 & 38.50 & 71.00 & 62.12 & 58.84 & 39.76 \\

&
6K & 25.41 & 47.40 & 44.13 & 52.78 & 47.39 & 45.73 & 21.90 & 17.55 & 32.53 & 22.19 & 26.87 & 17.05 & 72.00 & 91.25 & 43.41 & 33.75 & 3.08 & 52.50 & 98.00 & 62.22 & 56.24 & 43.49 \\

&
8K & 23.53 & 46.63 & 48.68 & 55.77 & 49.61 & 47.16 & 21.14 & 19.54 & 33.10 & 23.20 & 26.92 & 16.91 & 72.00 & 91.29 & 43.79 & 37.00 & 3.25 & 66.00 & 99.00 & 62.18 & 56.43 & 44.91 \\

\midrule

\multirow{4}{*}{SnapKV} &
2K & 21.81 & 37.22 & 37.19 & 38.29 & 46.10 & 35.42 & 16.53 & 16.37 & 29.83 & 21.05 & 26.77 & 16.16 & 61.00 & 88.84 & 42.56 & 21.75 & 4.03 & 51.50 & 81.17 & 62.37 & 51.45 & 38.45 \\

&
4K & 24.79 & 44.22 & 47.30 & 50.27 & 48.49 & 46.73 & 20.55 & 22.04 & 32.19 & 22.68 & 26.95 & 16.95 & 67.50 & 90.98 & 43.14 & 25.00 & 5.17 & 89.50 & 96.67 & 61.44 & 51.20 & 44.46 \\

&
6K & 24.10 & 45.57 & 50.44 & 55.27 & 53.12 & 48.41 & 24.27 & 27.46 & 33.43 & 23.53 & 27.03 & 16.84 & 71.50 & 92.28 & 43.58 & 27.00 & 5.25 & 98.00 & 99.00 & 61.32 & 52.16 & 46.65 \\

&
8K & 25.15 & 46.55 & 53.39 & 57.65 & 56.00 & 48.75 & 27.82 & 32.66 & 33.67 & 24.85 & 27.01 & 17.37 & 72.50 & 91.78 & 43.54 & 33.75 & 5.08 & 100.00 & 98.67 & 61.48 & 51.41 & 48.05 \\

\midrule

\multirow{4}{*}{\shortstack[l]{KeyDiff + 10\%\\ Recent Tokens}} &
2K & 27.36 & 40.79 & 50.63 & 50.21 & 49.44 & 44.77 & 25.04 & 30.25 & 29.41 & 23.28 & 26.76 & 15.78 & 64.00 & 85.90 & 43.76 & 44.00 & 4.25 & 98.50 & 95.83 & 62.97 & 48.70 & \textbf{45.79} \\

&
4K & 27.42 & 45.67 & 53.62 & 57.40 & 53.77 & 47.30 & 27.48 & 33.40 & 31.90 & 24.18 & 26.83 & 16.51 & 70.00 & 88.89 & 43.95 & 47.00 & 5.21 & 99.00 & 98.67 & 62.20 & 52.98 & \textbf{48.26} \\

&
6K & 30.58 & 46.24 & 55.58 & 57.82 & 56.44 & 47.74 & 29.03 & 34.33 & 33.15 & 24.67 & 26.91 & 16.98 & 71.50 & 92.22 & 43.96 & 47.00 & 4.23 & 99.50 & 99.00 & 62.56 & 52.98 & \underline{49.16} \\

&
8K & 32.17 & 46.66 & 55.65 & 58.65 & 57.24 & 48.64 & 30.54 & 33.44 & 33.85 & 24.93 & 26.94 & 17.12 & 72.50 & 91.72 & 43.70 & 46.00 & 5.47 & 99.50 & 99.00 & 62.56 & 51.75 & 49.43 \\

\midrule

\multirow{4}{*}{\shortstack[l]{KeyDiff + 20\%\\ Recent Tokens}} &
2K & 26.73 & 41.88 & 49.79 & 48.46 & 49.68 & 42.51 & 26.90 & 30.90 & 28.83 & 23.34 & 26.76 & 16.11 & 62.50 & 87.39 & 43.99 & 42.50 & 4.58 & 94.00 & 95.58 & 63.33 & 50.01 & \underline{45.51} \\

&
4K & 26.05 & 45.11 & 55.51 & 55.87 & 54.14 & 47.41 & 25.52 & 33.51 & 31.87 & 24.40 & 26.95 & 16.09 & 70.00 & 90.22 & 43.76 & 43.50 & 4.00 & 99.50 & 98.67 & 62.06 & 52.27 & 47.92 \\

&
6K & 28.39 & 46.43 & 55.12 & 58.27 & 57.02 & 48.94 & 28.98 & 34.31 & 33.32 & 24.62 & 26.96 & 17.20 & 71.50 & 91.72 & 44.02 & 46.50 & 5.13 & 99.50 & 99.00 & 62.39 & 52.57 & 49.14 \\

&
8K & 31.35 & 46.74 & 54.71 & 58.60 & 58.19 & 48.14 & 31.77 & 34.02 & 33.55 & 24.98 & 26.95 & 17.11 & 72.50 & 91.72 & 44.17 & 46.00 & 7.38 & 99.50 & 99.00 & 62.39 & 52.04 & \textbf{49.56} \\

\midrule

\multirow{4}{*}{\ourmethod{}} &
2K & 26.64 & 41.73 & 50.99 & 51.18 & 51.59 & 46.47 & 22.84 & 32.37 & 29.02 & 23.86 & 26.76 & 14.81 & 66.50 & 85.92 & 39.26 & 42.25 & 3.17 & 96.00 & 96.25 & 59.17 & 39.42 & 45.06 \\

&
4K & 28.70 & 45.62 & 56.06 & 56.83 & 54.58 & 49.31 & 28.25 & 33.06 & 32.30 & 25.03 & 27.07 & 16.32 & 70.00 & 90.85 & 42.84 & 44.50 & 4.21 & 99.00 & 97.67 & 60.80 & 48.00 & \underline{48.14} \\

&
6K & 29.90 & 46.33 & 55.11 & 59.00 & 56.80 & 49.50 & 31.52 & 34.97 & 33.44 & 24.58 & 26.98 & 16.80 & 72.00 & 90.99 & 43.10 & 47.00 & 5.27 & 99.50 & 99.00 & 61.40 & 49.70 & \textbf{49.19} \\
&
8K & 33.57 & 46.77 & 55.48 & 59.16 & 56.87 & 49.37 & 30.88 & 34.54 & 34.17 & 25.12 & 27.01 & 17.13 & 72.50 & 92.28 & 42.81 & 46.50 & 5.83 & 99.50 & 98.67 & 61.48 & 50.90 & \underline{49.55} \\

\midrule
\midrule
\addlinespace


\multicolumn{2}{c}{Llama3.2-3B} &
23.76 & 40.23 & 50.09 & 55.84 & 50.69 & 42.29 & 26.84 & 36.24 & 33.09 & 24.30 & 25.21 & 16.41 & 72.50 & 90.11 & 42.58 & 34.00 & 3.00 & 96.50 & 20.50 & 56.22 & 56.52 & 42.71 \\
\midrule

\multirow{4}{*}{Sink} &
2K & 16.85 & 30.69 & 26.58 & 27.32 & 33.26 & 25.27 & 13.82 & 9.38 & 26.74 & 19.15 & 25.15 & 15.88 & 65.00 & 86.17 & 40.79 & 19.50 & 1.50 & 19.50 & 8.50 & 56.65 & 52.73 & 29.54 \\

&
4K & 19.46 & 38.61 & 36.22 & 41.68 & 41.97 & 35.84 & 13.37 & 9.86 & 29.34 & 20.19 & 25.06 & 16.44 & 71.00 & 88.06 & 41.31 & 21.75 & 2.50 & 35.50 & 16.00 & 56.48 & 52.43 & 33.96 \\

&
6K & 19.33 & 40.29 & 37.95 & 49.68 & 46.48 & 40.29 & 15.31 & 11.10 & 30.43 & 21.35 & 25.14 & 16.64 & 71.50 & 88.93 & 42.04 & 23.50 & 3.50 & 47.00 & 19.50 & 56.55 & 54.11 & 36.22 \\

&
8K & 20.15 & 40.02 & 41.94 & 53.57 & 48.15 & 42.24 & 16.01 & 14.76 & 31.64 & 22.10 & 25.20 & 16.50 & 73.00 & 89.26 & 42.37 & 26.25 & 3.50 & 62.50 & 20.50 & 56.86 & 56.63 & 38.25 \\

\midrule

\multirow{4}{*}{SnapKV} &
2K & 17.38 & 31.37 & 31.48 & 29.65 & 37.77 & 30.05 & 11.54 & 9.66 & 27.03 & 19.93 & 24.97 & 15.97 & 59.00 & 88.13 & 40.48 & 16.25 & 3.50 & 32.50 & 9.00 & 56.32 & 55.91 & 30.85 \\

&
4K & 19.85 & 39.22 & 39.86 & 47.33 & 46.70 & 37.98 & 16.64 & 16.88 & 29.79 & 21.21 & 25.01 & 16.74 & 65.50 & 89.35 & 40.95 & 18.25 & 2.50 & 62.50 & 22.50 & 55.74 & 56.88 & 36.73 \\

&
6K & 20.83 & 39.65 & 44.48 & 51.84 & 49.30 & 40.18 & 20.28 & 25.32 & 31.27 & 22.73 & 25.09 & 16.81 & 69.00 & 89.95 & 41.47 & 18.75 & 4.00 & 85.00 & 20.50 & 55.69 & 57.82 & 39.52 \\

&
8K & 20.49 & 40.80 & 48.16 & 55.44 & 48.78 & 41.65 & 24.79 & 30.40 & 31.81 & 23.46 & 25.17 & 16.44 & 70.00 & 90.17 & 41.99 & 22.00 & 5.00 & 94.00 & 21.50 & 55.77 & 57.29 & 41.20 \\

\midrule

\multirow{4}{*}{\shortstack[l]{KeyDiff + 10\%\\ Recent Tokens}} &
2K & 19.92 & 37.58 & 45.99 & 46.41 & 44.24 & 34.81 & 15.20 & 29.22 & 28.23 & 22.49 & 24.99 & 14.71 & 59.00 & 84.08 & 41.38 & 24.38 & 3.00 & 70.50 & 15.50 & 57.49 & 51.91 & \textbf{36.72} \\

&
4K & 21.97 & 40.70 & 49.76 & 52.53 & 47.79 & 41.92 & 20.71 & 32.83 & 31.12 & 23.37 & 25.07 & 15.77 & 67.50 & 87.47 & 41.28 & 25.00 & 2.50 & 91.00 & 20.00 & 57.14 & 56.25 & \textbf{40.56} \\

&
6K & 23.76 & 40.56 & 50.79 & 54.23 & 50.54 & 42.15 & 25.20 & 34.12 & 31.95 & 23.10 & 25.21 & 15.91 & 71.50 & 88.17 & 41.67 & 26.50 & 3.50 & 96.00 & 20.50 & 56.80 & 55.18 & 41.78 \\

&
8K & 23.65 & 40.58 & 49.96 & 55.71 & 51.52 & 44.10 & 25.95 & 34.41 & 32.80 & 23.77 & 25.25 & 15.75 & 73.50 & 89.67 & 41.83 & 28.75 & 3.00 & 96.00 & 20.50 & 57.00 & 56.12 & 42.37 \\

\midrule

\multirow{4}{*}{\shortstack[l]{KeyDiff + 20\% \\Recent Tokens}} &
2K & 19.27 & 34.86 & 45.26 & 44.94 & 42.81 & 34.15 & 14.27 & 27.31 & 28.10 & 22.13 & 25.08 & 14.96 & 61.50 & 85.07 & 41.62 & 24.17 & 2.00 & 71.50 & 16.00 & 57.56 & 53.58 & \underline{36.48} \\

&
4K & 22.56 & 41.28 & 48.37 & 52.00 & 47.50 & 42.64 & 19.64 & 32.28 & 31.17 & 22.83 & 25.08 & 15.68 & 68.50 & 88.17 & 41.38 & 24.25 & 2.00 & 90.50 & 19.50 & 57.04 & 55.05 & \underline{40.35} \\

&
6K & 22.88 & 40.74 & 50.37 & 55.02 & 49.90 & 42.34 & 25.02 & 34.95 & 31.97 & 23.28 & 25.24 & 16.32 & 71.50 & 89.61 & 41.43 & 27.00 & 3.00 & 95.50 & 20.50 & 56.82 & 55.77 & \underline{41.86} \\

&
8K & 23.82 & 40.58 & 50.02 & 55.71 & 51.25 & 43.64 & 24.52 & 34.33 & 33.02 & 23.82 & 25.20 & 16.08 & 73.50 & 89.67 & 41.99 & 29.25 & 3.00 & 97.00 & 20.50 & 57.05 & 56.48 & \textbf{42.40} \\

\midrule

\multirow{4}{*}{\ourmethod{}} &
2K & 18.29 & 36.65 & 45.44 & 47.47 & 46.09 & 35.41 & 13.79 & 28.89 & 28.16 & 21.45 & 25.01 & 13.56 & 60.00 & 85.24 & 37.00 & 24.88 & 1.00 & 60.50 & 12.00 & 54.13 & 42.01 & 35.09 \\

&
4K & 22.34 & 40.60 & 49.15 & 52.56 & 50.14 & 40.30 & 21.65 & 32.46 & 31.38 & 23.44 & 25.06 & 15.28 & 66.50 & 87.92 & 41.41 & 27.50 & 2.50 & 88.50 & 19.50 & 55.55 & 52.24 & 40.28 \\

&
6K & 22.29 & 40.68 & 50.14 & 54.51 & 51.74 & 42.19 & 24.83 & 34.64 & 32.39 & 23.53 & 25.19 & 15.88 & 71.00 & 90.02 & 42.00 & 28.75 & 3.00 & 95.00 & 21.50 & 55.86 & 54.39 & \textbf{41.88} \\

& 
8K & 22.41 & 40.77 & 50.10 & 55.62 & 49.83 & 43.58 & 28.09 & 34.30 & 32.78 & 23.60 & 25.17 & 15.77 & 72.00 & 90.17 & 42.46 & 30.75 & 3.50 & 96.50 & 21.50 & 55.85 & 55.65 & \underline{42.40} \\

\bottomrule
\end{tabular}
\end{adjustbox}
}
\end{center}
\end{table*}

\section{Ablation study}
\label{appendix:ablation-study}
\begin{table}[t]
\caption{\textbf{Anchor vector ablation study.} Average of Full Longbench results for Llama 3.2-3B-Instruct. \ourmethod{} results match \cref{table:longbench_full_rotated}. 
(Higher is better)
}
\label{table:anchor_vector_ablation}
\begin{center}\resizebox{0.35\textwidth}{!}{
\begin{tabular}{ccccc}

\toprule
& 2K & 4K & 6K & 8K \\
\midrule
\ourmethod{}                  & 35.09 & 40.28 & 41.88 & 42.40 \\
\midrule
$\operatorname{Pairwise}$     & 35.24 & 40.61 & 41.87 & 42.45 \\
$\operatorname{Median}$       & 35.43 & 40.67 & 41.89 & 42.26 \\

\bottomrule
\end{tabular}
}
\end{center}
\vspace{-5mm}
\end{table}

\paragraph{Selecting the Anchor Vector} We have mainly evaluated \ourmethod{} using the method described in \cref{eq:keydiff}. Scores to determine eviction are measured via cosine similarity with an \emph{anchor vector} which can be computed in several ways. We run LongBench on Llama3.2-3B-Instruct with eviction policies using the following anchor choices: pairwise cosine similarity from \cref{eq:keydiff}, denoted $\operatorname{Pairwise}$; \ourmethod{}, using the mean of all normalized keys as an anchor, and using the median of keys as an anchor, denoted $\operatorname{Median}$.
\cref{table:anchor_vector_ablation} summarizes the average LongBench accuracy for the different methods to Llama 3.2-3B-Instruct.
\ourmethod{} shows similar average scores to $\operatorname{Pairwise}$. 
Additionally, \ourmethod{} and $\operatorname{Median}$ show similar scores, demonstrating that \ourmethod{} is robust to the selection of the anchor design.

\begin{table}[t]
\caption{\textbf{Distance metric ablation study.} Average of Full Longbench results of Llama 3.2-3B-Instruct. \ourmethod{} results match \cref{table:longbench_full_rotated}. (Higher is better)}
\label{table:similarity_metric_ablation}
\begin{center}\resizebox{0.35\textwidth}{!}{
\begin{tabular}{ccccc}

\toprule
& 2K & 4K & 6K & 8K \\
\midrule
\ourmethod{}               & 35.09 & 40.28 & 41.88 & 42.40 \\
\midrule
$\operatorname{DotProd}$   & 30.14 & 38.01 & 41.09 & 42.23 \\
$\operatorname{Euclidean}$ & 13.68 & 21.06 & 25.91 & 29.53 \\

\bottomrule
\end{tabular}
}
\end{center}
\vspace{-5mm}
\end{table}

\paragraph{Selecting the Similarity Metric} We use cosine similarity as the scoring metric for eviction  in \ourmethod{} based on our discussion in \cref{sec:kv_caching}. 
This could be replaced with other metrics like the dot product or Euclidean distance. 
We evaluate \ourmethod{} variants using dot product and Euclidean distance as the similarity metric, denoted $\operatorname{DotProd}$ and $\operatorname{Euclidean}$ respectively, and report the results in \cref{table:similarity_metric_ablation}.
\ourmethod{} and $\operatorname{DotProd}$ show similar performance for 6K and 8K budgets. However, \ourmethod{} outperforms $\operatorname{DotProd}$ for smaller cache sizes. This implies that considering both the direction and the magnitude of the keys to compute similarity are important for identifying the tokens to evict. On the other hand, $\operatorname{Euclidean}$ shows a significant performance drop relative to \ourmethod{}.

\subsection{Needle in a Haystack results for Sink attention}

\begin{figure*}[t!]
     \centering
     \includegraphics[width=.5\linewidth]{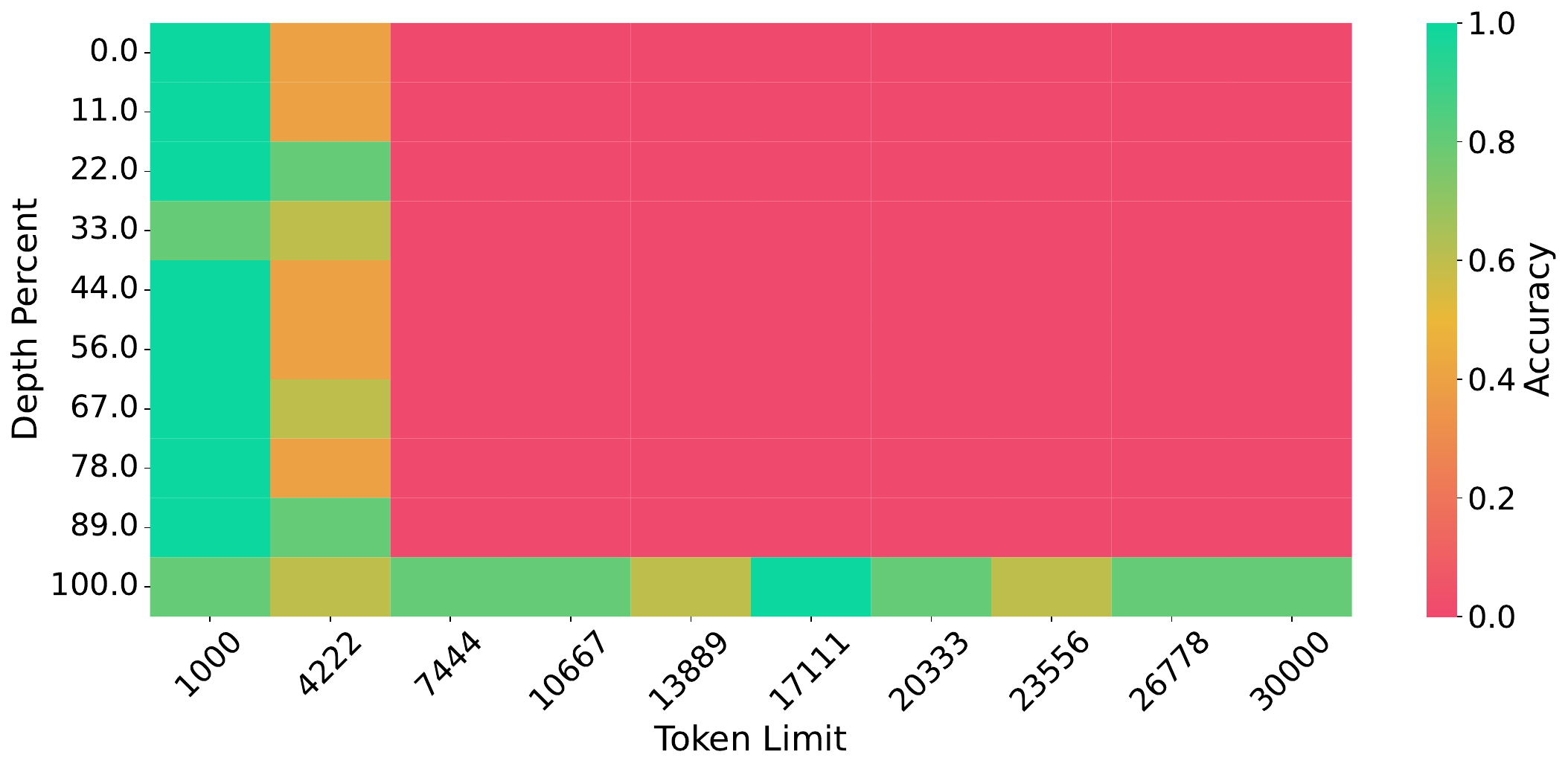}
     \caption{Needle in a Haystack results for Sink Attention \cite{xiaoefficient}}
     \label{fig:needle-in-a-haystack-sink}
\end{figure*}
\newpage
\section{Additional Experiments and Analyses}
\label{app:additional_results}

To further substantiate the empirical findings presented in
\cref{sec:method,sec:experiment}, we provide an extended set of experiments and complementary analyses. These additional studies examine the correlation between key geometry and attention, evaluate retrieval-critical and reasoning scenarios, and measure the efficiency of \ourmethod{} under constrained hardware settings. Together, they offer a broader view of the method’s robustness, efficiency, and general applicability across diverse inference conditions.

\subsection{Retrieval-Critical Evaluation: Phonebook Lookup}
\label{app:phonebook}

We evaluate \ourmethod{} on a retrieval-critical setting using the \textit{Phonebook Lookup} task, where the model retrieves a phone number corresponding to a queried name
from a long list of entries. Accuracy is averaged across five random phonebooks of varying lengths. As shown in \cref{tab:phonebook}, \ourmethod{} maintains high retrieval accuracy for shorter contexts and degrades more gracefully than attention-based baselines as context length increases.

\begin{table}[h!]
\centering
\caption{Accuracy (\%) on the Phonebook Lookup task using Llama-3.2-3B-Instruct with a 6k cache budget.}
\label{tab:phonebook}
\scriptsize
\begin{tabular}{lcccccccccc}
\toprule
Method & 100 & 478 & 856 & 1233 & 1611 & 1989 & 2367 & 2744 & 3122 & 3500 \\
\midrule
Dense   & 1.0 & 1.0 & 1.0 & 0.8 & 1.0 & 0.8 & 0.6 & 0.6 & 0.8 & 0.6 \\
KeyDiff & 1.0 & 1.0 & 1.0 & 0.6 & 0.4 & 0.2 & 0.0 & 0.0 & 0.2 & 0.2 \\
TOVA    & 1.0 & 1.0 & 1.0 & 0.4 & 0.2 & 0.0 & 0.0 & 0.0 & 0.0 & 0.0 \\
\bottomrule
\end{tabular}
\end{table}

\subsection{RULER Benchmark Validation}
\label{app:ruler}

To assess the generalizability of \ourmethod{} across diverse architectures,
we reference the results reported by the community-maintained
\href{https://huggingface.co/spaces/kvpress/leaderboard}{KVPress RULER benchmark}.
\ourmethod{} consistently achieves competitive or superior leaderboard scores
compared to methods such as TOVA, SnapKV, QFilter, and Knorm
on both Llama-3.2-3B and Qwen-3-8B backbones, demonstrating its robustness and transferability.

\subsection{Needle-in-a-Haystack Recall Saturation}
\label{app:niah}

We further analyze the recall behavior of \ourmethod{} under varying
context lengths in the \textit{Needle-in-a-Haystack} benchmark.
Table~\ref{tab:niah} reports the recall difference between
\ourmethod{} and the full-cache baseline as a function of depth and context length. As shown in \cref{tab:niah}, \ourmethod{} achieves near-parity recall with the full-cache baseline up to 30k tokens, confirming its stability under long-context compression.

\begin{table}[h]
\centering
\caption{Recall difference (\ourmethod{}-Full) on NIAH benchmark for Llama-3.2-3B.}
\label{tab:niah}
\scriptsize
\begin{tabular}{lrrrrrrrrrr}
\toprule
Depth & 1000 & 4222 & 7444 & 10667 & 13889 & 17111 & 20333 & 23556 & 26778 & 30000 \\
\midrule
0.0 & 0.02 & $-$0.06 & $-$0.06 & 0.20 & 0.16 & $-$0.16 & 0.18 & 0.20 & 0.06 & 0.04 \\
11.0 & 0.10 & $-$0.04 & $-$0.04 & 0.30 & 0.28 & $-$0.18 & 0.26 & 0.28 & 0.10 & $-$0.08 \\
22.0 & 0.04 & $-$0.12 & 0.00 & 0.32 & 0.30 & $-$0.18 & 0.16 & 0.28 & 0.14 & $-$0.04 \\
33.0 & 0.02 & 0.06 & $-$0.20 & 0.34 & 0.08 & $-$0.16 & 0.26 & 0.26 & 0.20 & $-$0.16 \\
44.0 & $-$0.06 & $-$0.08 & $-$0.16 & 0.30 & 0.26 & $-$0.12 & 0.24 & 0.22 & $-$0.10 & 0.04 \\
56.0 & $-$0.06 & 0.00 & $-$0.24 & 0.26 & 0.26 & $-$0.20 & 0.18 & 0.14 & 0.04 & $-$0.02 \\
67.0 & 0.02 & 0.02 & $-$0.14 & 0.36 & 0.28 & $-$0.26 & 0.24 & 0.16 & 0.04 & $-$0.08 \\
78.0 & $-$0.10 & 0.08 & $-$0.26 & 0.26 & 0.30 & $-$0.26 & 0.20 & 0.30 & 0.02 & $-$0.04 \\
89.0 & 0.02 & 0.08 & $-$0.06 & 0.30 & 0.26 & $-$0.32 & 0.30 & 0.22 & 0.06 & $-$0.12 \\
100.0 & 0.00 & $-$0.06 & $-$0.26 & 0.00 & $-$0.06 & $-$0.02 & 0.00 & $-$0.04 & 0.14 & $-$0.04 \\
\bottomrule
\end{tabular}
\end{table}

\subsection{On-Device Latency Evaluation}
\label{app:latency_ondevice}

We further evaluate the runtime characteristics of \ourmethod{} on a
mobile-class device by measuring the latency required to compute key-eviction scores on an Android platform. All methods are tested on a recent Samsung smartphone under identical
FP16 inference precision, and the results are normalized by the latency of
\ourmethod{} with a cache budget of 512~KVs.

As shown in \cref{tab:latency_android}, \ourmethod{} performs on par with competing methods for small cache sizes and achieves substantially lower scoring latency as the cache size increases, demonstrating both scalability and minimal overhead on edge hardware. Minor runtime fluctuations can be attributed to kernel-level optimizations
and proprietary hardware characteristics.

\begin{table}[h!]
\centering
\caption{Relative latency of key scoring on an Android device (normalized by \textsc{KeyDiff} at 512~KVs). Lower is better.}
\label{tab:latency_android}
\scriptsize
\begin{tabular}{lccccc}
\toprule
Method & 512 & 1024 & 2048 & 4096 & 8192 \\
\midrule
SnapKV & 1.92 & 2.46 & 4.50 & 8.94 & 11.42 \\
TOVA   & 1.02 & 0.94 & 1.25 & 1.95 & 3.06 \\
KeyDiff & \textbf{1.00} & 1.03 & 1.01 & 1.07 & 1.37 \\
H2O    & 1.10 & 0.99 & 1.52 & 2.32 & 4.23 \\
\bottomrule
\end{tabular}
\end{table}

\clearpage
\section*{NeurIPS Paper Checklist}
\label{sec:checklist}

\begin{enumerate}
\item {\bf{Claims}}
    \begin{itemize}[label={}]
    \vspace{1mm}
    \item Question: Do the main claims made in the abstract and introduction accurately reflect the paper’s contributions and scope?
    \vspace{1mm}
    \item Answer: \answerYes
    \vspace{1mm}
    \item Justification: The claims made in the abstract are corroborated by both our theoretical justification \cref{subsec:keydiff-theory}, experiments \cref{sec:experiment} and auxiliary results and discussion.
    \vspace{1mm}
    \end{itemize}

\item  {\bf{Limitations}}
\begin{itemize}[label={}]
    \vspace{1mm}
    \item Question: Does the paper discuss the limitations of the work performed by the authors?
    \item Answer: \answerYes
    \vspace{1mm}
    \item Justification: In \cref{sec:conclusion} we discuss the fact that \ourmethod{} is only tested on decoder based models featuring GQA and would like to extend it in the future to models with different designs such as MLA.
    \vspace{1mm}
    \end{itemize}
\item {\bf{Theory Assumptions and Proofs}}
\begin{itemize}[label={}]
    \vspace{1mm}
    \item Question: For each theoretical result, does the paper provide the full set of assumptions and a complete (and correct) proof?
    \vspace{1mm}
    \item Answer: \answerYes
    \vspace{1mm}
    \item Justification: Proofs of all results are discussed briefly in their statement in \cref{subsec:keydiff-theory} and rigorously proved in \cref{appendix:theoretical-justification}.
    \vspace{1mm}
    \end{itemize}

\item {\bf{Experimental Result Reproducibility}}
\begin{itemize}[label={}]
    \vspace{1mm}
    \item Question:  If the contribution is a dataset or model, what steps did you take to make your results reproducible or verifiable? Depending on the contribution, reproducibility can be accomplished in various ways. For example, if the contribution is a novel architecture, describing the architecture fully might suffice, or if the contribution is a specific model and empirical evaluation, it may be necessary to either make it possible for others to replicate the model with the same dataset, or provide access to the model. In general. releasing code and data is often one good way to accomplish this, but reproducibility can also be provided via detailed instructions for how to replicate the results, access to a hosted model (e.g., in the case of a large language model), release of a model checkpoint, or other means that are appropriate to your research.
    \vspace{1mm}
    \item Answer: \answerNA
    \vspace{1mm}
    \item Justification: The paper releases neither a dataset or a model, though the cache management algorithm can be easily implemented using the description from the paper.
    \vspace{1mm}
    \end{itemize}
\item {\bf{Open access to data and code}}
\begin{itemize}[label={}]
    \vspace{1mm}
    \item Question: Does the paper provide open access to the data and code, with sufficient instructions to faithfully reproduce the main experimental results, as described in supplemental material?
    \vspace{1mm}
    \item Answer: \answerNo
    \vspace{1mm}
    \item Justification: Code is not provided as it is proprietary. However the algorithm may be readily implemented and results replicated using the description in the paper.
    \vspace{1mm}
    \end{itemize}

\item {\bf{Experimental Setting/Details}}
\begin{itemize}[label={}]
    \vspace{1mm}
    \item Question: If you ran experiments, did you specify all the training details (e.g., data splits, hyperparameters, how they were chosen)? The full details can be provided with the code, but the important details should be in the main paper, and information about how hyperparameters were selected should appear either in the paper or supplementary materials.
    \vspace{1mm}
    \item Answer: \answerYes
    \vspace{1mm}
    \item Justification: All hyperparameters selected for the benchmarks run to evaluate \ourmethod{} are specified in \cref{sec:experiment,appendix:subsec:longbench-setup,appendix:math500}. Hyperparameters to recreate figures are given in figure descriptions.
    \vspace{1mm}
    \end{itemize}

\item {\bf{ Experiment Statistical Significance}}
\begin{itemize}[label={}]
    \vspace{1mm}
    \item Question: Does the paper report error bars suitably and correctly defined or other appropriate information about the statistical significance of the experiments?
    \vspace{1mm}
    \item Answer: \answerYes
    \vspace{1mm}
    \item Justification: The only experiment using stochasticity \cref{fig:ttft_flash} and \cref{fig:ttft_eager} reports error bars and statistical significance. The Math500 results as well discuss the use of sampling and stochasticity in generating responses. Aside from this, the remaining results are deterministic.
    \vspace{1mm}
    \end{itemize}

\item {\bf{Experiments Compute Resources}}
\begin{itemize}[label={}]
    \vspace{1mm}
    \item Question: For each experiment, does the paper provide sufficient information on the computer resources (type of compute workers, memory, time of execution) needed to reproduce the experiments?
    \vspace{1mm}
    \item Answer: \answerYes
    \vspace{1mm}
    \item Justification: In all relevant cases, particularly experiments on TTFT \cref{fig:ttft_eager} and \cref{fig:ttft_flash}, we mention the GPUs and compute resources used.
    \vspace{1mm}
    \end{itemize}

\item {\bf{Code Of Ethics}}
\begin{itemize}[label={}]
    \vspace{1mm}
    \item Question: Does the research conducted in the paper conform, in every respect, with the NeurIPS Code of Ethics https://neurips.cc/public/EthicsGuidelines?
    \vspace{1mm}
    \item Answer: \answerYes
    \vspace{1mm}
    \item Justification: The research proposes a new cache management algorithm based on the geometry of the attention mechanism in decoder-based LLMs. It does not involve human subjects or obviously ethically sensitive applications, and it conforms to the NeurIPS Code of Ethics.
    \vspace{1mm}
    \end{itemize}

\item {\bf{Broader Impacts}}
\begin{itemize}[label={}]
    \vspace{1mm}
    \item Question: Does the paper discuss both potential positive societal impacts and negative societal impacts of the work performed?
    \vspace{1mm}
    \item Answer: \answerNA
    \vspace{1mm}
    \item Justification: This research is foundational in nature and is not tied to particular applications or deployments. As such we do not see any direct path to negative applications of our work.
    \vspace{1mm}
    \end{itemize}

\item {\bf{Safeguards}}
\begin{itemize}[label={}]
    \vspace{1mm}
    \item Question: Does the paper describe safeguards that have been put in place for responsible release of data or models that have a high risk for misuse (e.g., pretrained language models, image generators, or scraped datasets)?
    \vspace{1mm}
    \item Answer: \answerNA
    \vspace{1mm}
    \item Justification: The paper proposes a new cache eviction algorithm. Neither the models used, data or cache eviction algorithm appear to pose a high risk for abuse necessitating release safeguards beyond standard practices.
    \vspace{1mm}
    \end{itemize}

\item {\bf{Licenses for existing assets}}
\begin{itemize}[label={}]
    \vspace{1mm}
    \item Question: Are the creators or original owners of assets (e.g., code, data, models), used in the paper, properly credited and are the license and terms of use explicitly mentioned and properly respected?
    \vspace{1mm}
    \item Answer: \answerYes
    \vspace{1mm}
    \item Justification: Creators for datasets/benchmarks LongBench and Math500 are properly credited with citations. Open source models from the Llama and Qwen model classes are used to test \ourmethod{} are properly cited as are their deepseek distill r1 variants.
    \vspace{1mm}
    \end{itemize}

\item {\bf{Assets}}
\begin{itemize}[label={}]
    \vspace{1mm}
    \item Question: If you are releasing new assets, did you document them and provide these details alongside the assets? Researchers should communicate the details of the dataset or the model as part of their submissions via structured templates. This includes details about training, license, limitations, etc. 
    \vspace{1mm}
    \item Answer: \answerNA
    \vspace{1mm}
    \item Justification: The research does not release new assets.
    \vspace{1mm}
    \end{itemize}

\item {\bf{Crowdsourcing and Research with Human Subjects}}
\begin{itemize}[label={}]
    \vspace{1mm}
    \item Question: If you used crowdsourcing or conducted research with human subjects, did you include the full text of instructions given to participants and screenshots, if applicable, as well as details about compensation (if any)? Including this information in the supplemental material is fine, but if the main contribution of your paper involves human subjects, then we strongly encourage you to include as much detail as possible in the main paper. According to the NeurIPS Code of Ethics, you must pay workers involved in data collection, curation, or other labor at least the minimum wage in your country. 
    \vspace{1mm}
    \item Answer: \answerNA
    \vspace{1mm}
    \item Justification: The research does not involve crowdsourcing experiments or research with human subjects.
    \vspace{1mm}
    \end{itemize}

\item {\bf{IRB Approvals}}
\begin{itemize}[label={}]
    \vspace{1mm}
    \item Question: Does the paper describe potential risks incurred by study participants, whether such risks were disclosed to the subjects, and whether Institutional Review Board (IRB) approvals (or an equivalent approval/review based on the requirements of your country or institution) were obtained?
    \vspace{1mm}
    \item Answer: \answerNA
    \vspace{1mm}
    \item Justification: The research does not involve human subjects, therefore IRB approval is not applicable.
    \vspace{1mm}
    \end{itemize}

\item {\bf{Declaration of LLM usage}}
\begin{itemize}[label={}]
    \vspace{1mm}
    \item Question: Does the paper describe the usage of LLMs if it is an important, original, or non-standard component of the core methods in this research? Note that if the LLM is used only for writing, editing, or formatting purposes and does not impact the core methodology, scientific rigorousness, or originality of the research, declaration is not required.
    \vspace{1mm}
    \item Answer: \answerNA
    \vspace{1mm}
    \item Justification: The research does not include any input from an LLM. In particular the core methodology and research is original to the authors.
    \vspace{1mm}
    \end{itemize}

\end{enumerate}

\end{document}